\newcommand{\figleft}{{\em (Left)}}
\newcommand{\figright}{{\em (Right)}}
\def\eqref#1{equation~\ref{#1}}
\def\1{\bm{1}}
\DeclareMathAlphabet{\mathsfit}{\encodingdefault}{\sfdefault}{m}{sl}
\SetMathAlphabet{\mathsfit}{bold}{\encodingdefault}{\sfdefault}{bx}{n}
\def\gA{{\mathcal{A}}}
\def\gB{{\mathcal{B}}}
\def\gF{{\mathcal{F}}}
\def\gG{{\mathcal{G}}}
\def\gS{{\mathcal{S}}}
\newcommand{\E}{\mathbb{E}}
\DeclareMathOperator*{\argmax}{arg\,max}
\newcommand{\CE}{\mathcal{CE}}
\newcommand{\cs}{\mathbf{s_t}}  %
\newcommand{\ca}{\mathbf{a_t}}  %
\newcommand{\ns}{\mathbf{s_{t+1}}}  %
\newcommand{\na}{\mathbf{a_{t+1}}}  %
\newcommand{\fs}{\mathbf{s_{t+}}}  %
\newcommand{\g}{\mathbf{g}}  %
\newtheorem{remark}{Remark}
\newtheorem{prediction}{Hypothesis}
\newtheorem{definition}{Definition}
\newtheorem{theorem}{Theorem}
\newtheorem{corollary}{Corollary}[theorem]
\newtheorem{lemma}[theorem]{Lemma}
\DeclarePairedDelimiterX{\infdivx}[2]{(}{)}{%
  #1\;\delimsize\|\;#2%
}
\newcommand{\kl}{D_{\mathrm{KL}}\infdivx}
\title{C-Learning: Learning to Achieve Goals via Recursive Classification}
\author{Benjamin Eysenbach \\
CMU, Google Brain \\
\texttt{beysenba@cs.cmu.edu} \\
\And
Ruslan Salakhutdinov \\
CMU \\
\And
Sergey Levine \\
UC Berkeley, Google Brain \\
}
\begin{document}

\maketitle

\begin{abstract}
We study the problem of predicting and controlling the future state distribution of an autonomous agent. This problem, which can be viewed as a reframing of goal-conditioned reinforcement learning (RL), is centered around learning a conditional probability density function over future states. Instead of directly estimating this density function, we indirectly estimate this density function by training a classifier to predict whether an observation comes from the future. Via Bayes' rule, predictions from our classifier can be transformed into predictions over future states. Importantly, an off-policy variant of our algorithm allows us to predict the future state distribution of a new policy, without collecting new experience. This variant allows us to optimize functionals of a policy's future state distribution, such as the density of reaching a particular goal state. While conceptually similar to Q-learning, our work lays a principled foundation for goal-conditioned RL as density estimation, providing justification for goal-conditioned methods used in prior work. This foundation makes hypotheses about Q-learning, including the optimal goal-sampling ratio, which we confirm experimentally. Moreover, our proposed method is competitive with prior goal-conditioned RL methods.\footnote{Project website with videos and code: {\small\url{https://ben-eysenbach.github.io/c_learning/}}}
\end{abstract}

\vspace{-0.5em}
\section{Introduction}
\vspace{-0.5em}

In this paper, we aim to reframe the goal-conditioned reinforcement learning (RL) problem as one of \emph{predicting} and \emph{controlling} the future state of the world.
This reframing is useful not only because it suggests a new algorithm for goal-conditioned RL, but also because it explains a commonly used heuristic in prior methods, and suggests how to automatically choose an important hyperparameter. The problem of predicting the future amounts to learning a probability density function over future states, agnostic of the time that a future state is reached.  The future depends on the actions taken by the policy, so our predictions should depend on the agent's policy. While we could simply witness the future, and fit a density model to the observed states, we will be primarily interested in the following %
prediction question: Given experience collected from one policy, can we predict what states a \emph{different} policy will visit? Once we can predict the future states of a different policy, we can \emph{control} the future by choosing a policy that effects a desired future.

While conceptually similar to Q-learning, our perspective is %
different in that we make no reliance on reward functions. Instead, an agent can solve the prediction problem \emph{before} being given a reward function, similar to models in model-based RL. Reward functions can require human supervision to construct and evaluate, so a fully autonomous agent can learn to solve this prediction problem before being provided any human supervision, and reuse its predictions to solve many different downstream tasks.
Nonetheless, when a reward function is provided, the agent can estimate its expected reward %
under the predicted future state distribution.
This perspective is different from prior approaches. For example, directly fitting a density model to future states only solves the prediction problem in the on-policy setting, precluding us from predicting where a different policy will go. Model-based approaches, which learn an explicit dynamics model, do allow us to predict the future state distribution of different policies, but require a reward function or distance metric to learn goal-reaching policies for controlling the future.
Methods based on temporal difference (TD) learning~\citep{sutton1988learning} have been used to predict the future state distribution~\citep{dayan1993improving, szepesvari2014universal, barreto2017successor} and to learn goal-reaching policies~\citep{kaelbling1993learning,schaul2015universal}.
Section~\ref{sec:prelims} will explain why these approaches do not learn a true Q function in continuous environments with sparse rewards, and it remains unclear what the learned Q function corresponds to.
In contrast, our method will estimate a well defined classifier.

Since it is unclear how to use Q-learning to estimate such a density, we instead adopt a contrastive approach, learning a classifier to distinguish ``future states'' from random states, akin to~\citet{gutmann2010noise}. After learning this binary classifier, we apply Bayes' rule to obtain a probability density function for the future state distribution, thus solving our prediction problem. While this initial approach requires on-policy data, we then develop a bootstrapping variant for estimating the future state distribution for different policies. This bootstrapping procedure is the core of our goal-conditioned RL algorithm.

The main contribution of our paper is a reframing of goal-conditioned RL as estimating the probability density over future states. We derive a method for solving this problem, C-learning, which we use to construct a complete algorithm for goal-conditioned RL. Our reframing lends insight into goal-conditioned Q-learning, leading to a hypothesis for the optimal ratio for sampling goals, which we demonstrate empirically. Experiments demonstrate that C-learning more accurately estimates the density over future states, while remaining competitive with recent goal-conditioned RL methods across a suite of simulated robotic tasks.

\vspace{-0.5em}
\section{Related Work}
\label{sec:related-work}
\vspace{-0.5em}

Common goal-conditioned RL algorithms are based on behavior cloning~\citep{ghosh2019learning, ding2019goal, gupta2019relay, eysenbach2020rewriting, lynch2020learning, oh2018self, sun2019policy}, model-based approaches~\citep{nair2020goal, ebert2018visual}, Q-learning~\citep{kaelbling1993learning, schaul2015universal, pong2018temporal}, and semi-parametric planning~\citep{savinov2018semi, eysenbach2019search, nasiriany2019planning, chaplot2020neural}.
Most prior work on goal-conditioned RL relies on manually-specified reward functions or distance metric, limiting the applicability to high-dimensional tasks.
Our method will be most similar to the Q-learning methods, which are applicable to off-policy data. These Q-learning methods often employ hindsight relabeling~\citep{kaelbling1993learning, andrychowicz2017hindsight}, whereby experience is modified by changing the commanded goal. New goals are often taken to be a future state or a random state, with the precise ratio being a sensitive hyperparameter.
We emphasize that our discussion of goal sampling concerns relabeling previously-collected experience, not on the orthogonal problem of sampling goals for exploration~\citep{pong2018temporal, fang2019curriculum, pitis2020maximum}. %

Our work is closely related to prior methods that use TD-learning to predict the future state distribution, such as successor features~\citep{dayan1993improving, barreto2017successor, barreto2019option, szepesvari2014universal} and generalized value functions~\citep{sutton2005temporal, schaul2015universal, schroecker2020universal}.
Our approach bears a resemblance to these prior TD-learning methods, offering insight into why they work and how hyperparameters such as the goal-sampling ratio should be selected. Our approach differs in that it does not require a reward function or manually designed relabeling strategies, with the corresponding components being derived from first principles.
While prior work on off-policy evaluation~\citep{liu2018breaking, nachum2019dualdice} also aims to predict the future state distribution, our work differs is that we describe how to \emph{control} the future state distribution, leading to goal-conditioned RL algorithm.

Our approach is similar to prior work on noise contrastive estimation~\citep{gutmann2010noise}, mutual-information based representation learning~\citep{oord2018representation, nachum2018near}, and variational inference methods~\citep{bickel2007discriminative, uehara2016generative, dumoulin2016adversarially, huszar2017variational, sonderby2016amortised}.
Like prior work on the probabilistic perspective on RL~\citep{kappen2005path, todorov2008general, theodorou2010generalized, ziebart2010modeling, rawlik2013stochastic, ortega2013thermodynamics, levine2018reinforcement}, we treat control as a density estimation problem, but our main contribution is orthogonal: we propose a method for estimating the future state distribution, which can be used as a subroutine in both standard RL and these probabilistic RL methods.

\vspace{-0.5em}
\section{Preliminaries}
\label{sec:prelims}
\vspace{-0.5em}

We start by introducing notation and prior approaches to goal-conditioned RL. We define a controlled Markov process by an initial state distribution $p_1(\mathbf{s_1})$ and dynamics function $p(\ns \mid \cs, \ca)$. We control this process by a Markovian policy $\pi_\theta(\ca \mid \cs)$ %
with parameters $\theta$. We use $\pi_\theta(\ca \mid \cs, \g)$ to denote a goal-oriented policy, which is additionally conditioned on a goal $\g \in \gS$. We use $\fs$ to denote the random variable representing a future observation, defined by the following distribution:
\begin{definition} \label{def:state-density}
The future $\gamma-$discounted state density function is
\begin{equation*}
    p_+^\pi(\fs \mid \cs, \ca) \triangleq (1 - \gamma) \sum_{\Delta=1}^\infty \gamma^\Delta p_\Delta^\pi(\mathbf{s_{t+\Delta} = \fs} \mid \cs, \ca),
\end{equation*}
where $s_{t+\Delta}$ denotes the state exactly $\Delta$ in the future, and constant $(1 - \gamma)$ ensures that this density function integrates to 1.
\end{definition}
This density reflects the states that an agent would visit if we collected many infinite-length trajectories and weighted states in the near-term future more highly. Equivalently, $p(\fs)$ can be seen as the distribution over terminal states we would obtain if we (hypothetically) terminated episodes at a random time step, sampled from a geometric distribution. We need not introduce a reward function to define the problems of predicting and controlling the future. %

In discrete state spaces, we can convert the problem of estimating the future state distribution into a RL problem by defining a reward function $r_{\fs}(\cs, \ca) = \mathbbm{1}(\cs = \fs)$, and terminating the episode when the agent arrives at the goal. The Q-function, which typically represents the expected discounted sum of future rewards, can then be interpreted as a (scaled) probability \emph{mass} function:
\begin{equation*}
    Q^\pi(\cs, \ca, \fs) = \E_{\pi} \left[\sum_t \gamma^t r_{\fs}(\cs, \ca) \right] = \sum_t \gamma^t \mathbbm{P}_\pi(\cs = \fs) = \frac{1}{1 - \gamma} p_+^\pi(\fs \mid \cs, \ca).
\end{equation*}
However, in continuous state spaces with some stochasticity in the policy or dynamics, the probability that any state \emph{exactly} matches the goal state is zero.
\begin{remark} 
In a stochastic, continuous environment, for any policy $\pi$ the Q-function for the reward function $r_{\fs} = \mathbbm{1}(\cs = \fs)$ is always zero: 
$Q^\pi(\cs, \ca, \fs) = 0$.
\end{remark}
This Q-function is not useful for predicting or controlling the future state distribution. Fundamentally, this problem arises because the relationship between the reward function, the Q function, and the future state distribution in prior work remains unclear. Prior work avoids this issue by manually defining reward functions~\citep{andrychowicz2017hindsight} or distance metrics~\citep{schaul2015universal,pong2018temporal,zhao2019maximum,schroecker2020universal}. An alternative is to use hindsight relabeling, changing the commanded goal to be the goal actually reached. This form of hindsight relabeling does not require a reward function, and indeed learns Q-functions that are not zero~\citep{lin2019reinforcement}. However, taken literally, %
Q-functions learned in this way must be incorrect: they do not reflect the expected discounted reward. An alternative hypothesis is that these Q-functions reflect probability \emph{density} functions over future states. However, this also cannot~be~true:
\begin{remark}
For any MDP with the sparse reward function $\mathbbm{1}(\cs = \fs)$ where the episode terminates upon reaching the goal, Q-learning with hindsight relabeling acquires a Q-function in the range $Q^\pi(\cs, \ca, \fs) \in [0, 1]$, but the probability density function $p_+^\pi(\fs \mid \cs, \ca)$ has a range $[0, \infty)$.
\end{remark}
For example, if the state space is $\gS = [0, \frac{1}{2}]$, then there must exist some state $s_{t+}$ such that \mbox{$Q^\pi(s_t, a_t, s_{t+1}) \le 1 < p_+^\pi(\fs = s_{t+} \mid s_t, a_t)$}. See Appendix~\ref{appendix:analytic-q} for two worked examples. Thus, Q-learning with hindsight relabeling also fails to learn the future state distribution. %
In fact, it is unclear what quantity Q-learning with hindsight relabeling optimizes. In the rest of this paper, we will define goal reaching in continuous state spaces in a way that is consistent and admits well-defined solutions (Sec.~\ref{sec:density-estimation}), and then present a practical algorithm for finding these solutions (Sec.~\ref{sec:method}).

\vspace{-0.5em}
\section{Framing Goal Conditioned RL as Density Estimation}
\label{sec:density-estimation}
\vspace{-0.5em}

This section presents a novel framing of the goal-conditioned RL problem, which resolves the ambiguity discussed in the previous section. Our main idea is to view goal-conditioned RL as a problem of estimating the density $p_+^\pi(\fs \mid \cs, \ca)$ over future states that a policy $\pi$ will visit, a problem that Q-learning does not solve (see Section~\ref{sec:prelims}). Section~\ref{sec:method} will then explain how to use this estimated distribution as the core of a complete goal-conditioned RL algorithm.

\begin{definition}%
\label{def:problem}
Given policy $\pi$, the \textbf{future state density estimation} problem is to estimate the $\gamma-$discounted state distribution of $\pi$: %
$f_\theta^\pi(\fs \mid \cs, \ca) \approx p_+^\pi(\fs \mid \cs, \ca)$.
\end{definition}
The next section will show how to estimate $f_\theta^\pi$. Once we have found $f_\theta^\pi$, we can determine the probability that a future state belongs to a set $\gS_{t+}$ by integrating over that set: $\mathbbm{P}(\fs \in \gS_{t+}) = \int f_\theta^\pi(\fs \mid \cs, \ca) \mathbbm{1}(\fs \in \gS_{t+}) d \fs.$
Appendix~\ref{sec:pomdp} discusses a similar relationship with partially observed goals.
There is a close connection between this integral and a goal-conditioned Q-function:
\begin{remark}
For a goal $g$, define a reward function as an $\epsilon$-ball around the true goal: $r_g(\cs, \ca) = \mathbbm{1}(\fs \in \gB(g; \epsilon))$. Then the true Q-function is a scaled version of the probability density, integrated over the set $\gS_{t+} = \gB(g; \epsilon)$:
    $Q^\pi(\cs, \ca, g) = \E_\pi \left[\sum_t \gamma^t r_g(s, a) \right] = (1 - \gamma) \mathbbm{P}(\fs \in \gB(g; \epsilon))$.
\end{remark}

\newlength{\textfloatsepsave} \setlength{\textfloatsepsave}{\textfloatsep} \setlength{\textfloatsep}{0pt}

\vspace{-0.5em}
\section{C-Learning}
\label{sec:method}
\vspace{-0.5em}

We now derive an algorithm (C-learning) for solving the future state density estimation problem (Def.~\ref{def:problem}). First (Sec.~\ref{sec:classifier}), we assume that the policy is fixed, and present on-policy and off-policy solutions. %
Based on these ideas, Section~\ref{sec:complete} builds a complete goal-conditioned RL algorithm for learning an optimal goal-reaching policy. Our algorithm bears a resemblance to Q-learning, and our derivation makes two hypotheses about when and where Q-learning will work best (Sec.~\ref{sec:her}).

\vspace{-0.3em}
\subsection{Learning the classifier}
\label{sec:classifier}
\vspace{-0.3em}

\begin{wrapfigure}[15]{R}{0.44\textwidth}
\vspace{-2.2em}
\hspace{-0.5em}
\algrenewcommand\algorithmicindent{1.0em}%
\begin{minipage}{0.46\textwidth}
  \begin{algorithm}[H]
    \caption{\textbf{Monte Carlo C-learning}}\label{alg:mc}
    \begin{algorithmic}
        \State \textbf{Input} trajectories $\{\tau_i\}$
        \State Define $p(s, a) \gets \text{Unif}(\{s, a\}_{(s, a) \sim \tau})$, \\\hspace{2.8em} $p(s_{t+}) \gets \text{Unif}(\{s_t\}_{s_t \sim \tau, t > 1})$
        \While{not converged}
            \State Sample $s_t, a_t \sim p(s, a), s_{t+}^{(0)} \sim p(s_{t+})$,\\ \hspace{5em}$\Delta \sim \textsc{Geom}(1 - \gamma)$.
            \State Set goal $s_{t+}^{(1)} \gets s_{t + \Delta}$
            \State $\gF(\theta) \gets \log C_\theta^\pi(F = 1 \mid s_t, a_t, s_{t+}^{(1)})$\\ \hspace{4.8em} $+ \log C_\theta^\pi(F = 0 \mid s_t, a_t, s_{t+}^{(0)})$
            \State $\theta \gets \theta - \eta \nabla_\theta \gF(\theta)$
        \EndWhile
        \State \textbf{Return} classifier $C_\theta$
    \end{algorithmic}
  \end{algorithm}
\end{minipage}
\end{wrapfigure}
Rather than estimating the future state density directly, we will estimate it indirectly by learning a classifier. Not only is classification generally an easier problem than density estimation, but also it will allow us to develop an off-policy algorithm in the next section.
We will call our approach \emph{C-learning}. We start by deriving an on-policy Monte Carlo algorithm (\emph{Monte Carlo C-learning}), %
and then modify it to obtain an off-policy, bootstrapping algorithm (\emph{off-policy C-learning}).
After learning this classifier, we can apply Bayes' rule to convert its binary predictions into future state density estimates.
Given a distribution over state action pairs, $p(\cs, \ca)$, we define the marginal future state distribution $p(\fs) = \int p_+^\pi(\fs \mid \cs, \ca) p(\cs, \ca) d\cs d\ca$.
The %
classifier takes as input a state-action pair $(\cs, \ca)$ together with another state $\fs$, and predicts whether %
$\fs$ was sampled from the future state density \mbox{$p_+^\pi(\fs \mid \cs, \ca)$} ($F = 1$) or the marginal state density $p(\fs)$ ($F = 0$). The Bayes optimal classifier is
\begin{equation}
    p(F = 1 \mid \cs, \ca, \fs) = \frac{p_+^\pi(\fs \mid \cs, \ca)}{p_+^\pi(\fs \mid \cs, \ca) + p(\fs)}. \label{eq:bayes-opt}
\end{equation}
Thus, using $C_\theta^\pi(F = 1 \mid \cs, \ca, \fs)$ to denote our learned classifier, we can obtain an estimate \mbox{$f_\theta^\pi(\fs \mid \cs, \ca)$} for the future state density function using our classifier's predictions as follows:
\begin{equation}
    f_\theta^\pi(\fs \mid \cs, \ca) = \frac{C_\theta^\pi(F = 1 \mid \cs, \ca, \fs)}{C_\theta^\pi(F = 0 \mid \cs, \ca, \fs)} p(\fs). \label{eq:prob-ratio}
\end{equation}
While our estimated density $f_\theta$ depends on the marginal density $p(\fs)$, our goal-conditioned RL algorithm (Sec.~\ref{sec:complete}) will note require estimating this marginal density. In particular, we will learn a policy that chooses the action $\ca$ that maximizes this density, but the solution to this maximization problem does not depend on the marginal $p(\fs)$.

We now present an on-policy approach for learning the classifier, which we call \emph{Monte Carlo C-Learning}. After sampling a state-action pair $(\cs, \ca) \sim p(\cs, \ca)$, we can either sample a future state $\mathbf{s_{t+}^{(1)}} \sim p_+^\pi(\fs \mid s_t, \ca)$ with a label $F = 1$, or sample $\mathbf{s_{t+}^{(0)}} \sim p(\fs)$ with a label $F = 0$. We then train the classifier maximize log likelihood (i.e., the negative cross entropy loss):
\begin{equation}
    \gF(\theta) \triangleq \E_{\substack{\cs, \ca \sim p(\cs, \ca) \\ \mathbf{s_{t+}^{(1)}} \sim {\color{blue}p_+^\pi(\fs \mid \cs, \ca)}}} [\log C_\theta^\pi(F =1 \mid \cs, \ca, \mathbf{s_{t+}^{(1)}})] + \E_{\substack{\cs, \ca \sim p(\cs, \ca) \\ \mathbf{s_{t+}^{(0)}} \sim {\color{blue}p(\fs)}}} [\log C_\theta^\pi(F = 0 \mid \cs, \ca, \mathbf{s_{t+}^{(0)}})]. \label{eq:objective}
\end{equation}
To sample future states, we note that the density $p_+^\pi(\fs \mid \cs, \ca)$ is a weighted mixture of distributions $p(\mathbf{s_{t+\Delta}} \mid \cs, \ca)$ indicating the future state exactly $\Delta$ steps in the future:
\begin{equation*}
    p_+^\pi(\fs \mid \cs, \ca) = \sum_{\Delta = 0}^\infty p(s_{t + \Delta} \mid \cs, \ca) p(\Delta) \qquad \text{where} \quad p(\Delta) = (1 - \gamma) \gamma^{\Delta} = \textsc{Geom}(\Delta; 1 - \gamma),
\end{equation*}
where $\textsc{Geom}$ is the geometric distribution. Thus, we sample a future state $\fs$ via ancestral sampling: first sample $\Delta \sim \textsc{Geom}(1 - \gamma)$ and then, looking at the trajectory containing  $(\cs, \ca)$, return the state that is $\Delta$ steps ahead of $(\cs, \ca)$. We summarize Monte Carlo C-learning in Alg.~\ref{alg:mc}.

While conceptually simple, this algorithm requires on-policy data, as the distribution $p_+^\pi(\fs \mid \cs, \ca)$ depends on the current policy $\pi$ \emph{and the commanded goal.} Even if we fixed the policy parameters, we cannot use experience collected when commanding one goal to learn a classifier for another goal. This limitation precludes an important benefit of goal-conditioned learning: the ability to readily share experience across tasks. To lift this limitation, the next section will develop a bootstrapped version of this algorithm that works with off-policy data.

We now extend the Monte Carlo algorithm introduced above to work in the off-policy setting, so that we can estimate the future state density for different policies.
In the off-policy setting, we are given a dataset of transitions $(\cs, \ca, \ns)$ and a \emph{new} policy $\pi$, which we will use to generate actions for the next time step, $\na \sim \pi(\na \mid \ns)$. The main challenge is sampling from $p_+^\pi(\fs \mid \cs, \ca)$, which depends on the new policy $\pi$. We address this challenge in two steps. First, we note a recursive relationship between the future state density at the current time step and the next time step:
\begin{align}
    \underbrace{p_+^\pi(\fs = s_{t+} \mid \cs, \ca)}_\text{future state density at current time step}
     \hspace{-0.8em} = (1 - \gamma) \underbrace{p(\ns = s_{t+} \mid \cs, \ca)}_\text{environment dynamics} + \gamma \E_{\substack{p(\ns \mid \cs, \ca),\\ \pi(\na \mid \ns)}} \bigg[ \underbrace{p_+^\pi(\fs=s_{t+} \mid \ns, \na)}_\text{future state density at next time step} \bigg]. \label{eq:bootstrap}
\end{align}
We can now rewrite our classification objective in Eq.~\ref{eq:objective} as
\begin{align}
\gF(\theta, \pi) = \E_{\hspace{-0.6em}\substack{p(\cs, \ca), \; p(\ns \mid \cs, \ca), \\ {\color{blue} \pi(\na \mid \ns), \; p_+^\pi(\fs \mid \ns, \na)}}} \hspace{-1em}[ &(1 - \gamma) \log C_\theta^\pi(F = 1 \!\mid\! \cs, \ca, \ns) + \gamma \log C_\theta^\pi(F = 1 \!\mid\! \cs, \ca, \fs) ] \nonumber \\
    & + \E_{\substack{p(\cs, \ca), \; p(\fs)}} \left[\log C_\theta^\pi(F = 0 \mid \cs, \ca, \fs) \right]. \label{eq:objective-2}
\end{align}
This equation is different from the Monte Carlo objective (Eq.~\ref{eq:objective}) because it depends on the new policy, but it still requires sampling from $p_+^\pi(\fs \mid \ns, \na)$, which also depends on the new policy. Our second step is to observe that we can estimate expectations that use $p_+^\pi(\fs \mid \cs, \ca)$ by sampling from the marginal $\fs \sim p(\fs)$ and then weighting those samples by an importance weight, which we can estimate using our learned classifier:
\begin{equation}
    w(\ns, \na, \fs) \triangleq \frac{p_+^\pi(\fs \mid \ns, \na)}{p(\fs)} = \frac{C_\theta^\pi(F = 1 \mid \ns, \na, \fs)}{C_\theta^\pi(F = 0 \mid \ns, \na, \fs)}. \label{eq:iw}
\end{equation}
The second equality is obtained by taking Eq.~\ref{eq:prob-ratio} and dividing both sides by $p(\fs)$. In effect, these weights account for the effect of the new policy on the future state density.
We can now rewrite our objective by substituting the identity in Eq.~\ref{eq:iw} for the {\color{blue}$p(\fs)$} term in the expectation in Eq.~\ref{eq:objective-2}. The written objective is $\gF(\theta, \pi) = $
{\footnotesize 
\begin{align}
    \E_{\substack{p(\cs, \ca), \; p(\ns \mid \cs, \ca), \\ p(\fs), \; \pi(\na \mid \ns)}} [&  (1 - \gamma) \log C_\theta^\pi(F = 1 \!\mid\! \cs, \ca, \ns) + \gamma \left\lfloor w(\ns, \na, \fs) \right\rfloor_{\text{sg}} \log C_\theta^\pi(F = 1 \mid \cs, \ca, \fs) \nonumber \\
    & + \log C_\theta^\pi(F = 0 \!\mid\! \cs, \ca, \fs) ]    . \label{eq:objective-3}
\end{align}
}
We use $\lfloor \cdot \rfloor_{\text{sg}}$ as a reminder that the gradient of an importance-weighted objective should not depend on the gradients of the importance weights.
Intuitively, this loss says that next states should be labeled as positive examples, states sampled from the marginal should be labeled as negative examples, but \emph{reweighted} states sampled from the marginal are positive examples.

\paragraph{Algorithm summary.} Alg~\ref{alg:bootstrap} reviews off policy C-learning,
which takes as input a policy and a dataset of \emph{transitions}. At each iteration, we sample a $(\cs, \ca, \ns)$ transition from the dataset, a potential future state $\fs \sim p(\fs)$ and the next action $\na \sim \pi(\na \mid \ns, \fs)$. We compute the importance weight using the current estimate from the classifier, and then plug the importance weight into the loss from Eq.~\ref{eq:objective}. We then update the classifier using the gradient of this objective.

\paragraph{C-learning Bellman Equations.} In Appendix~\ref{appendix:analysis}, we provide a convergence proof for off-policy C-learning in the tabular setting.
Our proof hinges on the fact that the TD C-learning update rule has the same effect as applying the following (unknown) Bellman operator:
{\footnotesize\begin{equation*}
    \frac{C_\theta^\pi(F = 1 \mid \cs, \ca, \fs)}{C_\theta^\pi(F = 0 \mid \cs, \ca, \fs)} = (1 - \gamma) \frac{p(\ns = \fs \mid \cs, \ca)}{p(\fs)} + \gamma \E_{\substack{p(\ns \mid \cs, \ca),\\ \pi(\na \mid \cs)}} \left[ \frac{C_\theta^\pi(F = 1 \mid \ns, \na, \fs)}{C_\theta^\pi(F = 0 \mid \ns, \na, \fs)} \right]
\end{equation*}}
This equation tells us that C-learning is equivalent to maximizing the reward function $r_\fs(\cs, \ca) = p(\ns = \fs \mid \cs, \ca) / p(\fs)$, but does so without having to estimate either the dynamics $p(\ns \mid \cs, \ca)$ or the marginal distribution $p(\cs)$.

\begin{figure}[t]
\vspace{-4em}
\begin{minipage}[t]{0.5\textwidth}
    \begin{algorithm}[H]
        \centering
        \caption{\textbf{Off-Policy C-learning}%
        \label{alg:bootstrap}}
        \begin{algorithmic}
            \State \textbf{Input} transitions $\{s_t, a, s_{t+1}\}$, policy $\pi_\phi$
            \While{not converged}
                \State Sample $(s_t, a_t, s_{t+1}) \sim p(s_t, a_t, s_{t+1})$,\\
                \hspace{1em} $s_{t+} \sim p(s_{t+}), a_{t+1} \sim \pi_\phi(a_{t+1} \mid s_t, a_t)$
                \State $w \gets \texttt{stop\_grad} \left(\frac{C_\theta^\pi(F=1 \mid s_{t+1}, a_{t+1}, s_{t+})}{C_\theta^\pi(F = 0 \mid s_{t+1}, a_{t+1}, s_{t+})} \right)$
                \State {\footnotesize  $\gF(\theta, \pi) \gets (1 - \gamma) \log C_\theta^\pi(F \!=\! 1 | s_t, a_t, s_{t+1})$\\
                \hspace{5em} $ + \log C_\theta^\pi(F \!=\! 0 | s_t, a_t, s_{t+})$ \\
                \hspace{5em} $ + \gamma w \log C_\theta^\pi(F \!=\! 1 | s_{t}, a_{t}, s_{t+})$}
                \State $\theta \gets \theta - \eta \nabla_\theta \gF(\theta, \pi)$
            \EndWhile
            \State \textbf{Return} classifier $C_\theta^\pi$
        \end{algorithmic}
    \end{algorithm}
\end{minipage}
\hspace{0.2em}
\begin{minipage}[t]{0.5\textwidth}
    \begin{algorithm}[H]
     \caption{\textbf{Goal-Conditioned C-learning}} %
     \label{alg:rl}
    \begin{algorithmic}
        \State \textbf{Input} transitions $\{s_t, a, s_{t+1}\}$
        \While{not converged}
            \State Sample $(s_t, a_t, s_{t+1}) \sim p(s_t, a_t, s_{t+1})$,\\ \hspace{1em}$s_{t+} \sim p(s_{t+}), a_{t+1} \sim \pi(a_{t+1} \mid s_t, a_t, {\color{blue}s_{t+}})$
            \State $w \gets \texttt{stop\_grad} \left(\frac{C_\theta^\pi(F=1 \mid s_{t+1}, a_{t+1}, s_{t+})}{C_\theta^\pi(F = 0 \mid s_{t+1}, a_{t+1}, s_{t+})} \right)$
            \State {\footnotesize $\gF(\theta, \pi) \!\gets\! (1 - \gamma) \log C_\theta^\pi(F \!=\! 1 | s_t, a_t, s_{t+1})$\\ 
                \hspace{6em} $+ \log C_\theta^\pi(F \!=\! 0 | s_t, a_t, s_{t+})$ \\
                \hspace{6em} $+ \gamma w \log C_\theta^\pi(F \!=\! 1 | s_{t}, a_{t}, s_{t+})$}
            \State {\color{blue} $\theta \gets \theta - \eta \nabla_\theta \gF(\theta, \pi)$
            \State {\scriptsize $\gG(\phi) \!\gets\! \E_{\pi_\phi(a_t \mid s_t, g=s_{t+})}[\log C_\theta^\pi(F \!=\! 1 | s_t, a_t, s_{t+})]$}
            \State $\phi \gets \phi + \eta \nabla_\phi \gG(\phi)$ }
        \EndWhile
        \State \textbf{Return} policy $\pi_\phi$
    \end{algorithmic}
    \end{algorithm}
\end{minipage}
\end{figure}

\vspace{-0.3em}
\subsection{Goal-Conditioned RL via C-Learning}
\label{sec:complete}
\vspace{-0.3em}

We now build a complete algorithm for goal-conditioned RL based on C-learning. When learning a goal-conditioned policy, commanding different goals will cause the policy to visit different future states. In this section, we describe how to learn a classifier that predicts the future states of a goal-conditioned policy, and how to optimize the corresponding policy to get better at reaching the commanded goal.

To acquire a classifier for a goal-conditioned policy, we need to apply our objective function (Eq.~\ref{eq:objective-3}) to all policies $\{\pi_\phi(a \mid s, g) \mid g \in \gS\}$. We therefore condition the classifier and the policy on the \emph{commanded} goal $g \in \gS$.
For learning a goal-reaching policy, we will only need to query the classifier on inputs where $\fs = g$. Thus, we only need to learn a classifier conditioned on inputs where $\fs = g$, resulting in the following objective:
\begin{align}
    \E_{\substack{p(\cs, \ca), \; p(\ns \mid \cs, \ca), \\ p(\fs), \; \pi(\na \mid \ns, {\color{blue} \mathbf{g=\fs)}}}} [ & \underbrace{(1 - \gamma) \log C_\theta^\pi(F = 1 \mid \cs, \ca, \ns)}_{(a)} + \underbrace{\log C_\theta^\pi(F = 0 \mid \cs, \ca, \fs)}_{(b)} \nonumber \\
    & + \underbrace{\gamma \left\lfloor w(\ns, \na, \fs) \right\rfloor_{\text{sg}} \log C_\theta^\pi(F = 1 \mid \cs, \ca, \fs)}_{(c)} ]. \label{eq:objective4}
\end{align}
The {\color{blue}\textbf{difference}} between this objective and the one derived in Section~\ref{sec:classifier} (Eq.~\ref{eq:objective-3}) is that the next action is sampled from a goal-conditioned policy. The density function obtained from this classifier (Eq.~\ref{eq:prob-ratio}) represents the future state density of $\fs$, given that the policy was commanded to reach goal $g = s_{t+}$: $f_\theta^\pi(\fs = s_{t+} \mid \cs, \ca) = p_+^\pi(\fs = s_{t+} \mid \cs, \ca, g = s_{t+})$.

Now that we can estimate the future state density of a goal-conditioned policy, our second step is to optimize the policy w.r.t. this learned density function. We do this by maximizing the policy's probability of reaching the commanded goal: $\E_{\pi_\phi(\ca \mid \cs, g)}\left[\log p^\pi(F = 1 \mid \cs, \ca, \fs = g) \right]$.
Since $p_+^\pi(\fs \mid \cs, \ca, g=\fs)$ is a monotone increasing function of the classifier predictions (see Eq.~\ref{eq:prob-ratio}), we can write the policy objective in terms of the classifier predictions:
\begin{equation}
    \gG(\phi) = \max_\phi \E_{\pi_\phi(\ca \mid \cs, g)}\left[\log C_\theta^\pi(F = 1 \mid \cs, \ca, \fs = g) \right]. \label{eq:policy-update}
\end{equation}
If we collect new experience during training, then the marginal distribution $p(\fs)$ will change throughout training. While this makes the learning problem for the classifier non-stationary, the learning problem for the policy (whose solution is independent of $p(\fs)$) remains stationary. In the tabular setting, goal-conditioned C-learning converges to the optimal policy (proof in Appendix~\ref{appendix:pi-proof}).

\clearpage
\textbf{Algorithm Summary:} We summarize our approach, which we call \emph{goal-conditioned C-learning}, in  Alg.~\ref{alg:rl}. Instead of learning a Q function, this method learns a future state classifier. The classifier is trained using three types of examples: (a) the classifier is trained to predict $y = 1$ when the goal is the next state; (b) $y = 0$ when the goal is a random state; and (c) $y = w$ when the goal is a random state, where $w$ depends on the classifier's prediction at the next state (see Eq.~\ref{eq:iw}). Note that (b) and (c) assign different labels to the same goal and can be combined (see Eq.~\ref{eq:classifier-loss} in Sec.~\ref{sec:her}).
This algorithm is simple to implement by taking a standard actor-critic RL algorithm and changing the loss function for the critic (a few lines of code). We provide a Tensorflow implementation of the classifier loss function in Appendix~\ref{appendix:pseudocode}, and a full implementation of C-learning is online.\footnote{{\scriptsize \url{https://github.com/google-research/google-research/tree/master/c_learning}}}

\vspace{-0.3em}
\subsection{Implications for Q-learning and Hindsight Relabeling}
\label{sec:her}
\vspace{-0.3em}

Off-policy C-learning (Alg.~\ref{alg:bootstrap}) bears a resemblance to Q-learning with hindsight relabeling, so we now compare these two algorithms to make hypotheses about Q-learning, which we will test in Section~\ref{sec:experiments}. We start by writing the objective for both methods using the cross-entropy loss, $\CE(\cdot, \cdot)$:
\begin{align}
    F_\text{C-learning} (\theta, \pi) = \; & (1 - \gamma) \CE(C_\theta^\pi(F \mid \cs, \ca, \ns), y=1) \nonumber \\
    &  \hspace{-2em} + (1 + \gamma w) \CE \bigg(C_\theta^\pi(F \mid \cs, \ca, \fs), y=\frac{\gamma w}{\gamma w + 1} = \frac{\gamma C_\theta^{\pi'}}{\gamma C_\theta^{\pi'} + (1 - C_\theta^{\pi'})} \bigg), \label{eq:classifier-loss} \\
    F_\text{Q-learning}(\theta, \pi) = \; & (1 - \lambda) \CE(Q_\theta^\pi(\cs, \ca, g=\ns), y=1) \nonumber \\
    &  + \lambda \CE \bigg(Q_\theta^\pi(\cs, \ca, g=\fs), y=\gamma Q_\theta^\pi(\ns, \na, \fs) \bigg), \label{eq:q-learning-loss}
\end{align}
where $C_\theta' = C_\theta^\pi(F = 1 \mid \ns, \na, \fs)$ is the classifier prediction at the next state and where $\lambda \in [0, 1]$ denotes the \emph{relabeling ratio} used in Q-learning, corresponding to the fraction of goals sampled from $p(\fs)$. There are two differences between these equations, which lead us to make two hypotheses about the performance of Q-learning, which we will test in Section~\ref{sec:experiments}.
The first difference is how the \emph{predicted} targets are scaled for random goals, with Q-learning scaling the prediction by $\gamma$ while C-learning scales the prediction by $\gamma / (\gamma C_\theta' + (1 - C_\theta'))$. Since Q-learning uses a smaller scale, we make the following hypothesis:
\begin{prediction} \label{pred:underestimate}
Q-learning will predict \textbf{smaller} future state densities and therefore \textbf{underestimate} the true future state density function.
\end{prediction}
This hypothesis is interesting because it predicts that prior methods based on Q-learning will not learn a proper density function, and therefore fail to solve the future state density estimation problem.
The second difference between C-learning and Q-learning is that Q-learning contains a tunable parameter $\lambda$, which controls the ratio with which next-states and random states are used as goals. This ratio is equivalent to a weight on the two loss terms, and our experiments will show that Q-learning with hindsight relabeling is sensitive to this parameter. In contrast, C-learning does not require specification of this hyperparameter. Matching the coefficients in the Q-learning loss (Eq.~\ref{eq:q-learning-loss}) with those in our loss (Eq.~\ref{eq:classifier-loss}) (i.e., $[1 - \lambda, \lambda] \propto [1 - \gamma, 1 + \gamma w]$), we make the following hypothesis:
\begin{prediction} \label{pred:ratio}
Q-learning with hindsight relabeling will most accurately solve the future state density estimation problem (Def.~\ref{def:problem}) when random future states are sampled with probability $\lambda = \frac{1 + \gamma}{2}$.
\end{prediction}
Prior work has found that this goal sampling ratio is a sensitive hyperparameter~\citep{andrychowicz2017hindsight, pong2018temporal, zhao2019maximum}; this hypothesis is useful because it offers an automatic way to choose the hyperparameter.
The next section will experimentally test these hypotheses.

\begin{figure}[t]
    \vspace{-3em}
    \begin{subfigure}[b]{0.45\textwidth}
       \includegraphics[width=\linewidth]{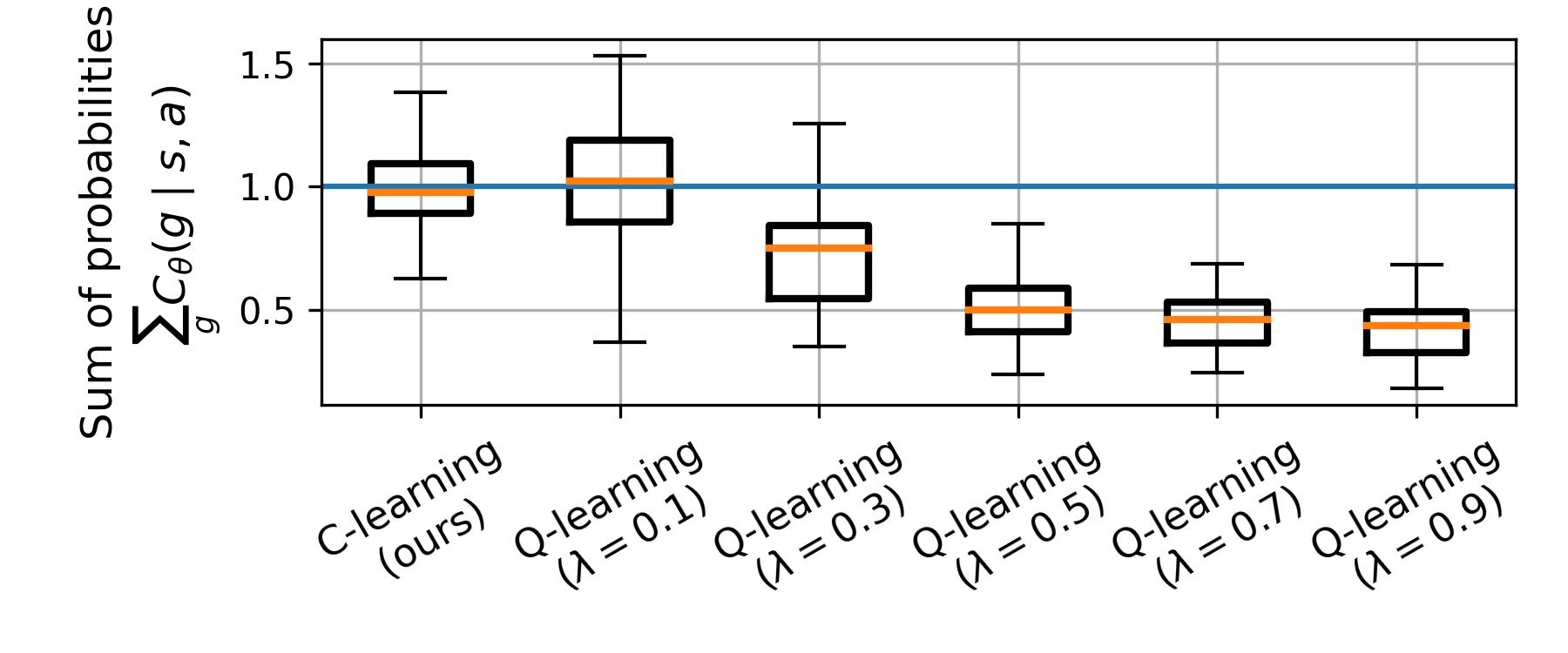}
       \vspace{-1.5em}
        \caption{Hypothesis~\ref{pred:underestimate}: Underestimating the density\label{fig:normalization}}
    \end{subfigure} 
    \begin{subfigure}[b]{0.57\textwidth}
        \includegraphics[height=7.8em]{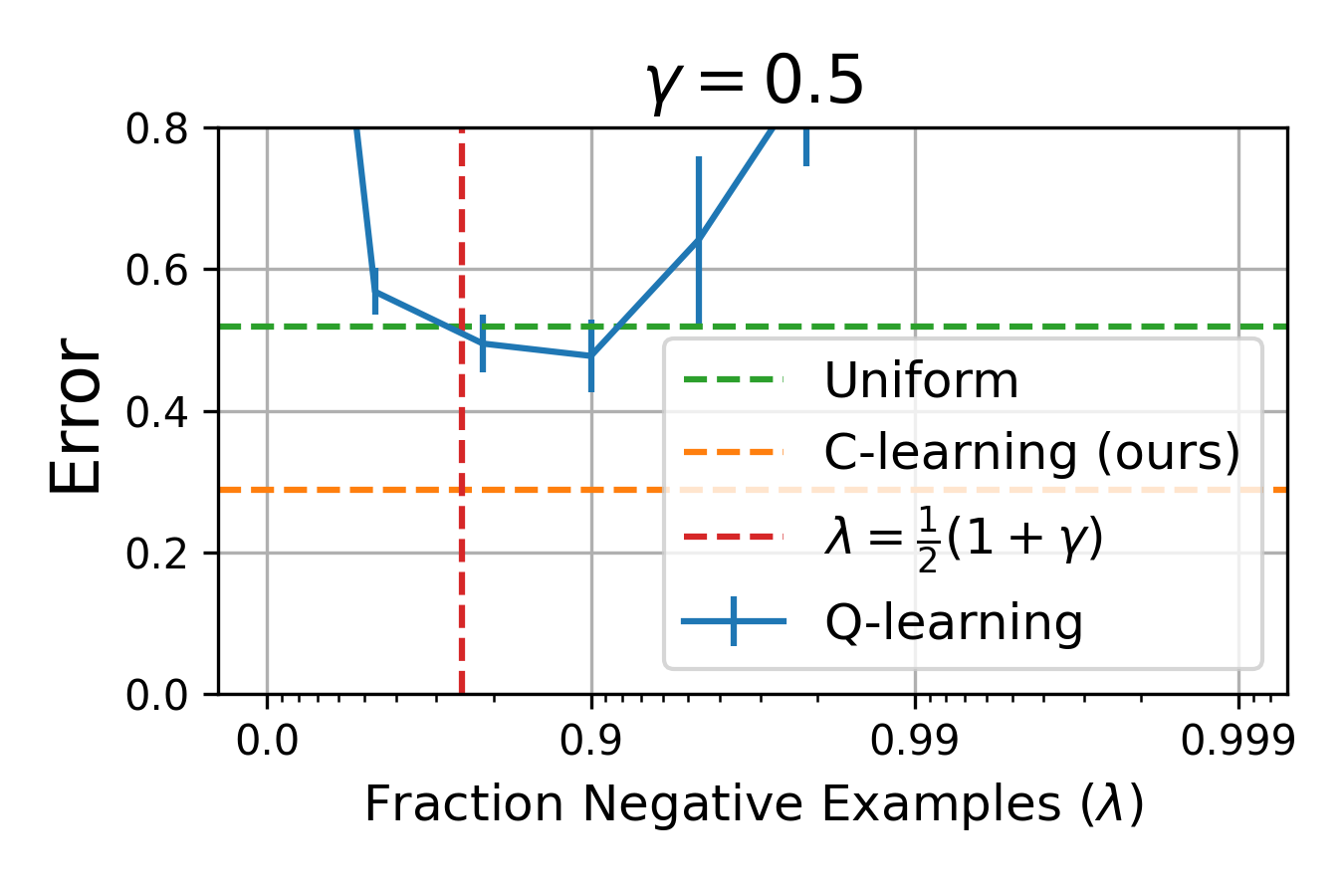}
        \includegraphics[height=7.8em]{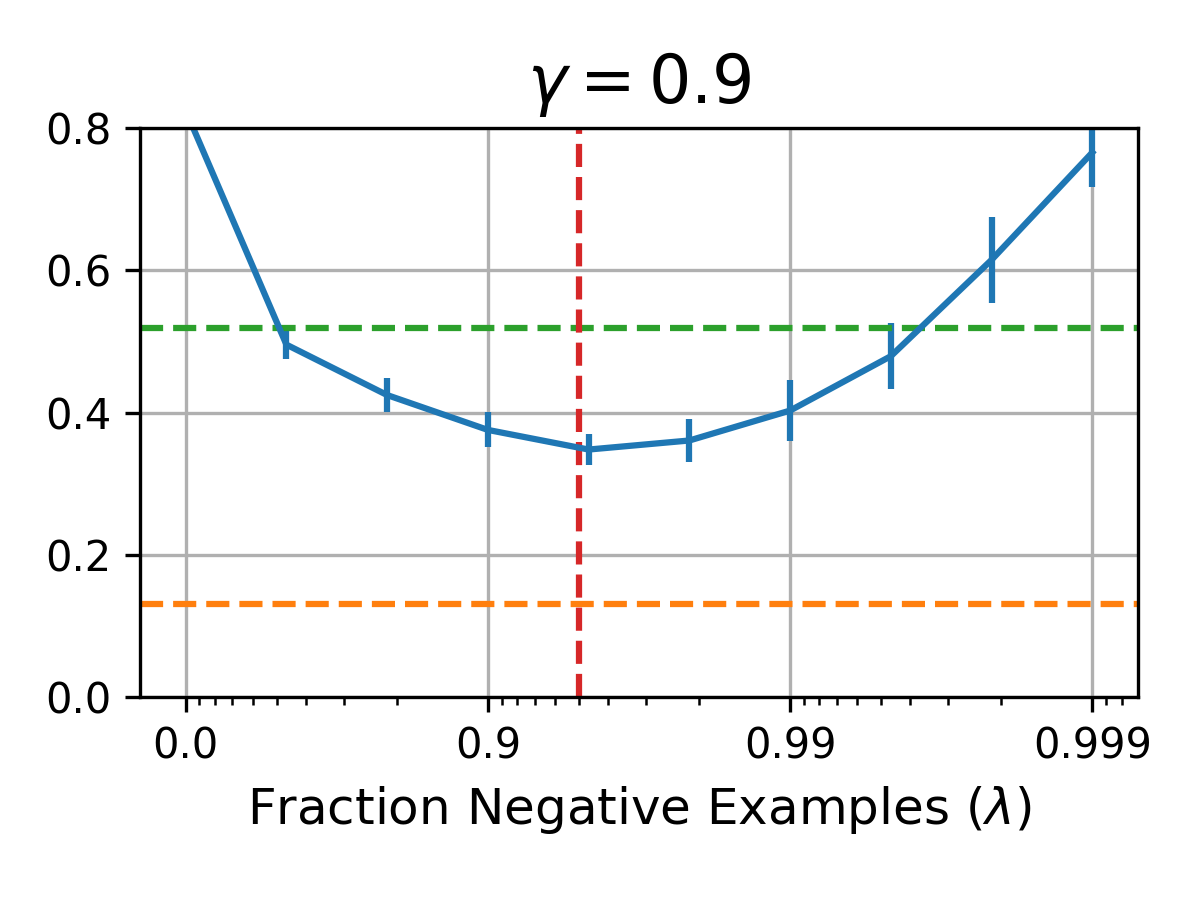}
       \vspace{-0.5em}
        \caption{Hypothesis~\ref{pred:ratio}: Optimal goal sampling ratio\label{fig:prior}}
    \end{subfigure}
       \vspace{-1.5em}
    \caption{{\footnotesize \textbf{Testing Hypotheses about Q-learning}:
    \figleft \; As predicted, Q-values often sum to less than 1. %
    \figright \; The performance of Q-learning is sensitive to the relabeling ratio. Our analysis predicts that the optimal relabeling ratio is approximately $\lambda = \frac{1}{2}(1 + \gamma)$. C-learning (dashed orange) does not require tuning this ratio and outperforms Q-learning, even when the relabeling ratio for Q-learning is optimally chosen.
    }}
    \vspace{-2em}
\end{figure}

\setlength{\textfloatsep}{\textfloatsepsave}

\vspace{-0.5em}
\section{Experiments}
\label{sec:experiments}
\vspace{-0.5em}

We aim our experiments at answering the following questions:
\begin{enumerate}[leftmargin=*, itemsep=0pt, topsep=0pt]
    \item Do Q-learning and C-learning accurately estimate the future state density (Problem~\ref{def:problem})?
    \item (Hypothesis~\ref{pred:underestimate}) Does Q-learning underestimate the future state density function (\S~\ref{sec:her})?
    \item (Hypothesis~\ref{pred:ratio}) Is the predicted relabeling ratio $\lambda = (1 + \gamma) / 2$ optimal for Q-learning (\S~\ref{sec:her})?
    \item How does C-learning compare with prior goal-conditioned RL methods on benchmark tasks?
\end{enumerate}

\textbf{Do Q-learning and C-learning accurately predict the future?}
Our first experiment studies how well Q-learning and C-learning solve the future state density estimation problem (Def.~\ref{def:problem}).
We use a continuous version of a gridworld for this task and measure how close the predicted future state density is to the true future state density using a KL divergence. Since this environment is continuous and stochastic, Q-learning without hindsight relabelling learns $Q = 0$ on this environment.
In the on-policy setting, MC C-learning and TD C-learning perform similarly, while the prediction error for Q-learning (with hindsight relabeling) is more than three times worse (results in Appendix~\ref{appendix:gridworld}).
In the off-policy setting, TD C-learning is more accurate than Q-learning (with hindsight relabeling), achieving a KL divergence that is 14\% lower than that of Q-learning. As expected, TD C-learning performs better than MC C-learning in the off-policy setting. In summary, C-learning yields a more accurate solution to the future state density estimation problem, as compared with Q-learning.

\begin{wrapfigure}[12]{R}{0.5\textwidth}
    \vspace{-1.2em}
    \begin{subfigure}[b]{0.49\linewidth}
        \includegraphics[width=\linewidth]{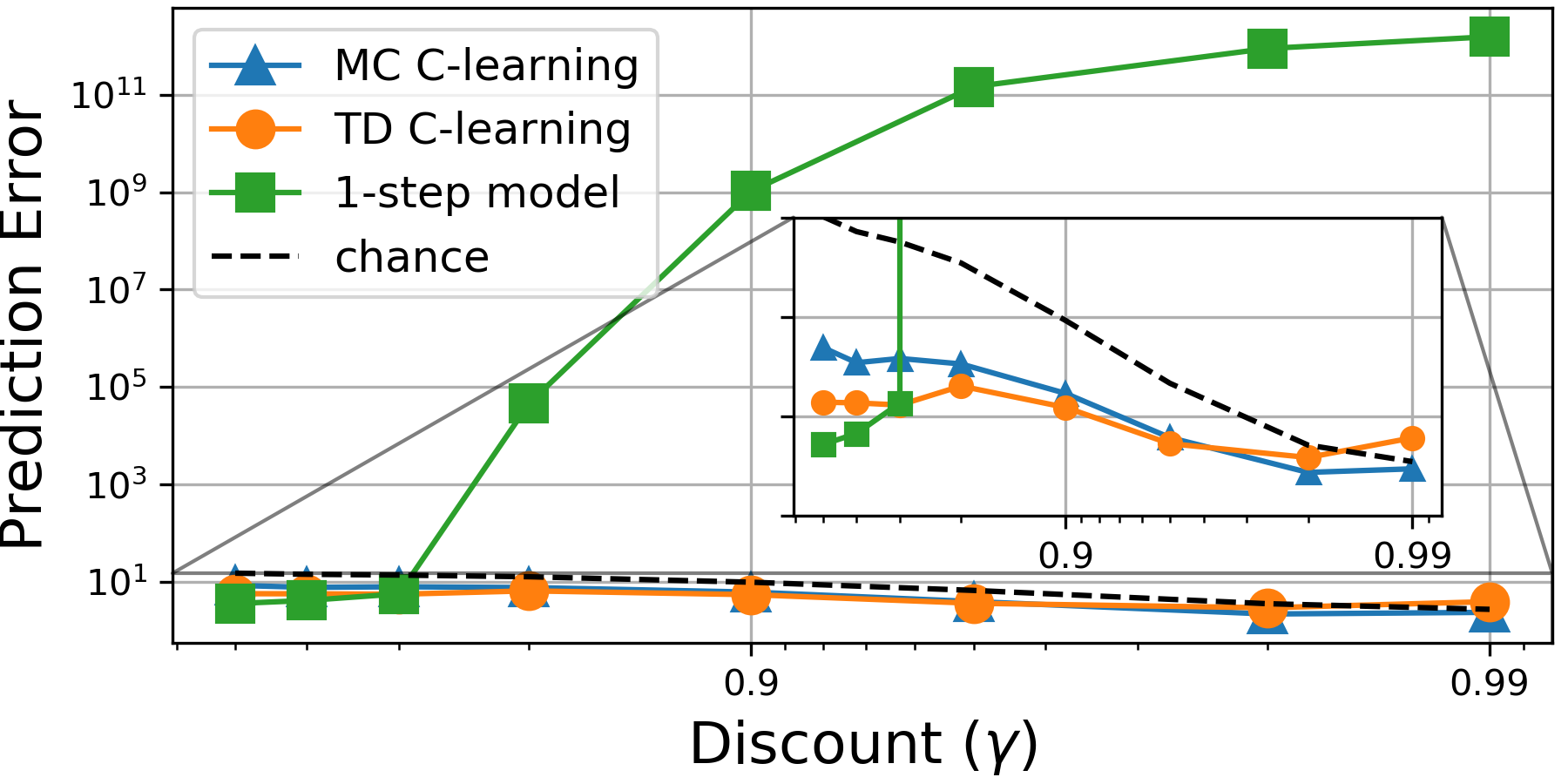}
        \caption{Walker2d-v2}
    \end{subfigure}
    \begin{subfigure}[b]{0.49\linewidth}
        \raisebox{0.8em}{
        \includegraphics[width=\linewidth]{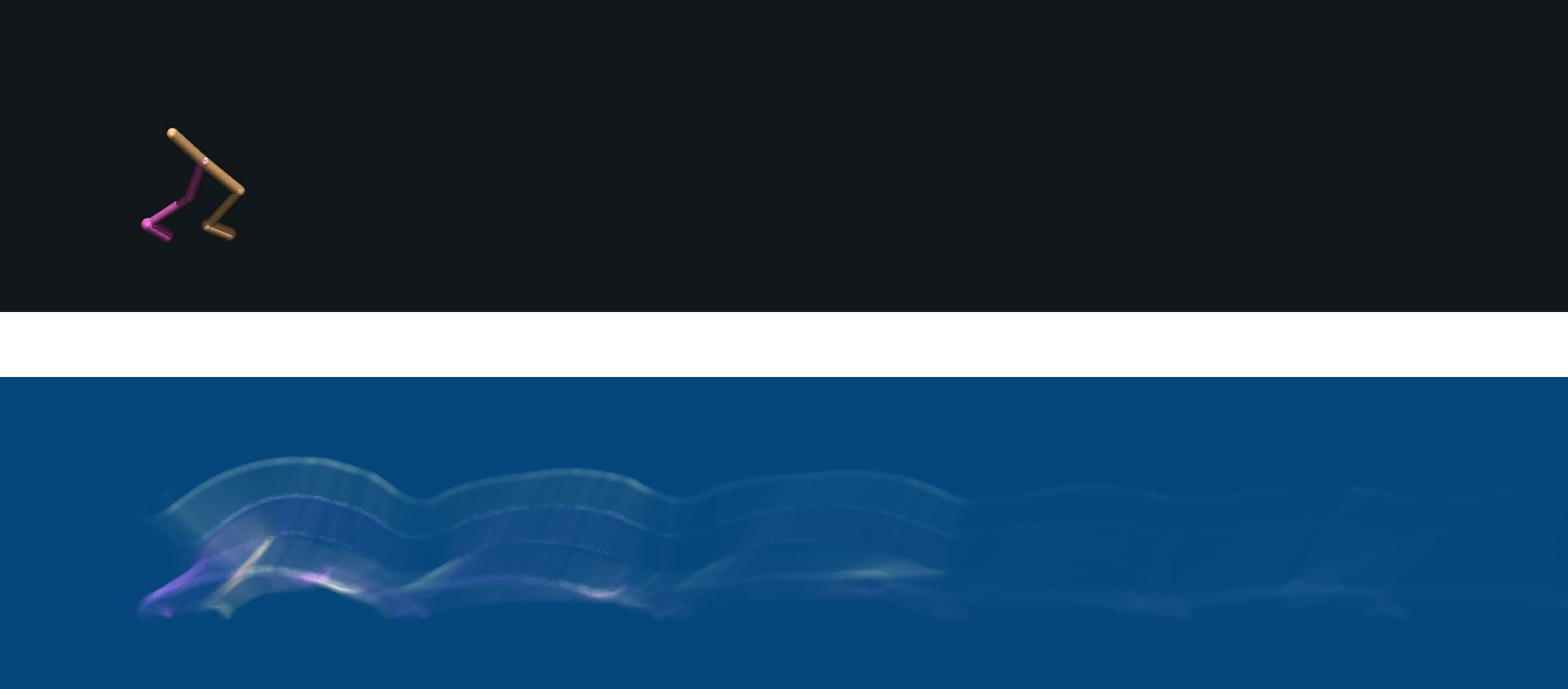}}
        \caption{Predicted future states \label{fig:predictions-rollout}}
    \end{subfigure} 
    \vspace{-0.5em}
    \caption{{\footnotesize \textbf{Predicting the Future}: C-learning makes accurate predictions of the expected future state across a range of tasks and discount values. In contrast, learning a 1-step dynamics model and unrolling that model results in high error for large discount values. \label{fig:predictions}}}
\end{wrapfigure}

Our next experiment studies the ability of C-learning to predict the future in higher-dimensional continuous control tasks. We collected a dataset of experience from agents pre-trained to solve three locomotion tasks from OpenAI Gym. %
We applied C-learning to each dataset, and used the resulting classifier to predict the expected future state. As a baseline, we trained a 1-step dynamics model on this same dataset and unrolled this model autoregressively to obtain a prediction for the expected future state. Varying the discount factor, we compared each method on \texttt{Walker2d-v2} in Fig.~\ref{fig:predictions}. %
The 1-step dynamics model is accurate over short horizons but performance degrades for larger values of $\gamma$, likely due to prediction errors accumulating over time. In contrast, the predictions obtained by MC C-learning and TD C-learning remain accurate for large values of $\gamma$.
Appendix~\ref{appendix:prediction-hparams} contains for experimental details;  Appendix~\ref{appendix:predictions} and the project website contain more visualizations.

\textbf{Testing our hypotheses about Q-learning}: We now test two hypotheses made in Section~\ref{sec:her}. The first hypothesis is that Q-learning will underestimate the future state density function. To test this hypothesis, we compute the sum over the predicted future state density function, \mbox{$\int p_+^\pi(\fs = s_{t+} \mid \cs, \ca) ds_{t+}$}, which in theory should equal one.
We compared the predictions from MC C-learning and Q-learning using on-policy data (details in Appendix~\ref{appendix:gridworld}). As shown in Fig.~\ref{fig:normalization}, the predictions from C-learning summed to 1, but the predictions from Q-learning consistently summed to less than one, especially for large values of $\lambda$. However, our next experiment shows that Q-learning works best when using large values of $\lambda$, suggesting that successful hyperparameters for Q-learning are ones for which Q-learning does not learn a proper density function.

Our second hypothesis is that Q-learning will perform best
when the relabeling ratio is chosen to be $\lambda = (1 + \gamma) / 2$. 
As shown in Fig.~\ref{fig:prior}, Q-learning is highly sensitive to the relabeling ratio: values of $\lambda$ that are too large or too small result in Q-learning performing poorly, worse than simply predicting a uniform distribution.
Our theoretical hypothesis of $\lambda = (1 - \gamma) / 2$ almost exactly predicts the optimal value of $\lambda$ to use for Q-learning.
C-learning, which does not depend on this hyperparameter, outperforms Q-learning, even for the best choice of $\lambda$. These experiments support our hypothesis for the choice of relabeling ratio while reaffirming that our principled approach to future state density estimation obtains a more accurate~solution.

\begin{figure}[t]
    \centering
    \vspace{-2.6em}
    \begin{subfigure}{0.18\textwidth}
        \centering
        \includegraphics[width=0.45\linewidth]{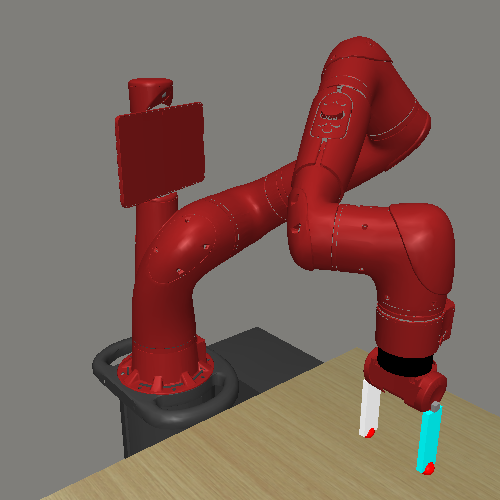}
        \includegraphics[width=0.45\linewidth]{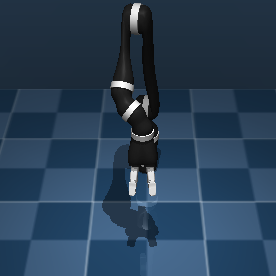}
        \includegraphics[width=0.45\linewidth]{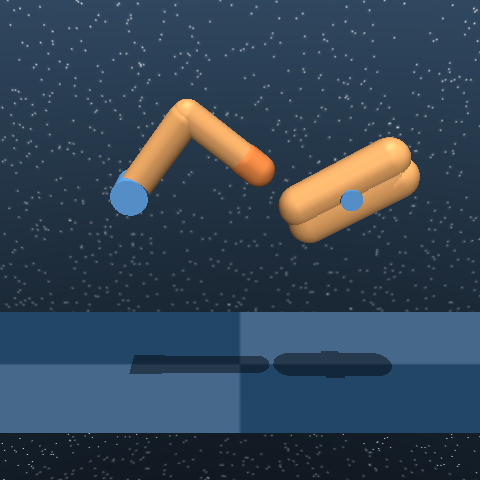}
        \includegraphics[width=0.45\linewidth]{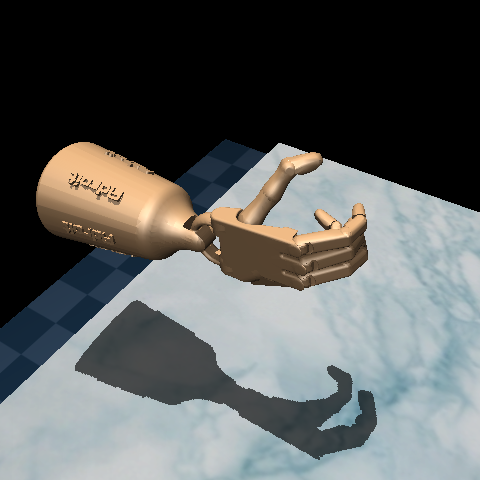}
        \includegraphics[width=0.45\linewidth]{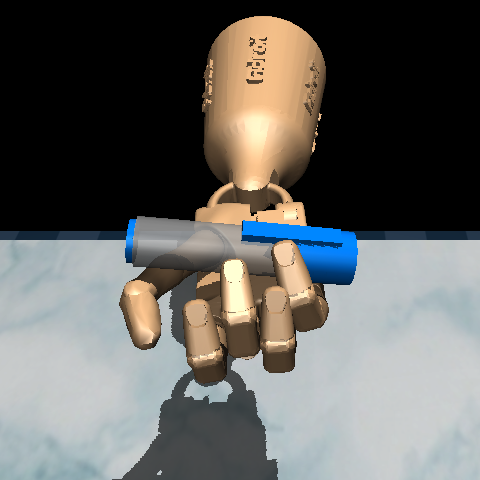}
        \includegraphics[width=0.45\linewidth]{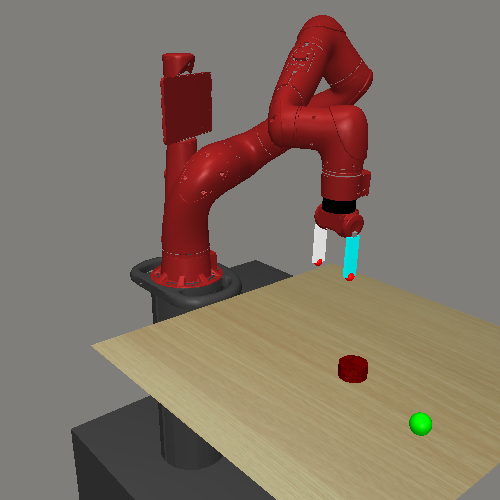}
        \includegraphics[width=0.45\linewidth]{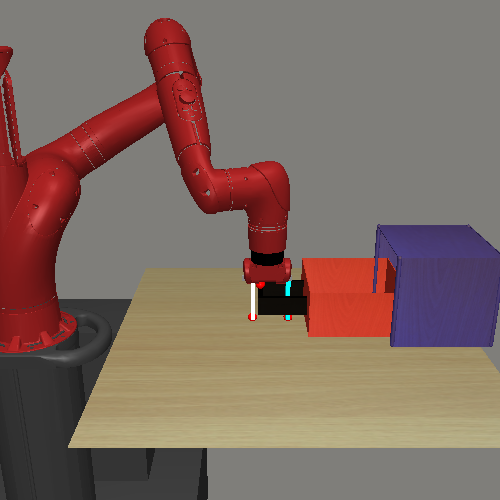}
        \includegraphics[width=0.45\linewidth]{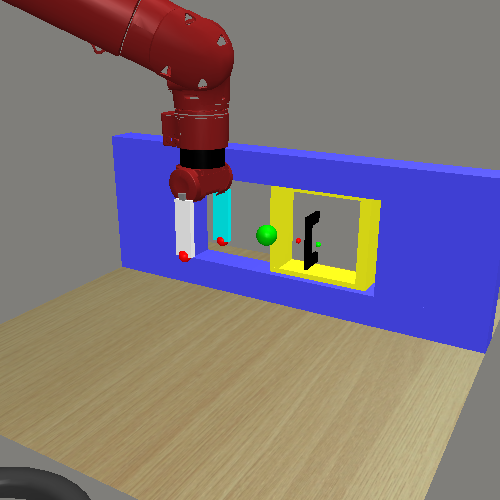}
    \end{subfigure}
    \begin{subfigure}{0.8\textwidth}
    \includegraphics[width=\linewidth]{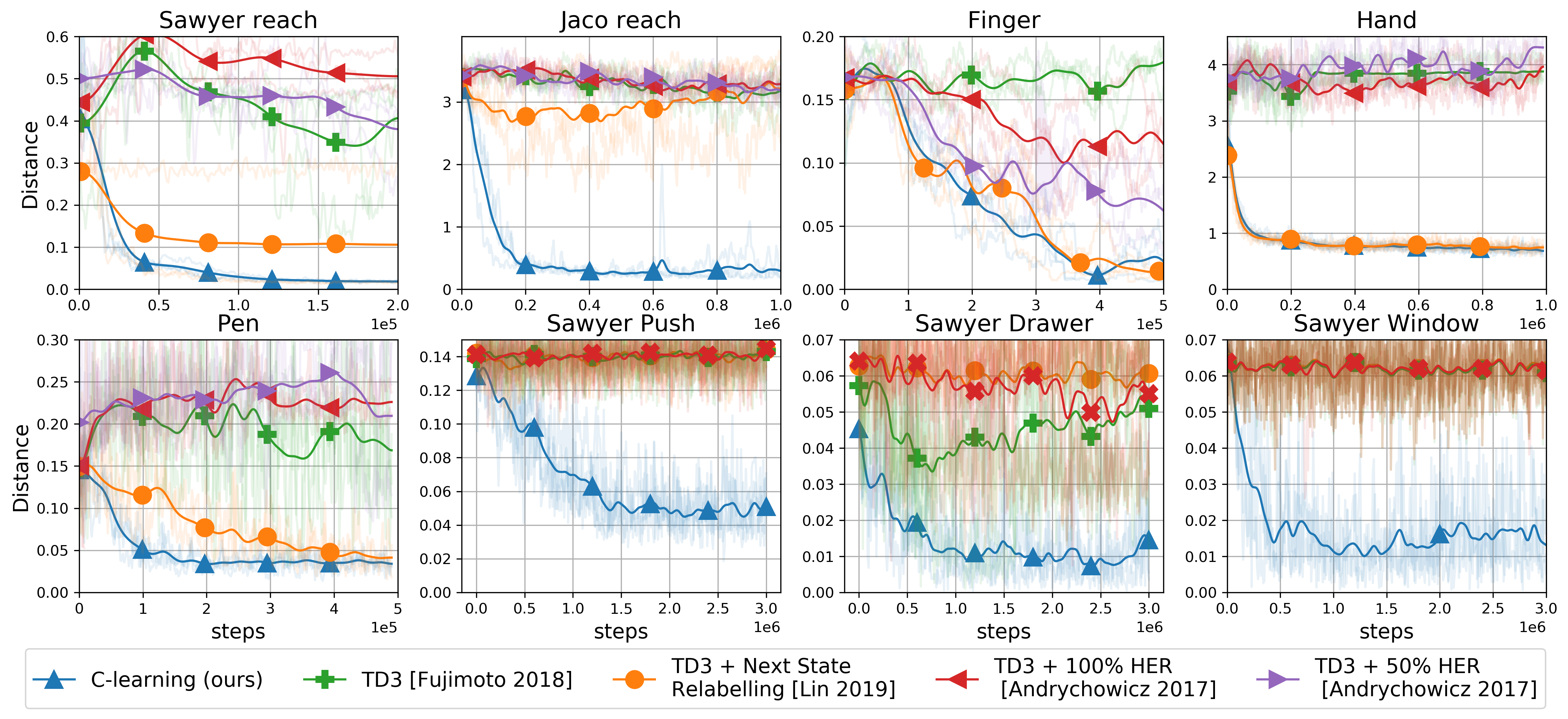}
    \end{subfigure}
    \vspace{-0.2em}
    \caption{{\footnotesize \textbf{Goal-conditioned RL}: C-learning is competitive with prior goal-conditioned RL methods across a suite of benchmark tasks, without requiring careful tuning of the relabeling distribution.}}
    \label{fig:continuous-control}
    \vspace{-1.6em}
\end{figure}

\textbf{Goal-conditioned RL for continuous control tasks}:
Our last set of experiments apply goal-conditioned C-learning (Alg.~\ref{alg:rl}) to benchmark continuous control tasks from prior work, shown in Fig.~\ref{fig:continuous-control}.  These tasks range in difficulty from the 6-dimensional Sawyer Reach task to the 45-dimensional Pen task (see Appendix~\ref{appendix:experiments}).
The aim of these experiments is to show that C-learning is competitive with prior goal-conditioned RL methods, without requiring careful tuning of the goal sampling ratio.
We compare C-learning with a number of prior methods based on Q-learning, which differ in how goals are sampled during training: TD3~\citep{fujimoto2018addressing} does no relabeling,~\citet{lin2019reinforcement} uses 50\% next state goals and 50\% random goals, and HER~\citep{andrychowicz2017hindsight} uses final state relabeling (we compare against both 100\% and 50\% relabeling). None of these methods require a reward function or distance function for training; for evaluation, we use the L2 metric between the commanded goal and the terminal state (the average distance to goal and minimum distance to goal show the same trends). As shown in Fig.~\ref{fig:continuous-control}, C-learning is competitive with the best of these baselines across all tasks, and substantially better than all baselines on the Sawyer manipulation tasks. These manipulation tasks are more complex than the others because they require indirect manipulation of objects in the environment.
Visualizing the learned policies, we observe that C-learning has discovered regrasping and fine-grained adjustment behaviors, behaviors that typically require complex reward functions to learn~\citep{popov2017data}.\footnote{See the project website for videos: \url{https://ben-eysenbach.github.io/c_learning}}
On the Sawyer Push and Sawyer Drawer tasks, we found that a hybrid of TD C-learning and MC C-learning performed better than standard C-learning. We describe this variant in Appendix~\ref{appendix:future-positives} and provide a Tensorflow implementation in Appendix~\ref{appendix:pseudocode}. In summary, C-learning performs as well as prior methods on simpler tasks and better on complex tasks, does not depend on a sensitive hyperparameter (the goal sampling ratio), and maximizes a well-defined objective function.

\begin{wrapfigure}[11]{R}{0.5\textwidth}
\vspace{-1em}
\centering
\includegraphics[width=\linewidth]{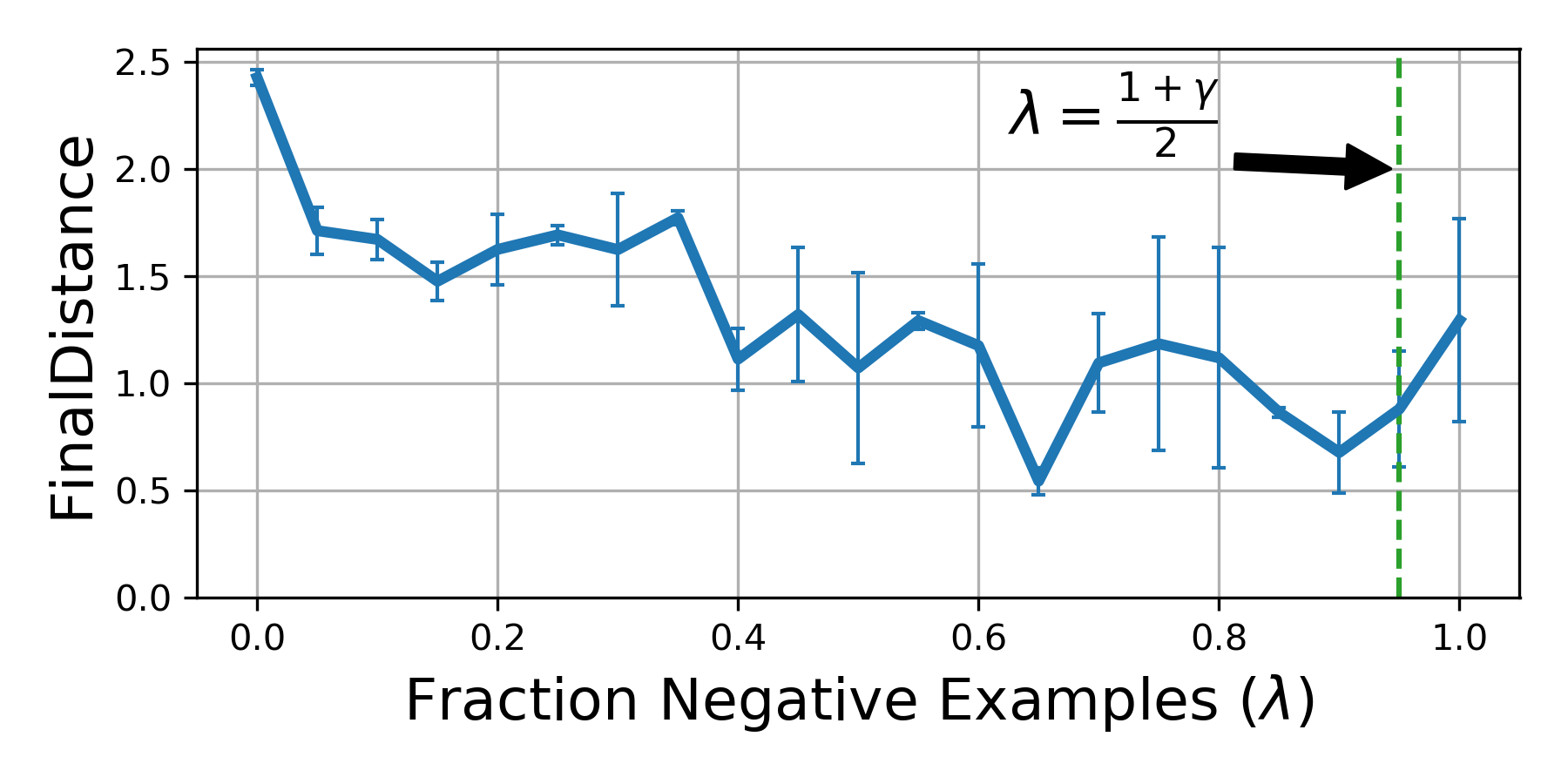}
\vspace{-2em}
\caption{{\footnotesize Q-learning is sensitive to the relabeling ratio. Our analysis predicts the optimal relabeling~ratio.}} \label{fig:umaze-ratio}
\end{wrapfigure}
\textbf{Goal sampling ratio for goal conditioned RL}:
While C-learning prescribes a precise method for sampling goals, prior hindsight relabeling methods are sensitive to these parameters. We varied the goal sampling ratio used by~\citet{lin2019reinforcement} on the \texttt{maze2d-umaze-v0} task. As shown in Fig.~\ref{fig:umaze-ratio}, properly choosing this ratio can result in a $50$\% decrease in final distance. Hypothesis~\ref{pred:ratio} provides a good estimate for the best value for this ratio.

\vspace{-0.5em}
\section{Conclusion}
\vspace{-0.5em}

A goal-oriented agent should be able to predict and control the future state of its environment. In this paper, we used this idea to reformulate the standard goal-conditioned RL problem as one of estimating and optimizing the future state density function. We showed that Q-learning does not directly solve this problem in (stochastic) environments with continuous states, and hindsight relabeling produces, at best, a mediocre solution for an unclear objective function. In contrast, C-learning yields more accurate solutions. Moreover, our analysis makes two hypotheses about when and where hindsight relabeling will most effectively solve this problem, both of which are validated in our experiments. Our experiments also demonstrate that C-learning scales to high-dimensional continuous controls tasks, where performance is competitive with state-of-the-art goal conditioned RL methods while offering an automatic and principled mechanism for hindsight relabeling.

\subsubsection*{Acknowledgements}
{\footnotesize
We thank Dibya Ghosh and Vitchyr Pong for discussions about this work, and thank Vincent Vanhouke, Ofir Nachum, and anonymous reviewers for providing feedback on early versions of this work. We thank Anoop Korattikara for help releasing code. This work is supported by the Fannie and John Hertz Foundation, the National Science Foundation (DGE-1745016, IIS1763562), and the US Army (W911NF1920104). 
}

{\footnotesize

}

\clearpage
\appendix

\section{Pseudocode for Goal-Conditioned C-Learning}
\label{appendix:pseudocode}
The code below provide a Tensorflow implementation of the loss function for training the classifier. We use $C(\cdots): \gS \times \gA \times \gS \rightarrow [0, 1]$ to refer to the binary classifier.\footnote{Ablation experiments showed that this parametrization of the classifier (1D output + sigmoid activation) performed identically to using a 2D output followed by a softmax activation.} In practice, we found that combining the Monte-Carlo and temporal difference losses provided the best results (see Appendix~\ref{appendix:future-positives}). We recommend that users start with the \texttt{classifier\_loss\_mixed($\cdots$)} loss using \texttt{lambda\_ = 0.5}.

{\footnotesize

\begin{minted}[texcomments, frame=single]{python}
import tensorflow as tf

# Monte-Carlo C-learning (Alg. \ref{alg:mc}, Eq. \ref{eq:objective})
def classifier_loss_mc(C, s_current, a_current, s_future, s_random):
  obj_pos = tf.math.log(C(s_current, a_current, s_future))
  obj_neg = tf.math.log(1 - C(s_current, a_current, s_random))
  objective = obj_pos + obj_neg
  return -1.0 * tf.reduce_mean(objective)
  
# Temporal Difference C-learning (Alg. \ref{alg:rl}, Eq. \ref{eq:objective4})
def classifier_loss_td(C, s_current, a_current,
                       s_next, a_next, s_random, gamma):
  obj_a = (1 - gamma) *  tf.math.log(C(s_current, a_current, s_next))
  obj_b = tf.math.log(1 - C(s_current, a_current, s_random))
  w = C(s_next, a_next, s_future) / (1 - C(s_next, a_next, s_future))
  w = tf.stop_gradient(w)
  obj_c = gamma * w * tf.math.log(C(s_current, a_current, s_random))
  objective = obj_a + obj_b + obj_c
  return -1.0 * tf.reduce_mean(objective)

# Mixing TD C-learning with MC C-learning (Appendix \ref{appendix:futurePositives})
def classifier_loss_mixed(C, s_current, a_current, s_next, a_next,
                          s_future, s_random, gamma, lambda_):
  obj_mc = classifier_loss_mc(C, s_current, a_current,
                              s_future, s_random)
  obj_td = classifier_loss_td(C, s_current, a_current, s_next, a_next,
                              s_random, gamma)
  objective = lambda_ * obj_td + (1 - lambda_) * obj_mc
  return -1.0 * tf.reduce_mean(objective)
\end{minted}
}

\subsection{Implementation using Cross Entropy Losses}
As noted in Sec.~\ref{sec:her}, the loss functions for the classifier can equivalently be expressed using a cross entropy loss. While mathematically equivalent to the code above, this implementation highlights the connections with Q-learning methods.

{\footnotesize
\begin{minted}[texcomments, frame=single]{python}
import tensorflow as tf
import tf.keras.losses.binary_crossentropy as bce_loss

# Monte-Carlo C-learning
def classifier_loss_mc(C, s_current, a_current, s_future, s_random):
  s_current_tiled = tf.concat([s_current, s_current], axis=0)
  a_current_tiled = tf.concat([a_current, a_current], axis=0)
  goals = tf.concat([s_future, s_random], axis=0)
  batch_size = s_current.shape[0]
  y_true = tf.concat([tf.ones(batch_size), tf.zeros(batch_size)], axis=0)
  loss = bce_loss(y_true=y_true,
                  y_pred=C(s_current_tiled, a_current_tiled, goals))
  return tf.reduce_mean(loss)
\end{minted}
  
\begin{minted}[texcomments, frame=single]{python}

# Temporal Difference C-learning
def classifier_loss_td(C, s_current, a_current,
                       s_next, a_next, s_random, gamma):
  w = C(s_next, a_next, s_random) / (1 - C(s_next, a_next, s_random))
  w = tf.stop_gradient(w)
  batch_size = s_current.shape[0]
  td_targets = tf.concat([tf.ones(batch_size),
                          gamma * w / (1 + gamma * w)], axis=0)
  s_current_tiled = tf.concat([s_current, s_current], axis=0)
  a_current_tiled = tf.concat([a_current, a_current], axis=0)
  goals = tf.concat([s_next, s_random], axis=0)
  sample_weights = tf.concat([(1 - gamma) * tf.ones(batch_size),
                              (1 + gamma * w)], axis=0)
  loss = bce_loss(y_true=y_true,
                  y_pred=C(s_current_tiled, a_current_tiled, goals))
  return tf.reduce_mean(sample_weights * loss)
\end{minted}
}

\section{Partially-Observed Goals}
\label{sec:pomdp}

In many realistic scenarios we have uncertainty over the true goal, with many goals having some probability of being the user's true goal. These scenarios might arise because of sensor noise or because the user wants the agent to focus on a subset of the goal observation (e.g., a robot's center of mass).

\paragraph{Noisy Goals.}
In many prior works, this setting is approached by assuming that a noisy measurement of the goal $z \sim p(z \mid \fs = g)$ is observed, and conditioning the Q-function on this measurement.
For example, the measurement $z$ might be the VAE latent of the image $g$~\citep{shelhamer2016loss, gregor2018temporal, pong2019skew}. In this setting, we will instead aim to estimate the future discounted \emph{measurement} distribution,
\begin{equation*}
    p(\mathbf{z_{t+}} \mid \cs, \ca) = (1 - \gamma) \sum_{\Delta=0}^\infty \gamma^\Delta \E_{\mathbf{s_{t+\Delta}} \sim  p(\mathbf{s_{t+\Delta}} \mid \cs, \ca)} \left[ p(\mathbf{z_{t+\Delta}} \mid \mathbf{s_{t+\Delta}}) \right].
\end{equation*}
Whereas the goal-conditioned setting viewed $f_\theta^\pi(\fs \mid \cs, \ca)$ as defining a measure over \emph{goals}, here we view $f_\theta^\pi(\mathbf{z_{t+}} \mid \cs, \ca)$ as an implicitly-defined distribution over \emph{measurements}.

\paragraph{Partial Goals.}
In some settings, the user may want the agent to pay attention to part of the goal, ignoring certain coordinates or attributes. Applying C-learning to this setting is easy. Let \mbox{$\texttt{crop}(\cs)$} be a user-provided function that extracts relevant coordinates or aspects of the goal state. Then, the user can simply parametrize the C-learning classifier as $C_\theta^\pi(F \mid \cs, \ca, g = \texttt{crop}(\fs))$.

\section{Analytic Future State Distribution}
\label{appendix:markov-chain}
In this section we describe how to analytically compute the discounted future state distribution for the gridworld examples. We started by creating two matrices:
\begin{align*}
    T \in \mathbbm{R}^{25 \times 25}: \quad &T[s, s'] = \sum_a \mathbbm{1}(f(s, a) = s') \pi(a \mid s) \\
    T_0 \in \mathbbm{R}^{25 \times 4 \times 25}: \quad &T[s, a, s'] = \mathbbm{1}(f(s, a) = s'),
\end{align*}
where $f(s, a)$ denotes the deterministic transition function.
The future discounted state distribution is then given by:
\begin{align*}
    P &= (1 - \gamma) \left[T_0 + \gamma T_0 T + \gamma^2 T_0 T^2 + \gamma^3 T_0 T^3 + \cdots \right] \\
    &= (1 - \gamma) T_0 \left[ I + \gamma T + \gamma^2 T^2 + \gamma^3 T^3 + \cdots \right] \\
    &= (1 - \gamma) T_0 \left(I - \gamma T \right)^{-1}
\end{align*}
The tensor-matrix product $T_0 T$ is equivalent to \texttt{einsum}(`ijk,kh $\rightarrow$ ijh', $T_0$, $T$).
We use the forward KL divergence for estimating the error in our estimate, $\kl{P}{Q}$, where $Q$ is the tensor of predictions:
\begin{equation*}
    Q \in \mathbbm{R}^{25 \times 4 \times 25}: \quad Q[s, a, g] = q(g \mid s, a).
\end{equation*}

\section{Assignment Equations for the MSE Loss}
\label{appendix:mse}
In Section~\ref{sec:her}, we derived the assignment equations for C-learning under the cross entropy loss. In this section we show that using the mean squared error (MSE) loss results in the same update equations. Equivalently, this can be viewed as using a Gaussian model for Q values instead of a logistic model. This result suggests that the difference in next-state scaling between C-learning and Q-learning is not just a quirk of the loss function.

To start, we write the loss for C-learning using the MSE and then completing the square.
\begin{align*}
L(\theta, \pi) &= (1 - \gamma) (C_\theta^\pi(F = 1 \mid \cs, \ca, \ns) - 1)^2 + \gamma w (C_\theta^\pi(F = 1 \mid \cs, \ca, \fs) - 1)^2 \\
& \qquad\qquad + (C_\theta^\pi(F = 1 \mid \cs, \ca, \ns) - 0)^2 \\
&= (1 - \gamma) \left( C_\theta^\pi(F = 1 \mid \cs, \ca, \ns) - 1 \right)^2 + \gamma w  \left(C_\theta^\pi(F = 1 \mid \cs, \ca, \fs) - \frac{\gamma w}{\gamma w + 1} \right)^2 \\
& \qquad\qquad + \underbrace{\gamma w - \left(\frac{\gamma w}{\gamma w + 1} \right)^2}_{\text{constant w.r.t. }C_\theta}.
\end{align*}
The optimal values for $C_\theta$ for both cases of goal are the same as for the cross entropy loss:
\begin{equation*}
    C_\theta^\pi(F=1 \mid \cs, \ca, \fs) \gets \begin{cases} 1 & \text{if } \ns = \fs \\ \frac{\gamma w}{\gamma w + 1} & \text{otherwise} \end{cases}.
\end{equation*}
Additionally, observe that the weights on the two loss terms are the same as for the cross entropy loss: the next-state goal loss is scaled by $(1 - \gamma)$ while the random goal loss is scaled by $\gamma w$.

\section{A Bellman Equation for C-Learning and Convergence Guarantees}
The aim of this section is to show that off-policy C-learning converges, and that the fixed point corresponds to the Bayes-optimal classifier. This result guarantees that C-learning will accurate \emph{evaluate} the future state density of a given policy. We then provide a policy improvement theorem, which guarantees that goal-conditioned C-learning converges to the optimal goal-conditioned policy.

\subsection{Bellman Equations for C-Learning}
\label{appendix:analysis}

We start by introducing a new Bellman equation for C-learning, which will be satisfied by the Bayes optimal classifier. While actually evaluating this Bellman equation requires privileged knowledge of the transition dynamics and the marginal state density, if we knew these quantities we could turn this Bellman equation into a convergent value iteration procedure. In the next section, we will show that the updates of off-policy C-learning are equivalent to this value iteration procedure, but do not require knowledge of the transition dynamics or marginal state density. This equivalence allows us to conclude that C-learning converges to the Bayes-optimal classifier.

Our Bellman equation says that the future state density function $f_\theta$ induced by a classifier $C_\theta$ should satisfy the recursive relationship noted in Eq.~\ref{eq:bootstrap}. 
\begin{lemma}[C-learning Bellman Equation]
Let policy $\pi(\ca \mid \cs)$, dynamics function $p(\ns \mid \cs, \ca)$, and marginal distribution $p(\fs)$ be given. If a classifier $C_\theta$ is the Bayes-optimal classifier, then it satisfies the follow identity for all states $\cs$, actions $\ca$, and potential future states $\fs$:
{\footnotesize
\begin{equation}
    \frac{C_\theta^\pi(F = 1 \mid \cs, \ca, \fs)}{C_\theta^\pi(F = 0 \mid \cs, \ca, \fs)} = (1 - \gamma) \frac{p(\ns = \fs \mid \cs, \ca)}{p(\fs)} + \gamma \E_{\substack{p(\ns \mid \cs, \ca),\\ \pi(\na \mid \cs)}} \left[ \frac{C_\theta^\pi(F = 1 \mid \ns, \na, \fs)}{C_\theta^\pi(F = 0 \mid \ns, \na, \fs)} \right] \label{eq:c-bellman}
\end{equation}}
\end{lemma}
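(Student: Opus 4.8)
The plan is to show that both sides of the claimed identity reduce to the density ratio $p_+^\pi(\fs\mid\cs,\ca)/p(\fs)$, so that the Bellman equation for the classifier follows immediately from the already-established recursion for $p_+^\pi$ in Eq.~\ref{eq:bootstrap}. In other words, the lemma is really a restatement of Eq.~\ref{eq:bootstrap} after translating densities into classifier outputs via Bayes' rule, and my proof would make that translation explicit.

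First I would rewrite the left-hand side. By the definition of the Bayes-optimal classifier in Eq.~\ref{eq:bayes-opt}, we have $C_\theta^\pi(F=1\mid\cs,\ca,\fs)=p_+^\pi(\fs\mid\cs,\ca)/(p_+^\pi(\fs\mid\cs,\ca)+p(\fs))$ and $C_\theta^\pi(F=0\mid\cs,\ca,\fs)=p(\fs)/(p_+^\pi(\fs\mid\cs,\ca)+p(\fs))$. Taking the ratio, the common normalizer cancels and I obtain $C_\theta^\pi(F=1\mid\cs,\ca,\fs)/C_\theta^\pi(F=0\mid\cs,\ca,\fs)=p_+^\pi(\fs\mid\cs,\ca)/p(\fs)$, which is exactly the importance weight of Eq.~\ref{eq:iw}. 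Crucially, the identical computation holds at the next state-action pair $(\ns,\na)$, giving $C_\theta^\pi(F=1\mid\ns,\na,\fs)/C_\theta^\pi(F=0\mid\ns,\na,\fs)=p_+^\pi(\fs\mid\ns,\na)/p(\fs)$.

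Next I would invoke the recursion Eq.~\ref{eq:bootstrap}, namely $p_+^\pi(\fs\mid\cs,\ca)=(1-\gamma)\,p(\ns=\fs\mid\cs,\ca)+\gamma\,\E[p_+^\pi(\fs\mid\ns,\na)]$, and divide both sides by the marginal $p(\fs)$. Since $p(\fs)$ does not depend on the transition or the policy, it passes inside the expectation, so the right-hand side becomes $(1-\gamma)\,p(\ns=\fs\mid\cs,\ca)/p(\fs)+\gamma\,\E[p_+^\pi(\fs\mid\ns,\na)/p(\fs)]$. Substituting the two ratio identities from the previous step—the left-hand ratio for the classifier at $(\cs,\ca)$, and the integrand ratio for the classifier at $(\ns,\na)$—turns this expression into exactly Eq.~\ref{eq:c-bellman}, completing the argument.

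The verification is essentially a substitution, so there is no deep obstacle; the one point requiring care is the final interchange, i.e. pushing the normalizer $p(\fs)$ inside the expectation and then replacing the integrand $p_+^\pi(\fs\mid\ns,\na)/p(\fs)$ by the classifier ratio. This is legitimate because the Bayes-optimality identity of Eq.~\ref{eq:bayes-opt} holds \emph{pointwise} at every state-action pair in the support of $p(\ns\mid\cs,\ca)\,\pi(\na\mid\ns)$, so the substitution happens inside the integrand rather than merely in expectation. I would also flag the minor notational point that the conditioning in the expectation should read $\pi(\na\mid\ns)$, matching Eq.~\ref{eq:bootstrap}; with that reading the two expectations align term by term and the result follows.
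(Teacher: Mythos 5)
Your proof is correct and follows essentially the same route as the paper's own (very terse) proof, which simply substitutes the Bayes-optimality identity $f_\theta^\pi(\fs \mid \cs, \ca) = p_+^\pi(\fs \mid \cs, \ca)$ from Eq.~\ref{eq:prob-ratio} into the recursion of Eq.~\ref{eq:bootstrap} and divides by $p(\fs)$; your writeup just makes the pointwise cancellation and the interchange with the expectation explicit. Your remark that the expectation subscript should read $\pi(\na \mid \ns)$ rather than $\pi(\na \mid \cs)$ is also a correct reading of a typo in the statement.
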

\begin{proof}
If $C_\theta$ is the Bayes-optimal classifier, then $f_\theta^\pi(\fs \mid \cs, \ca) = p_+^\pi(\fs \mid \cs, \ca)$. Substituting the definition of $f_\theta$ (Eq.~\ref{eq:prob-ratio}) into Eq.~\ref{eq:bootstrap}, we obtain a new Bellman equation:
\end{proof}
This Bellman equation is similar to the standard Bellman equation with a goal-conditioned reward function $r_\fs(\cs, \ca) = p(\ns = \fs \mid \cs, \ca) / p(\fs)$, where Q functions represent the ratio $f_\theta^\pi(\fs \mid \cs, \ca) = p_+^\pi(\fs \mid \cs, \ca)$. However, actually computing this reward function to evaluate this Bellman equation requires knowledge of the densities $p(\ns \mid \cs, \ca)$ and $p(\fs)$, both of which we assume are unknown to our agent.\footnote{
Interestingly, we can efficiently estimate this reward function by learning a \emph{next-state} classifier, $q_\theta(F \mid \cs, \ca, \ns)$, which distinguishes $\ns \sim p(\ns \mid \cs, \ca)$ from $\ns \sim p(\ns) = p(\fs)$. This classifier is \emph{different} from the future state classifier used in C-learning.
The reward function can then be estimated as $r_\fs(\cs, \ca)= \frac{q_\theta(F = 1 \mid \cs, \ca, \fs)}{q_\theta(F = 0 \mid \cs, \ca, \fs)}$.  If we learned this next state classifier, we estimate the future state density and learn goal-reaching policies by applying standard Q-learning to this reward function.
}
Nonetheless, if we had this privileged information, we could readily turn this Bellman equation into the following assignment equation:
{\footnotesize
\begin{equation}
    \frac{C_\theta^\pi(F = 1 \mid \cs, \ca, \fs)}{C_\theta^\pi(F = 0 \mid \cs, \ca, \fs)} \gets (1 - \gamma) \frac{p(\ns = \fs \mid \cs, \ca)}{p(\fs)} + \gamma \E_{\substack{p(\ns \mid \cs, \ca),\\ \pi(\na \mid \cs)}} \left[ \frac{C_\theta^\pi(F = 1 \mid \ns, \na, \fs)}{C_\theta^\pi(F = 0 \mid \ns, \na, \fs)} \right] \label{eq:bellman-assignment}
\end{equation}
}
\begin{lemma} \label{lemma:c-bellman}
If we use a tabular representation for the \emph{ratio} $\frac{C_\theta^\pi(F = 1 \mid \cs, \ca, \fs)}{C_\theta^\pi(F = 0 \mid \cs, \ca, \fs)}$, then iterating the assignment equation (Eq.~\ref{eq:bellman-assignment}) converges to the optimal classifier.
\end{lemma}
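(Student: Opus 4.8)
The plan is to recognize the assignment in Eq.~\ref{eq:bellman-assignment} as an ordinary policy-evaluation Bellman backup and then invoke the Banach fixed-point theorem. First I would work directly with the quantity the lemma assumes is tabulated, namely the ratio
\[
u(\cs,\ca,\fs) \triangleq \frac{C_\theta^\pi(F=1\mid\cs,\ca,\fs)}{C_\theta^\pi(F=0\mid\cs,\ca,\fs)},
\]
which by Eq.~\ref{eq:prob-ratio} equals $f_\theta^\pi(\fs\mid\cs,\ca)/p(\fs)$. In the finite tabular setting $u$ is a vector indexed by $(\cs,\ca,\fs)$, and the right-hand side of Eq.~\ref{eq:bellman-assignment} defines an operator $\mathcal{T}$ on this vector space. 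The key structural observation is that $\mathcal{T}$ decouples across the goal $\fs$: the updated value $u(\cdot,\cdot,\fs)$ depends only on current values $u(\cdot,\cdot,\fs)$ at the \emph{same} goal. It therefore suffices to fix an arbitrary goal $\fs$ and study the operator on functions of $(\cs,\ca)$ alone.

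Second, I would identify this per-goal operator as the standard Bellman policy-evaluation operator for policy $\pi$, discount $\gamma$, and the fixed reward $r_\fs(\cs,\ca)=(1-\gamma)\,p(\ns=\fs\mid\cs,\ca)/p(\fs)$:
\[
(\mathcal{T}u)(\cs,\ca) = r_\fs(\cs,\ca) + \gamma\,\E_{\substack{p(\ns\mid\cs,\ca),\\ \pi(\na\mid\ns)}}\bigl[u(\ns,\na)\bigr].
\]
Because the state-action space is finite and $p(\fs)$ has full support, $r_\fs$ is bounded and $\mathcal{T}$ is well defined. The contraction argument is then routine: for any two iterates $u_1,u_2$ the reward term cancels in the difference, leaving a $\gamma$-scaled average over $(\ns,\na)$, so $\norm{\mathcal{T}u_1-\mathcal{T}u_2}_\infty \le \gamma\,\norm{u_1-u_2}_\infty$, i.e.\ $\mathcal{T}$ is a $\gamma$-contraction in the $L^\infty$ norm.

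Third, the Banach fixed-point theorem yields a unique fixed point together with geometric convergence of the iterates $u^{(k+1)}=\mathcal{T}u^{(k)}$ from any initialization. To finish, I would invoke the C-learning Bellman equation (Eq.~\ref{eq:c-bellman}), which states precisely that the Bayes-optimal classifier's ratio satisfies $u=\mathcal{T}u$; hence it is \emph{the} fixed point, and the iteration converges to it. Finally, since the map $u\mapsto C=u/(1+u)$ is a bijection, convergence of the ratio is equivalent to convergence of the classifier $C_\theta$ to the Bayes-optimal classifier.

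The main obstacle, such as it is, is bookkeeping rather than depth: I must state the well-definedness conditions (finite tabular space, $p(\fs)>0$ on its support) that keep $r_\fs$ bounded, and I must make the goal-decoupling explicit so that the analysis reduces to a clean per-goal policy-evaluation problem rather than a coupled operator over the full product space. Everything else is the textbook contraction-plus-uniqueness argument applied to the reward $r_\fs$ implicitly defined by C-learning.
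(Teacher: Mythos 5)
Your proposal is correct and follows essentially the same route as the paper: both identify the tabulated ratio as a goal-conditioned Q-function and the assignment equation as value iteration (policy evaluation) for the reward $r_{\fs}(\cs, \ca) = p(\ns = \fs \mid \cs, \ca)/p(\fs)$, then appeal to standard convergence of this iteration. The only difference is that the paper discharges the final step by citing \citet[Theorem 1]{jaakkola1994convergence}, whereas you carry out the $\gamma$-contraction and Banach fixed-point argument explicitly (also making the per-goal decoupling and the ratio-to-classifier bijection explicit), which is a perfectly adequate and more self-contained substitute for the deterministic assignment iteration considered in the lemma.
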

\begin{proof}
Eq.~\ref{eq:bellman-assignment} can be viewed as doing value iteration with a goal-conditioned Q function parametrized as $Q(\cs, \ca, \fs) = \frac{C_\theta^\pi(F = 1 \mid \cs, \ca, \fs)}{C_\theta^\pi(F = 0 \mid \cs, \ca, \fs)}$ and a goal-conditioned reward function $r_{\fs}(\cs, \ca) = \frac{p(\ns = \fs \mid \cs, \ca)}{p(\fs)}$. We can then employ standard convergence proofs for Q-learning to guarantee convergence~\citep[Theorem 1]{jaakkola1994convergence}.
\end{proof}

\subsection{Off-Policy C-learning Converges} 
In this section we show that off-policy C-learning converges to the Bayes-optimal classifier, and thus recovers the true future state density function. The main idea is to show that the updates for off-policy C-learning have the same effect as the assignment equation above (Eq.~\ref{eq:bellman-assignment}), without relying on knowledge of the dynamics function or marginal density function.
\begin{lemma}
Off-policy C-learning results in the same updates to the classifier as the assignment equations for the C-learning Bellman equation (Eq.~\ref{eq:bellman-assignment})
\end{lemma}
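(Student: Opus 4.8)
The plan is to show that the expected gradient of the off-policy C-learning objective (Eq.~\ref{eq:objective-3}) vanishes precisely when the classifier ratio satisfies the assignment equation (Eq.~\ref{eq:bellman-assignment}), so that a stochastic gradient step on the objective implements the same fixed-point iteration as the Bellman assignment. First I would fix a target state $\fs$ and a state-action pair $(\cs, \ca)$, and isolate the portion of the objective that depends on the classifier output at that input. Because the importance weight $w$ is treated as a constant by the stop-gradient operator, the three terms $(a)$, $(b)$, $(c)$ contributing to $C_\theta^\pi(F=1 \mid \cs, \ca, \cdot)$ form a weighted cross-entropy loss whose pointwise minimizer can be computed in closed form by setting the derivative with respect to the classifier logit to zero.

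The key computation is to collect, at a fixed query point $(\cs, \ca, \fs)$, the total effective positive weight and negative weight landing on that point. The next-state term $(a)$ contributes positive label weight $(1-\gamma)\, p(\ns = \fs \mid \cs, \ca)$ (since $\ns$ equals $\fs$ with the dynamics density), the marginal term $(b)$ contributes negative label weight $p(\fs)$, and the reweighted term $(c)$ contributes positive label weight $\gamma\, \lfloor w(\ns,\na,\fs)\rfloor_{\text{sg}}\, p(\fs)$, where the expectation over $\ns, \na$ is folded into $w$. Minimizing the resulting per-point weighted cross entropy yields the Bayes-optimal form
\begin{equation*}
    \frac{C_\theta^\pi(F=1 \mid \cs, \ca, \fs)}{C_\theta^\pi(F=0 \mid \cs, \ca, \fs)} = \frac{\text{total positive weight}}{\text{total negative weight}} = (1-\gamma)\frac{p(\ns=\fs \mid \cs, \ca)}{p(\fs)} + \gamma\, \E\left[\lfloor w \rfloor_{\text{sg}}\right].
\end{equation*}
I would then substitute the definition of the importance weight from Eq.~\ref{eq:iw}, namely $w(\ns,\na,\fs) = C_\theta^\pi(F=1 \mid \ns, \na, \fs)/C_\theta^\pi(F=0 \mid \ns, \na, \fs)$, which converts the final expectation into exactly the bootstrapped term on the right-hand side of Eq.~\ref{eq:bellman-assignment}.

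Two subtleties need care. First, the expectation structure must be unpacked correctly: in Eq.~\ref{eq:objective-3} the sampling distribution for term $(c)$ draws $\fs \sim p(\fs)$ while $\ns, \na$ come from the dynamics and policy, so I must verify that after conditioning on a fixed query state $\fs$ the weight $w$ carries the correct nested expectation $\E_{p(\ns \mid \cs, \ca), \pi(\na \mid \ns)}$ matching the Bellman operator. Second, I must argue that optimizing the classifier parameters $\theta$ to their pointwise optimum is what the gradient update approaches in the tabular regime, so that the stop-gradient treatment of $w$ makes the update a genuine fixed-point iteration rather than a gradient of a modified objective. The main obstacle will be bookkeeping the stop-gradient semantics cleanly: I must show that treating $w$ as a constant during differentiation is exactly what makes the pointwise optimizer coincide with the Bellman target (Eq.~\ref{eq:bellman-assignment}), rather than with some self-consistent equation where differentiating through $w$ would introduce spurious terms. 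Once this equivalence is established, Lemma~\ref{lemma:c-bellman} immediately gives convergence to the Bayes-optimal classifier.
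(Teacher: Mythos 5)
Your proposal is correct and takes essentially the same route as the paper's proof: both fix a query triplet $(\cs, \ca, \fs)$, compute the pointwise minimizer of the weighted cross-entropy with $w$ held constant by the stop-gradient, arrive at the ratio assignment $\frac{C_\theta^\pi(F=1 \mid \cs, \ca, \fs)}{C_\theta^\pi(F=0 \mid \cs, \ca, \fs)} \gets (1-\gamma)\frac{p(\ns = \fs \mid \cs, \ca)}{p(\fs)} + \gamma \E[w]$, and then substitute the definition of $w$ from Eq.~\ref{eq:iw} to recover Eq.~\ref{eq:bellman-assignment}. The only cosmetic difference is that the paper phrases the minimizer as an expected soft label $\E[y \mid \cs, \ca, \fs]$ and converts to the ratio form via monotonicity of $C/(1-C)$, whereas you compute the positive-to-negative weight ratio directly; the underlying algebra is identical.
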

\begin{proof}
We start by viewing the off-policy C-learning loss (Eq.~\ref{eq:classifier-loss}) as a \emph{probabilistic} assignment equation. A given triplet $(\cs, \ca, \fs)$ can appear in Eq.~\ref{eq:classifier-loss} in two ways:
\begin{enumerate}
    \item We sample a ``positive'' $\fs = \ns$, which happens with probability $(1 - \gamma) p(\ns = \fs \mid \cs, \ca)$, and results in the label $y = 1$.
    \item We sample a ``negative'' $\fs$, which happens with probability $(1 + \gamma w)p(\fs)$ and results in the label $y = \frac{\gamma w}{\gamma w + 1}$.
\end{enumerate}
Thus, conditioned on the given triplet containing $\fs$, the expected target value $y$ is
\begin{align}
    \E[y \mid \cs, \ca, \fs] 
    &= \frac{(1 - \gamma)p(\ns = \fs \mid \cs, \ca) \cdot 1 + \E\left[ \cancel{(1 + \gamma w)}p(\fs) \cdot \frac{\gamma w}{\cancel{\gamma w + 1}} \right]}{(1 - \gamma) p(\ns = \fs \mid \cs, \ca) + \E\left[(1 + \gamma w)p(\fs) \right]} \nonumber \\
    &= \frac{(1 - \gamma)\frac{p(\ns = \fs \mid \cs, \ca)}{p(\fs)} + \gamma \E[w] }{(1 - \gamma) \frac{p(\ns = \fs \mid \cs, \ca)}{p(\fs)} + \gamma \E[w] + 1}. \label{eq:expected-y}
\end{align}
Note that $w$ is a random variable because it depends on $\ns$ and $\na$, so we take its expectation above. We can write the assignment equation for $C$ as
\begin{equation*}
    C_\theta^\pi(F = 1 \mid \cs, \ca, \fs) \gets \E[y \mid \cs, \ca, \fs].
\end{equation*}
Noting that the function $\frac{C}{1 - C}$ is strictly monotone increasing, the assignment equation is equivalent to the following assignment for the \emph{ratio}  $\frac{C_\theta^\pi(F = 1 \mid \cs, \ca, \fs)}{C_\theta^\pi(F = 0 \mid \cs, \ca, \fs)}$:
\begin{equation*}
    \frac{C_\theta^\pi(F = 1 \mid \cs, \ca, \fs)}{C_\theta^\pi(F = 0 \mid \cs, \ca, \fs)} \gets \frac{\E[y \mid \cs, \ca, \fs]}{1 - \E[y \mid \cs, \ca, \fs]} = (1 - \gamma)\frac{p(\ns = \fs \mid \cs, \ca)}{p(\fs)} + \gamma \E[w].
\end{equation*}
The equality follows from substituting Eq.~\ref{eq:expected-y} and then simplifying.
Substituting our definition of $w$, we observe that the assignment equation for off-policy C-learning is exactly the same as the assignment equation for the C-learning Bellman equation (Eq.~\ref{eq:c-bellman}):
\begin{equation*}
    \frac{C_\theta^\pi(F = 1 \mid \cs, \ca, \fs)}{C_\theta^\pi(F = 0 \mid \cs, \ca, \fs)} \gets (1 - \gamma)\frac{p(\ns = \fs \mid \cs, \ca)}{p(\fs)} + \gamma \left[ \frac{C_\theta^\pi(F = 1 \mid \ns, \na, \fs)}{C_\theta^\pi(F = 0 \mid \ns, \na, \fs)} \right].
\end{equation*}
\end{proof}

Since the off-policy C-learning assignments are equivalent to the assignments of the C-learning Bellman equation, any convergence guarantee that applies to the later must apply to the former. Thus, Lemma~\ref{lemma:c-bellman} tells us that off-policy C-learning must also converge to the Bayes-optimal classifier. We state this final result formally:
\begin{corollary}
If we use a tabular representation for the classifier, then off-policy C-learning converges to the Bayes-optimal classifier. In this case, the predicted future state density (Eq.~\ref{eq:prob-ratio}) also converges to the true future state density.
\end{corollary}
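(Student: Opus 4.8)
The plan is to chain together the three lemmas established earlier in this section, so the argument is essentially a composition rather than a fresh calculation. First I would invoke the last lemma, which shows that each off-policy C-learning update has the same effect on the learned ratio $C_\theta^\pi(F=1\mid\cs,\ca,\fs)/C_\theta^\pi(F=0\mid\cs,\ca,\fs)$ as a single application of the assignment equation~\ref{eq:bellman-assignment}. This lets me replace the stochastic sample-based update with the deterministic Bellman backup; the key observation (already carried out in that lemma's proof) is that the expected label $\E[y\mid\cs,\ca,\fs]$ arising from sampling a positive ($\fs=\ns$) versus a reweighted negative ($\fs\sim p(\fs)$) reproduces exactly the right-hand side of Eq.~\ref{eq:bellman-assignment} once pushed through the strictly monotone map $C\mapsto C/(1-C)$.

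Second, I would apply Lemma~\ref{lemma:c-bellman}, which reads Eq.~\ref{eq:bellman-assignment} as value iteration for a goal-conditioned $Q(\cs,\ca,\fs)=C_\theta^\pi(F=1\mid\cs,\ca,\fs)/C_\theta^\pi(F=0\mid\cs,\ca,\fs)$ under the reward $r_\fs(\cs,\ca)=p(\ns=\fs\mid\cs,\ca)/p(\fs)$, and therefore converges by the standard stochastic-approximation guarantee of \citet{jaakkola1994convergence}. Composing the two facts gives that off-policy C-learning converges to the fixed point of Eq.~\ref{eq:c-bellman}, which by the C-learning Bellman Equation lemma is the Bayes-optimal classifier. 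For the density claim, I would then observe that at the Bayes-optimal classifier the plug-in estimator of Eq.~\ref{eq:prob-ratio} satisfies $f_\theta^\pi(\fs\mid\cs,\ca)=p_+^\pi(\fs\mid\cs,\ca)$ by construction (this is exactly the inversion of Eq.~\ref{eq:bayes-opt}), so convergence of the classifier immediately yields convergence of the predicted future state density to the true one.

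The main obstacle I anticipate is not the chaining but verifying that the hypotheses of the borrowed Q-learning convergence theorem genuinely hold here. In particular I would need to check that the effective reward $r_\fs(\cs,\ca)=p(\ns=\fs\mid\cs,\ca)/p(\fs)$ is well-defined and bounded — which requires $p(\fs)>0$ on its support, automatic in the finite tabular case but not in general — and that the usual Robbins--Monro step-size and bounded-variance conditions on the updates are met. A secondary subtlety is that the importance weight $w$ depends on the current classifier iterate, making the target self-referential; I would address this by stressing that the stop-gradient treatment of $w$ renders each update an application of a \emph{fixed} Bellman operator to the current estimate, exactly as in ordinary Q-learning, so the contraction argument transfers unchanged.
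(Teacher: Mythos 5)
Your proposal is correct and takes essentially the same route as the paper: the paper's own proof of this corollary is exactly the chaining you describe---invoke the lemma showing off-policy C-learning updates coincide with the assignment equation (Eq.~\ref{eq:bellman-assignment}), apply Lemma~\ref{lemma:c-bellman}'s value-iteration/Jaakkola convergence guarantee to conclude convergence to the Bayes-optimal classifier, and read off the density claim from Eq.~\ref{eq:prob-ratio}. Your added scrutiny of the borrowed theorem's hypotheses (boundedness of $p(\ns = \fs \mid \cs, \ca)/p(\fs)$, positivity of $p(\fs)$, step-size conditions) is more careful than the paper, which invokes the guarantee without verifying them, but this is a refinement of the same argument rather than a different one.
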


\subsection{Goal-Conditioned C-Learning Converges}
\label{appendix:pi-proof}
In this section we prove that the version of policy improvement done by C-learning is guaranteed to improve performance. We start by noting a Bellman \emph{optimality} equation for goal-conditioned C-learning, which indicates whether a goal-conditioned policy is optimal:
\begin{lemma}[C-learning Bellman \emph{Optimality} Equation]
Let dynamics function $p(\ns \mid \cs, ca)$, and marginal distribution $p(\fs)$ be given. If a classifier $C_\theta$ is the Bayes-optimal classifier, then it satisfies the follow identity for all states $\cs$, actions $\ca$, and goals $g = \fs$:
{\footnotesize
\begin{equation}
    \frac{C_\theta^\pi(F = 1 \mid \cs, \ca, \fs)}{C_\theta^\pi(F = 0 \mid \cs, \ca, \fs)} = (1 - \gamma) \frac{p(\ns = \fs \mid \cs, \ca)}{p(\fs)} + \gamma \E_{p(\ns \mid \cs, \ca)} \left[ \max_{\na} \frac{C_\theta^\pi(F = 1 \mid \ns, \na, \fs)}{C_\theta^\pi(F = 0 \mid \ns, \na, \fs)} \right] \label{eq:c-bellman-opt}
\end{equation}}
\end{lemma}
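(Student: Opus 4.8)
The plan is to treat this as the optimality-equation counterpart of the (policy-evaluation) C-learning Bellman equation in Eq.~\ref{eq:c-bellman}, reusing the correspondence between the classifier ratio and a goal-conditioned $Q$-function, with the understanding that the superscript $\pi$ on the classifier now refers to the \emph{optimal} goal-conditioned policy $\pi^*$. First I would recall that for the Bayes-optimal classifier, dividing Eq.~\ref{eq:bayes-opt} through by $p(\fs)$ (equivalently, Eq.~\ref{eq:prob-ratio} divided by $p(\fs)$) gives the identity $\frac{C_\theta^\pi(F=1\mid\cs,\ca,\fs)}{C_\theta^\pi(F=0\mid\cs,\ca,\fs)} = \frac{p_+^\pi(\fs\mid\cs,\ca)}{p(\fs)}$. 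Hence it suffices to establish the stated recursion for the density ratio $Q^\pi(\cs,\ca,\fs) \triangleq p_+^\pi(\fs\mid\cs,\ca)/p(\fs)$, with $\max_\na$ in place of the policy expectation.

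Second, I would observe that $Q^\pi$ is exactly the value function of a discounted MDP in which, for each fixed goal $\fs$, the reward is the (policy-independent) quantity $r_\fs(\cs,\ca)=(1-\gamma)\,p(\ns=\fs\mid\cs,\ca)/p(\fs)$ and the discount is $\gamma$: this is precisely what the recursive identity in Eq.~\ref{eq:bootstrap}, divided through by $p(\fs)$, asserts. The goal-conditioned policy objective (Eq.~\ref{eq:policy-update}) is monotone increasing in $C_\theta^\pi$ and therefore in $Q^\pi$, so the optimal goal-conditioned policy commanded with goal $g=\fs$ is the policy that maximizes this return.

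Third, I would invoke standard discounted-MDP theory: for each fixed $\fs$ there exists a deterministic stationary optimal policy, and its value function $Q^*$ satisfies the Bellman optimality equation, in which the expectation over $\pi(\na\mid\ns)$ collapses to $\max_\na$ because a greedy action attains the maximum. Concretely, for the optimal policy the recursion Eq.~\ref{eq:bootstrap} becomes $p_+^{\pi^*}(\fs\mid\cs,\ca) = (1-\gamma)\,p(\ns=\fs\mid\cs,\ca) + \gamma\,\E_{p(\ns\mid\cs,\ca)}\!\left[\max_\na p_+^{\pi^*}(\fs\mid\ns,\na)\right]$. Dividing by $p(\fs)$ and substituting the Bayes-optimal classifier identity from the first step yields Eq.~\ref{eq:c-bellman-opt}, exactly mirroring the one-line substitution used to prove the preceding lemma.

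The main obstacle is the justification in the third step that the next-state policy expectation may be replaced by a maximization. The care required is twofold: (i) to note that $r_\fs$ depends only on the dynamics and the marginal $p(\fs)$, and not on $\pi$, so that for each goal $\fs$ this is a genuine fixed-reward MDP to which the Bellman optimality theorem applies; and (ii) to ensure the goal-conditioning is consistent across time steps, i.e.\ the same goal $g=\fs$ is commanded at $\ns$ as at $\cs$, so that the greedy action at $\ns$ is indeed $\argmax_\na Q^{\pi^*}(\ns,\na,\fs)$ and the optimal policy acts greedily everywhere. Once this is established, the remaining algebra is the routine substitution described above.
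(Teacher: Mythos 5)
Your proof is correct, but there is essentially nothing in the paper to compare it against: the paper states this lemma without proof (it is introduced with ``We start by noting a Bellman \emph{optimality} equation \ldots'') and proceeds directly to the policy-improvement lemma. Your argument supplies precisely the reasoning the paper leaves implicit, and it is consistent with the machinery the paper does build: the proof of Lemma~\ref{lemma:c-bellman} already identifies the ratio $C_\theta^\pi(F = 1 \mid \cs, \ca, \fs)/C_\theta^\pi(F = 0 \mid \cs, \ca, \fs)$ with a goal-conditioned Q-function for the policy-independent reward $r_\fs(\cs,\ca) = p(\ns = \fs \mid \cs, \ca)/p(\fs)$, and your second step is the same identification (your inclusion of the $(1-\gamma)$ factor in the reward makes the correspondence with the recursion in Eq.~\ref{eq:bootstrap} exact, whereas the paper treats it as a scaling). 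Invoking the Bellman optimality equation for this per-goal MDP, together with the Bayes-optimal-classifier identity from Eq.~\ref{eq:prob-ratio}, then yields Eq.~\ref{eq:c-bellman-opt}. Two of your observations deserve emphasis. First, your reading of the superscript $\pi$ as the \emph{optimal} goal-conditioned policy $\pi^*$ repairs a genuine imprecision in the lemma as stated: for a generic policy the Bayes-optimal classifier satisfies only the evaluation equation (Eq.~\ref{eq:c-bellman}), with an expectation over $\pi(\na \mid \ns)$ rather than $\max_{\na}$, so the hypothesis ``Bayes-optimal classifier'' alone cannot imply the stated identity without this reading. Second, your appeal to the existence of a deterministic stationary optimal policy (and to the optimality equation itself) requires bounded rewards; since $r_\fs$ is a density ratio, this is guaranteed in the tabular setting but not in general continuous state spaces, so the claim should be stated with that restriction --- which is in any case the regime in which the paper's appendix establishes all of its convergence results.
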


We now apply the standard policy improvement theorem to C-learning.
\begin{lemma}
If the estimate of the future state density is accurate, then updating the policy according to Eq.~\ref{eq:policy-update} guarantees improvement at each step.
\end{lemma}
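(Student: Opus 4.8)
The plan is to reduce this claim to the classical policy improvement theorem, applied separately for each fixed goal to an ordinary MDP whose reward does not depend on the policy. The key observation, already established in the proof of Lemma~\ref{lemma:c-bellman}, is that for a fixed goal $g$ the ratio $Q^\pi(\cs, \ca, g) \triangleq C_\theta^\pi(F = 1 \mid \cs, \ca, g) / C_\theta^\pi(F = 0 \mid \cs, \ca, g)$ is a genuine state-action value function for the policy-independent reward $r_g(\cs, \ca) = p(\ns = g \mid \cs, \ca)/p(g)$. The accuracy assumption guarantees that $C_\theta$ is Bayes-optimal, so by Eq.~\ref{eq:prob-ratio} this ratio equals $p_+^\pi(\fs = g \mid \cs, \ca, g)/p(g)$ and satisfies the Bellman consistency equation (Eq.~\ref{eq:c-bellman}) exactly. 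Writing $V^\pi(\cs, g) \triangleq \E_{\pi(\ca \mid \cs, g)}[Q^\pi(\cs, \ca, g)]$, ``improvement'' then means $V^{\pi'}(\cs, g) \ge V^\pi(\cs, g)$ for all $\cs, g$.

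Next I would connect the policy objective (Eq.~\ref{eq:policy-update}) to greedy maximization of this $Q^\pi$. Because both $C \mapsto \log C$ and $C \mapsto C/(1-C)$ are strictly monotone increasing on $(0,1)$, we have $\argmax_\ca \log C_\theta^\pi(F = 1 \mid \cs, \ca, g) = \argmax_\ca Q^\pi(\cs, \ca, g)$. Hence the updated policy $\pi' = \argmax_\phi \gG(\phi)$ places its mass on $\argmax_\ca Q^\pi(\cs, \ca, g)$, yielding the one-step improvement inequality $\E_{\pi'(\ca \mid \cs, g)}[Q^\pi(\cs, \ca, g)] = \max_\ca Q^\pi(\cs, \ca, g) \ge V^\pi(\cs, g)$ for every $\cs$ and $g$.

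Finally I would promote this one-step inequality to a global guarantee by the standard telescoping argument: expand $V^\pi$ via the Bellman equation, substitute the one-step inequality at the successor state, and iterate. Since $\gamma < 1$ and $r_g$ is fixed, the remainder vanishes and we obtain $V^\pi(\cs, g) \le V^{\pi'}(\cs, g)$ for all $\cs, g$, with equality exactly at a fixed point of the Bellman optimality equation (Eq.~\ref{eq:c-bellman-opt}). Because each fixed-$g$ slice is an ordinary MDP with policy-independent reward, this is precisely the textbook theorem.

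The main obstacle is the second step. The objective maximizes the expected log-classifier rather than $\E_\pi[Q^\pi]$ directly, so I must argue that the monotone reparametrization preserves the relevant argmax and, crucially, that the improvement inequality holds for the policy actually produced by Eq.~\ref{eq:policy-update}. The clean route is to take $\pi'$ to be the exact maximizer over an unrestricted policy class, which is deterministic and greedy in $Q^\pi$, making the inequality immediate; handling a restricted or merely gradient-updated $\pi_\phi$ would require the additional and more delicate check that any increase in $\gG$ also increases $\E_\pi[Q^\pi]$. A secondary point is that the accuracy assumption is exactly what makes $Q^\pi$ satisfy the Bellman equation, validating the telescoping in the third step.
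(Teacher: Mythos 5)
Your proposal is correct and takes essentially the same route as the paper's proof: identify the updated policy with greedy action selection with respect to the (accurately estimated) future state density, then run the standard policy-improvement telescoping argument, unrolling the Bellman recursion (Eq.~\ref{eq:bootstrap}) and swapping $\pi$ for $\pi'$ one step at a time. The only cosmetic differences are that you telescope on the ratio $C_\theta^\pi(F=1 \mid \cs, \ca, \fs)/C_\theta^\pi(F=0 \mid \cs, \ca, \fs) = p_+^\pi(\fs \mid \cs, \ca)/p(\fs)$, viewed as a per-goal Q-function for the reward $p(\ns = \fs \mid \cs, \ca)/p(\fs)$, while the paper telescopes directly on $p_+^\pi$ (the two differ by the constant factor $p(\fs)$ for each fixed goal), and that you make explicit the monotonicity bridge between Eq.~\ref{eq:policy-update} and greediness, which the paper states in Section~\ref{sec:complete} rather than inside the proof itself.
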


\begin{proof}
We use $\pi$ to denote the current policy and $\pi'$ to denote the policy that acts greedily w.r.t. the current density function:
\begin{equation*}
    \pi'(\ca \mid \cs, \fs) = \mathbbm{1}(\ca = \argmax_a p^\pi(\fs \mid \cs, \ca))
\end{equation*}
 The proof is quite similar to the standard policy improvement proof for Q-learning.
\begin{align*}
    p^\pi(\fs \mid \cs) &= \E_{\pi(\ca \mid \cs, \fs)}[p^\pi(\fs \mid \cs, \ca)] \\
    & = \E_{\pi(\ca \mid \cs, \fs)}[(1 - \gamma) p(\ns = \fs \mid \cs, \ca) + \gamma p^\pi(\fs \mid \ns, \na)] \\
    & \le \E_{\color{blue}{\mathbf{\pi'}}(\ca \mid \cs, \fs)}[(1 - \gamma) p(\ns = \fs \mid \cs, \ca) + \gamma p^\pi(\fs \mid \ns, \na)] \\
    & = \E_{\pi'(\ca \mid \cs, \fs)} [(1 - \gamma) p(\ns = \fs \mid \cs, \ca) \\
    & \hspace{1em} + \E_{\substack{p(\ns \mid \cs, \ca),\\\pi(\na \mid \ns, \fs)}}[ \gamma ((1 - \gamma) p(\ns = \fs \mid \ns, \na) + \gamma p^\pi(\fs \mid s_{t+2}, a_{t+2}))] ] \\
    & \le \E_{\pi'(\ca \mid \cs, \fs)} [(1 - \gamma) p(\ns = \fs \mid \cs, \ca) \\
    & \hspace{1em} + \E_{\substack{p(\ns \mid \cs, \ca),\\ \color{blue}{\mathbf{\pi'}}(\na \mid \ns, \fs)}}[ \gamma ((1 - \gamma) p(\ns = \fs \mid \ns, \na) + \gamma p^\pi(\fs \mid s_{t+2}, a_{t+2}))] ] \\
    & \cdots \\
    & \le p^{\pi'}(\fs \mid \cs)
\end{align*}
\end{proof}
Taken together with the convergence of off-policy C-learning, this proof guarantees that goal-conditioned C-learning converges to the optimal goal-reaching policy in the tabular setting.

\section{Mixing TD C-learning with MC C-learning}
\label{appendix:future-positives}
\label{appendix:futurePositives}
Recall that the main challenge in constructing an off-policy procedure for learning the classifier was getting samples from the future state distribution of a \emph{new} policy. Recall that TD C-learning (Alg.~\ref{alg:rl}) uses importance weighting to estimate expectations under this new distribution, where the importance weights are computed using the learned classifier. However, this approach can result in high-variance, especially when the new policy has a future state distribution that is very different from the background distribution. In this section we describe how to decrease the variance of this importance weighting estimator at the cost of increasing bias.

The main idea is to combine TD C-learning with MC C-learning. We will modify off-policy C-learning to also use samples $\hat{p}(\fs \mid \cs, \ca)$ from \emph{previous policies} as positive examples. These samples will be sampled from trajectories in the replay buffer, in the same way that samples were generated for MC C-learning. We will use a mix of these on-policy samples (which are biased because they come from a different policy) and importance-weighted samples (which may have higher variance). Weighting the TD C-learning estimator by $\lambda$ and the MC C-learning estimator by $(1 - \lambda)$, we get the following objective:
\begin{align*}
    & \lambda \E_{p(\ns \mid \cs, \ca), p(\fs)}[(1 - \gamma) \log C(F = 1 \mid \cs, \ca, \ns) + \log C(F = 0 \mid \cs, \ca, \fs) \\
    & \hspace{8em} + \gamma w \log C(F = 1 \mid \cs, \ca, \fs) \\
    & + (1 - \lambda) \E_{\hat{p}(\widehat{\fs} \mid \cs, \ca), p(\fs)}[\log C(F = 1 \mid \cs, \ca, \widehat{\fs}) + \log C(F = 0 \mid \cs, \ca, \fs)]
\end{align*}
This method is surprisingly easy to implement. Given a batch of $B$ transitions $(\cs, \ca)$, we label $\frac{\lambda}{2}B$ with the next state $\ns$, $\frac{1}{2}B$ with a random state $\fs \sim p(\fs)$, and $\frac{1 - \lambda}{2}B$ with a state sampled from the empirical future state distribution $\hat{p}(\fs \mid \cs, \ca)$.  To make sure that each term in the loss above receives the correct weight, we scale each of the terms by the inverse sampling probability:
\begin{align*}
    \text{(Next states): } \qquad & \frac{2}{\cancel{\lambda} B} (\cancel{\lambda} (1 - \gamma) \log C(F = 1 \mid \cs, \ca, \ns) \\
    \text{(Random states): } \qquad & \frac{2}{B} \left( (\cancel{\lambda} + 1 - \cancel{\lambda}) \log C(F = 0 \mid \cs, \ca, \fs) + \lambda \gamma w \log C(F = 1 \mid \cs, \ca, \fs) \right) \\
    \text{(Future states): } \qquad & \frac{2}{\cancel{(1 - \lambda)}B} \cancel{(1 - \lambda)} \log C(F = 1 \mid \cs, \ca, \fs) \\
\end{align*}
Without loss of generality, we scale each term by $\frac{B}{2}$. Since each of these terms is a cross entropy loss, we can simply implement this loss as a weighted cross entropy loss, where the weights and labels are given in the table below.
\begin{table}[H]
    \centering
    \begin{tabular}{c|c|c|c}
         & Fraction of batch & Label & Weight\\ \hline
        Next states & $\frac{\lambda}{2}$ & $1$ & $1 - \gamma$\\
        Future states & $\frac{1 - \lambda}{2}$ & $1$ & $1$\\
        Random states & $\frac{1}{2}$ & $\frac{\lambda \gamma w}{1 + \lambda \gamma w}$ & $(1 + \lambda \gamma w)$
    \end{tabular}
\end{table}

On many tasks, we observed that this approach performed no differently than TD C-learning. However, we found this strategy to be crucial for learning some of the sawyer manipulation tasks. In our experiments we used $\lambda = 0.6$ for the Sawyer Push and Sawyer Drawer tasks, and used $\lambda = 1$ (i.e., pure TD C-learning) for all other tasks.

\section{Additional Experiments}
\label{appendix:additional-experiments}

\begin{figure}[t]
    \centering
    \vspace{-1em}
    \begin{subfigure}{0.5\textwidth}
        \includegraphics[width=\linewidth]{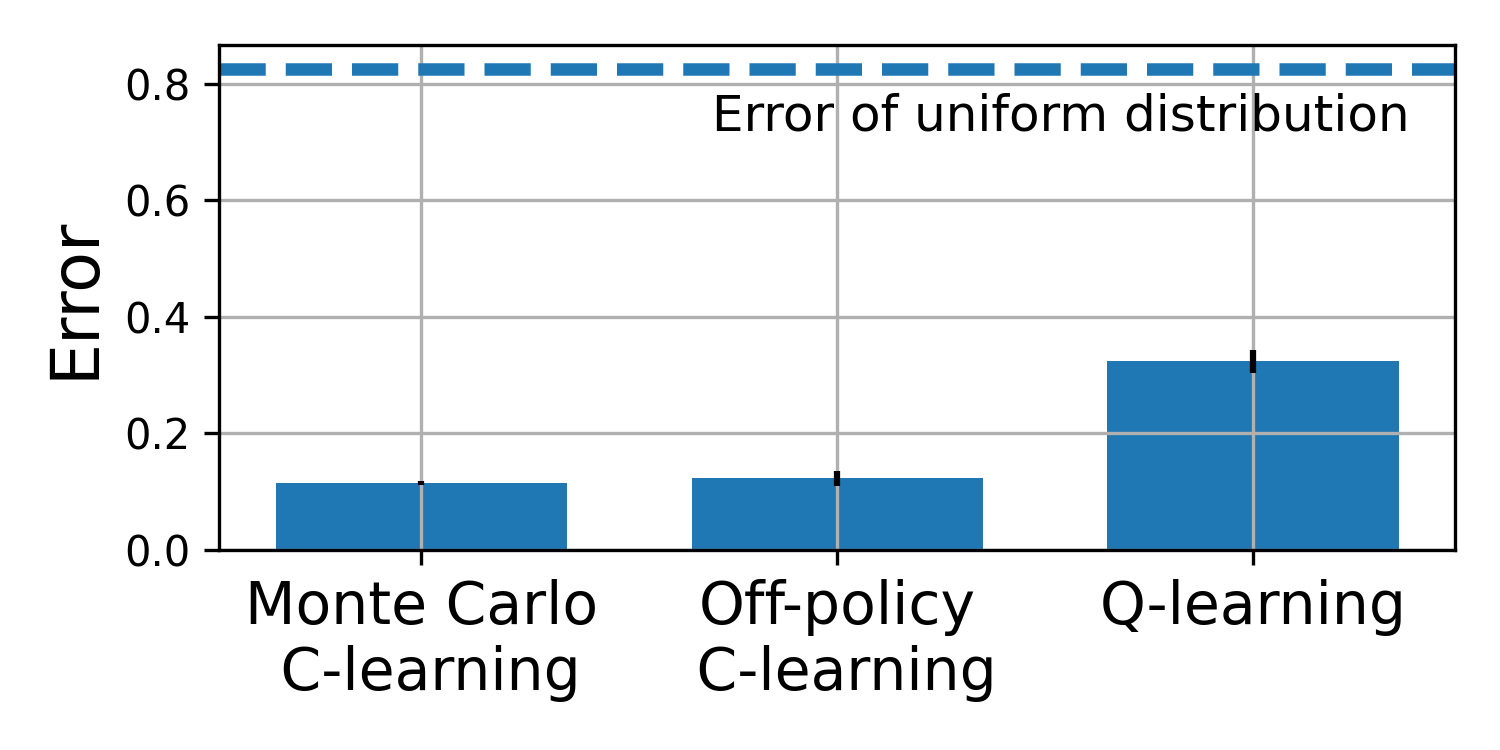}
        \caption{On Policy\label{fig:on-policy}}
    \end{subfigure}%
    \begin{subfigure}{0.5\textwidth}
        \includegraphics[width=\linewidth]{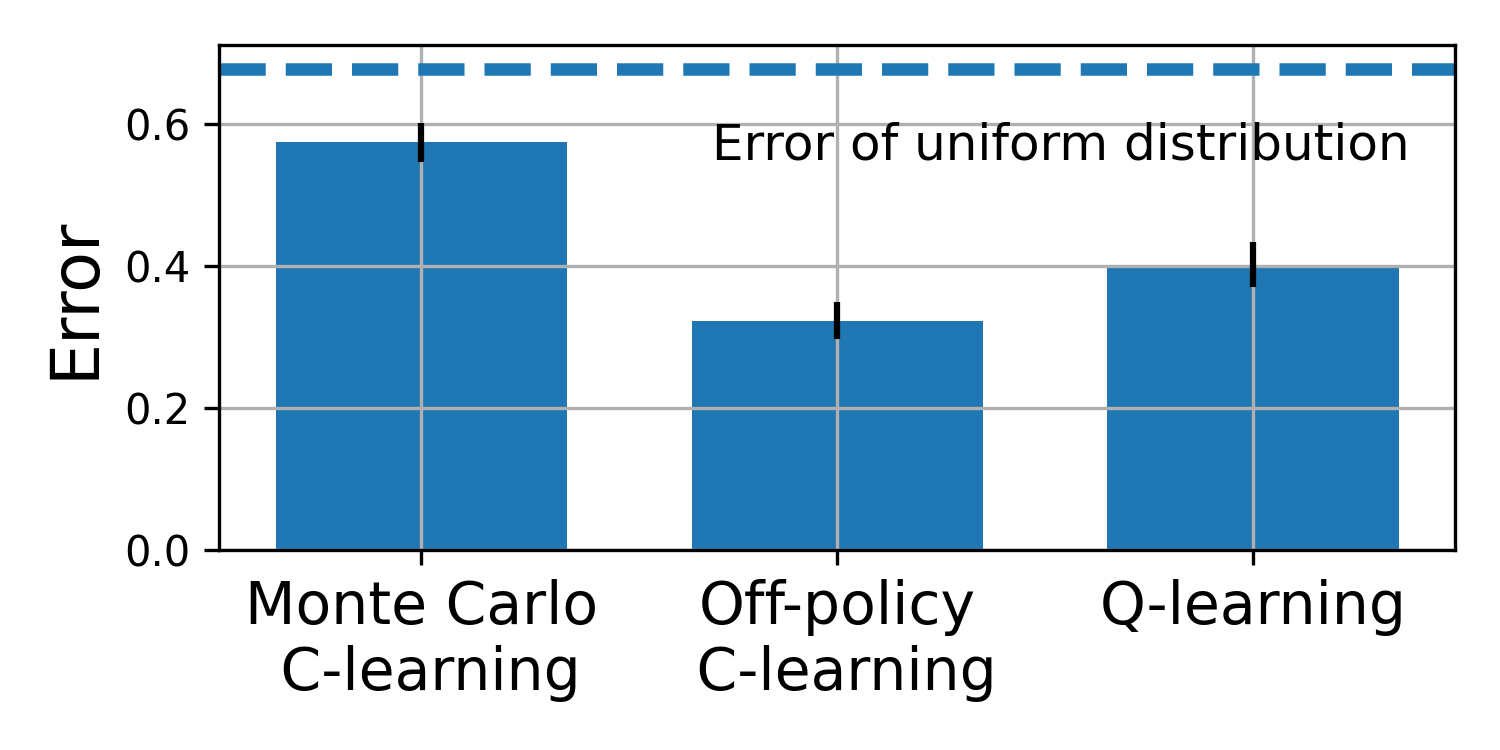}
        \caption{Off Policy\label{fig:off-policy}}
    \end{subfigure}%
    \caption{{\footnotesize We use C-learning and Q-learning to predict the future state distribution. \figright \; In the on-policy setting, both the Monte Carlo and TD versions of C-learning achieve significantly lower error than Q-learning. \figright \; In the off-policy setting, the TD version of C-learning achieves lower error than Q-learning, while Monte Carlo C-learning performs poorly, as expected.}}
    \label{fig:gridworld}
\end{figure}

\paragraph{Comparing C-learning and C-learning for Future State Density Estimation}
Fig.~\ref{fig:gridworld} shows the results of our comparison of C-learning and Q-learning on the ``continuous gridworld'' environment, in both the on-policy and off-policy setting. In both settings, off-policy C-learning achieves lower error than Q-learning. As expected, Monte Carlo C-learning performs well in the on-policy setting, but poorly in the off-policy setting, motivating the use of off-policy C-learning.

\begin{figure}[h]
    \centering
    \includegraphics[height=7.em]{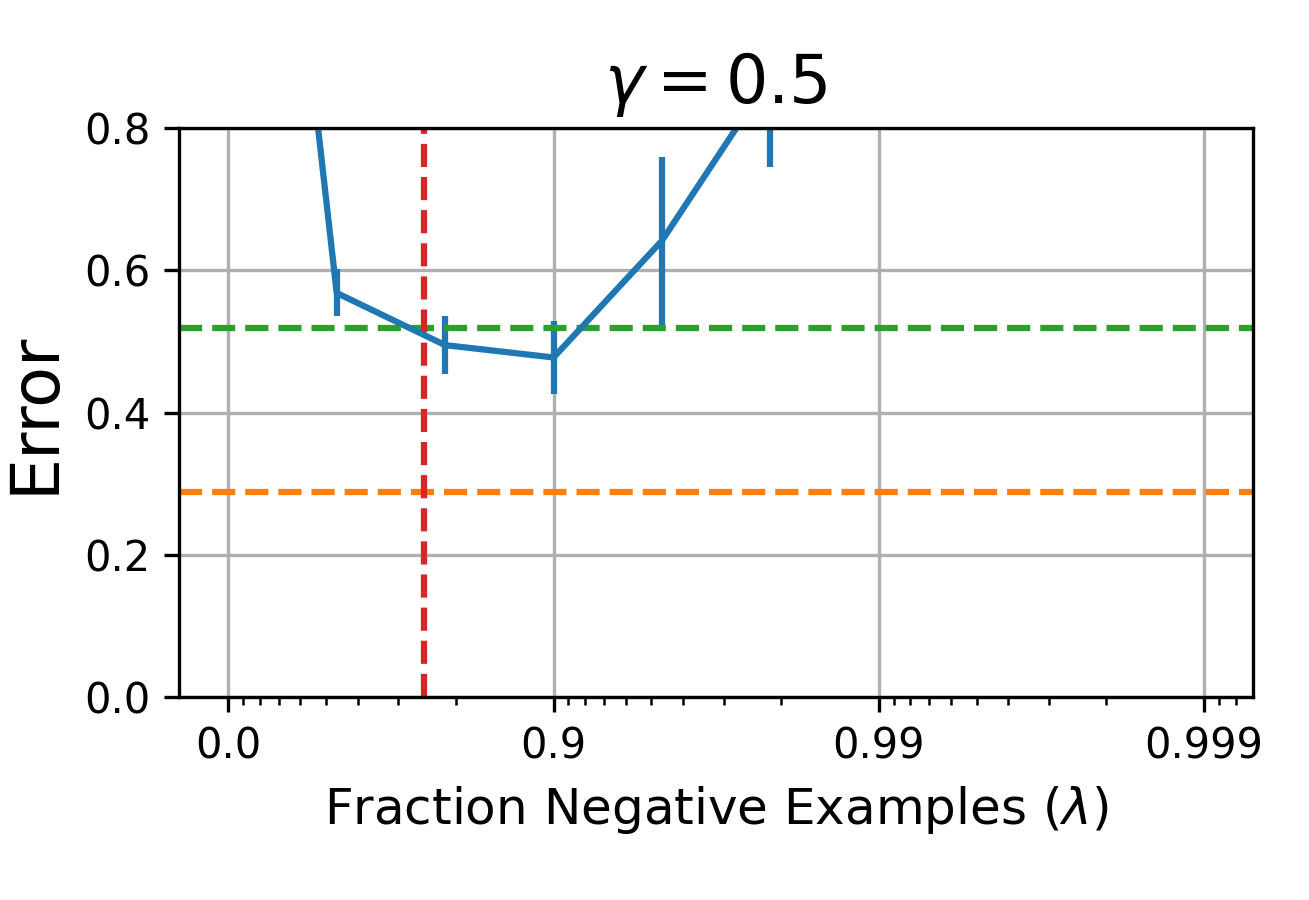}
    \includegraphics[height=7.em]{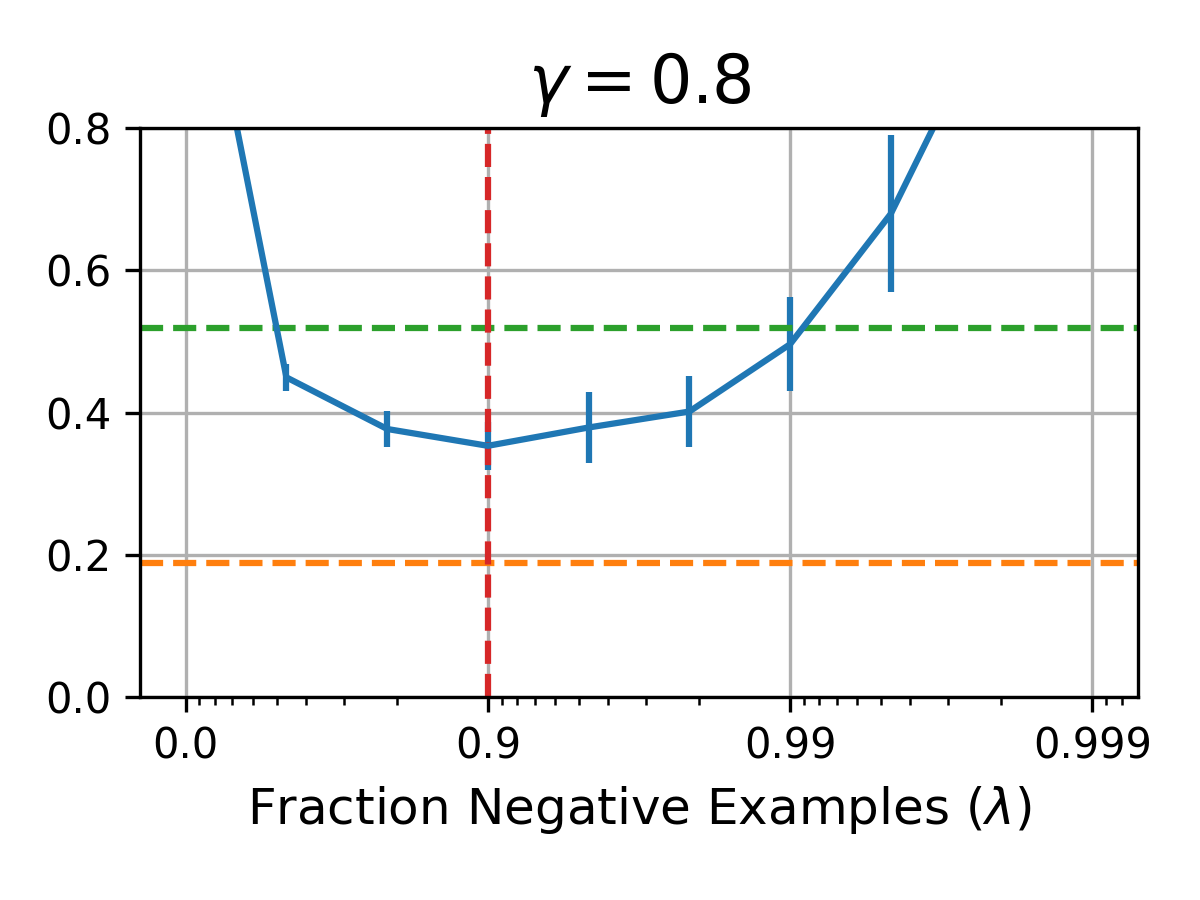}
    \includegraphics[height=7.em]{figures/classifier_prior_0.9.png}
    \includegraphics[height=7.em]{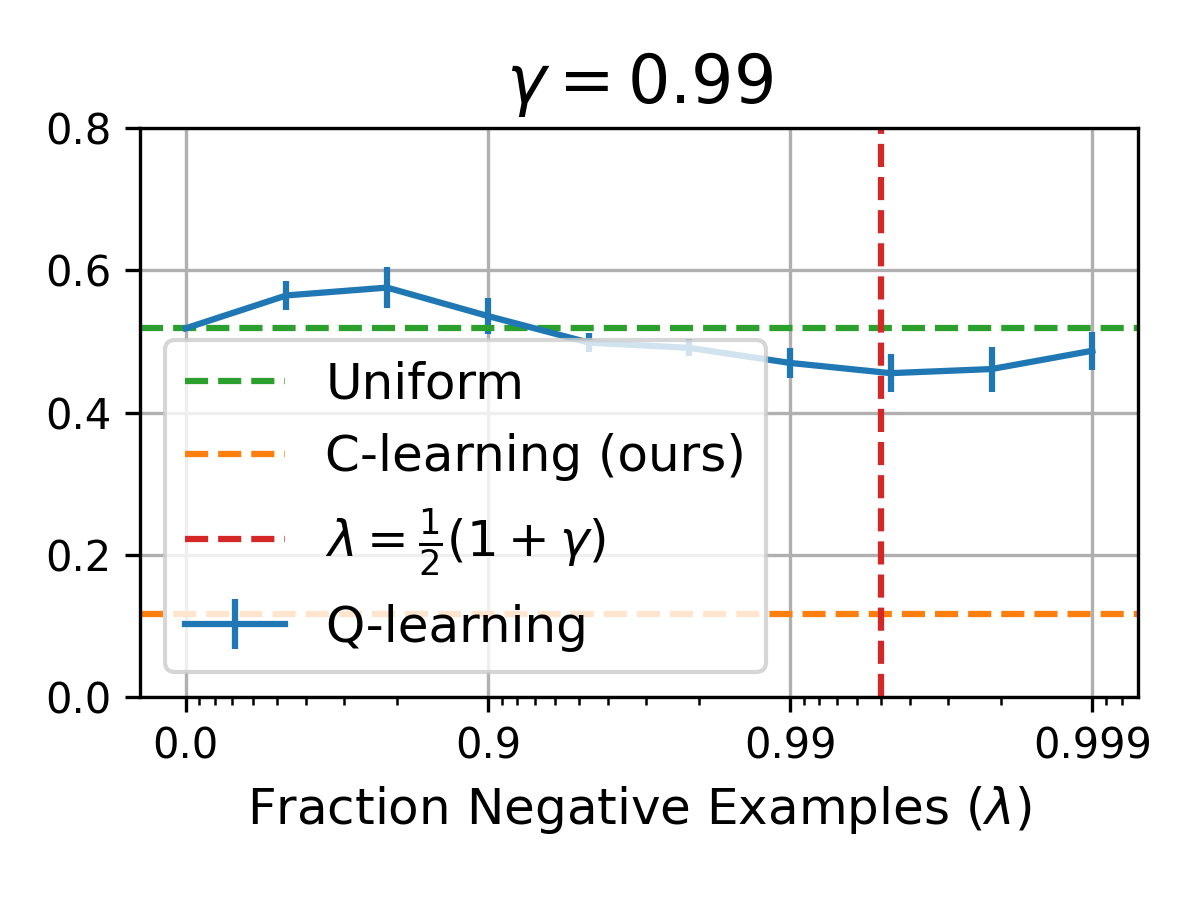}
    \caption{The performance of Q-learning (blue line) is sensitive to the relabeling ratio. Our analysis accurately predicts that the optimal relabeling ratio is approximately $\lambda = \frac{1}{2}(1 + \gamma)$. Our method, C-learning, does not require tuning this ratio, and outperforms Q-learning, even with the relabeling ratio for Q-learning is optimally chosen. \label{fig:prior-full}}
\end{figure}

\paragraph{Additional Results on Predicting the Goal Sampling Ratio}
To further test Hypothesis~\ref{pred:ratio}, we repeated the experiment from Fig.~\ref{fig:prior} across a range of values for $\gamma$. As shown in Fig.~\ref{fig:prior-full}, our Hypothesis accurately predicts the optimal goal sampling ratio across a wide range of values for $\gamma$.

\section{Experimental Details}
\label{appendix:experiments}

\subsection{``Continuous Gridworld'' Experiments}
\label{appendix:gridworld}

\paragraph{The ``Continuous Gridworld'' Environment} Our first set of experiments aimed to compare the predictions of Q-learning and C-learning to the true future state density. We carefully chose the environment to conduct this experiment. We want it to have stochastic dynamics and a continuous state space, so that the true Q function for the indicator reward is 0. On the other hand, to evaluate our hypotheses, we want to be able to analytically compute the true future state density. Thus, we use a modified $5 \times 5$ gridworld environment where the agent observes a noisy version of the current state. Precisely, when the agent is in position $(i, j)$, the agent observes $(i + \epsilon_i, j + \epsilon_j)$ where $\epsilon_i, \epsilon_j \sim \text{Unif}[-0.5, 0.5]$. Note that the observation uniquely identifies the agent's position, so there is no partial observability. We can analytically compute the exact future state density function by first computing the future state density of the underlying gridworld and noting that the density is uniform within each cell (see Appendix~\ref{appendix:markov-chain}).
We generated a tabular policy by sampling from a Dirichlet(1) distribution, and sampled 100 trajectories of length 100 from this policy.

\paragraph{On-Policy Experiment}
We compare C-learning (Algorithms~\ref{alg:mc} and~\ref{alg:bootstrap}) to Q-learning in the on-policy setting, where we aim to estimate the future state density function of the same policy that collected the dataset. This experiment aims to answer whether C-learning and Q-learning solve the future state density estimation problem (Def.~\ref{def:problem}). Each of the algorithms used a 2 layer neural network with a hidden layer of size 32, optimized for 1000 iterations using the Adam optimizer with a learning rate of 3e-3 and a batch size of 256. C-learning and Q-learning with hindsight relabeling differ only in their loss functions, and the fact that C-learning acquires a classifier while Q-learning predicts a continuous Q-value. After training with each of the algorithms, we extracted the estimated future state distribution $f_\theta^\pi(\fs \mid \cs, \ca)$ using Eq.~\ref{eq:prob-ratio}. We also normalized the predicted distributions $f_\theta^\pi(\fs \mid \cs, \ca)$ to sum to 1 (i.e., $\sum_{s_{t+}} f_\theta^\pi(\fs = s_{t+} \mid \cs, \ca) = 1 \; \forall \cs, \ca$). For evaluation, we computed the KL divergence with the true future state distribution. We show results in Fig.~\ref{fig:on-policy}.

\paragraph{Off-Policy Experiment}
We use the same ``continuous gridworld'' environment to compare C-learning to Q-learning in the off-policy setting, where we want to estimate the future state distribution of a policy that is different from the behavior policy. Recall that the motivation for deriving the bootstrapping version of the classifier learning algorithm was precisely to handle this setting.
To conduct this experiment, we generated two tabular policies by sampling from a Dirichlet(1) distribution, using one policy for data collection and the other for evaluation. We show results in Fig.~\ref{fig:off-policy}.

\paragraph{Experiment Testing Hypothesis~\ref{pred:underestimate} (Fig.~\ref{fig:normalization})}
For this experiment, we found that using a slightly larger network improved the results of both methods, so we increased hidden layer size from 32 to~256.

\paragraph{Experiment Testing Hypothesis~\ref{pred:ratio} (Fig.~\ref{fig:prior}, Fig.~\ref{fig:prior-full})}
To test this hypothesis, we used the Q-learning objective in Eq.~\ref{eq:q-learning-loss}, where $\lambda$ represents probability of sampling a random state as $\fs$. Following prior work~\citep{fu2019diagnosing}, we reweight the loss function instead of changing the actual sampling probabilities, thereby avoiding additional sampling error. 
We ran this experiment in the on-policy setting, using 5 random seeds for each value of $\lambda$. For each trial, we normalized the predicted density function to sum to 1 and then computed the KL divergence with the true future state density~function.

\subsection{Prediction Experiments on Mujoco Locomotion Tasks}
\label{appendix:prediction-hparams}

\begin{figure}[t]
    \vspace{-3em}
    \begin{subfigure}[b]{0.3\textwidth}
        \includegraphics[width=\linewidth]{figures/prediction_walker.png}
        \caption{Walker2d-v2 \label{fig:predictions-1}}
    \end{subfigure}
    \begin{subfigure}[b]{0.3\textwidth}
        \includegraphics[width=\linewidth]{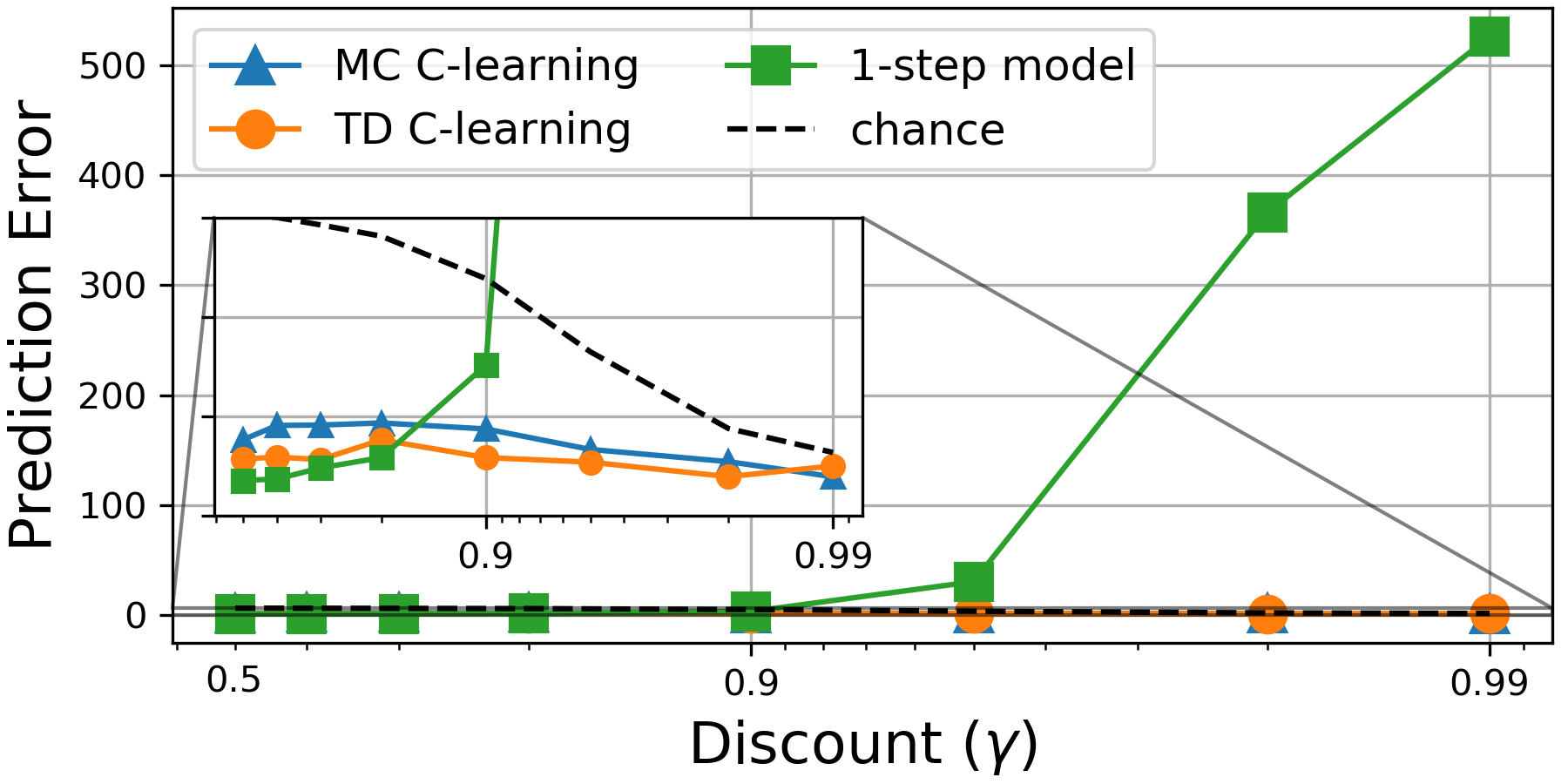}
        \caption{Hopper-v2}
    \end{subfigure}
    \begin{subfigure}[b]{0.3\textwidth}
        \includegraphics[width=\linewidth]{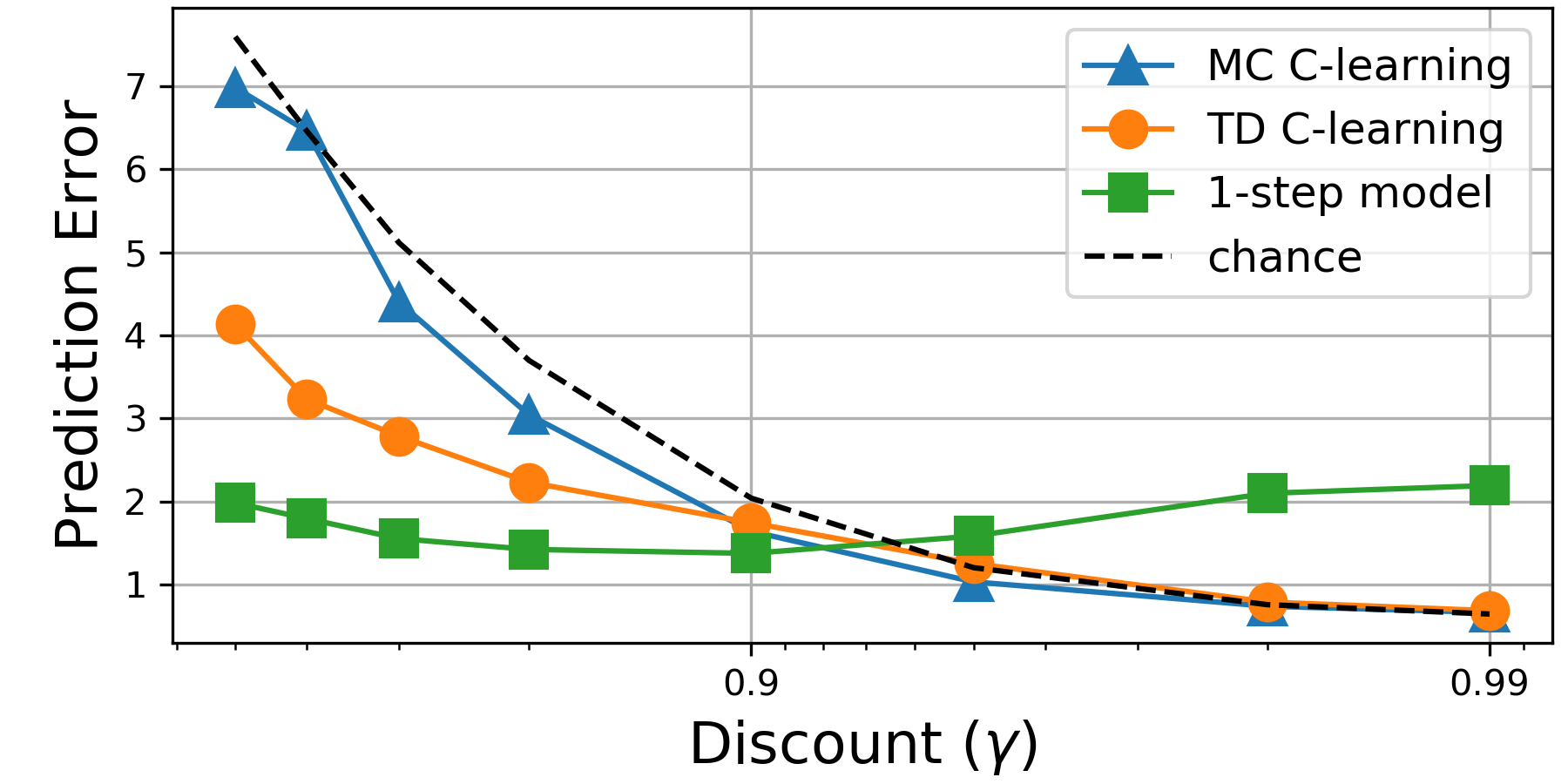}
        \caption{Ant-v2 \label{fig:predictions-3}}
    \end{subfigure}
    \caption{\textbf{Predicting the Future with C-Learning}: We ran the experiment from Fig.~\ref{fig:predictions} on three locomotions tasks. \label{fig:more-predictions}}
\end{figure}

In this section we provide details for the prediction experiment discussed in Sec.~\ref{sec:experiments} and plotted in Fig.~\ref{fig:predictions} and Fig.~\ref{fig:more-predictions}.

We first describe how we ran the experiment computing the prediction error in Fig.~\ref{fig:predictions-1}-~\ref{fig:predictions-3}. For these experiments, we used the ``expert'' data provided for each task in~\citet{fu2020d4rl}. We split these trajectories into train (80\%) and test (20\%) splits. All methods (MC C-learning, TD C-learning, and the 1-step dynamics model) used the same architecture (one hidden layer of size 256 with ReLU activation), but C-learning output a binary prediction whereas the 1-step dynamics model output a vector with the same dimension as the observations. While we could have used larger networks and likely learned more accurate models, the aim of this experiment is to compare the relative performance of C-learning and the 1-step dynamics model when they use similarly-expressive networks. We trained MC C-learning and the 1-step dynamics model for 3e4 batches and trained the TD C-learning model for just 3e3 batches (we observed overfitting after that point). For TD C-learning, we clipped $w$ to lie in $[0, 20]$. To evaluate each model, we randomly sampled a 1000 state-action pairs from the validation set and computed the average MSE with the empirical expected future state. We computed the empirical expected future state by taking a geometric-weighted average of the next 100 time steps. We computed the predictions from the 1-step model by unrolling the model for 100 time steps and taking the geometric-weighted average of these predictions. To compute predictions from the classifier, we evaluate the classifier at 1000 randomly-sampled state-action pairs (taken from the same dataset), convert these predictions to importance weights using Eq.~\ref{eq:prob-ratio}, normalize the importance weights to sum to 1, and then take the weighted average of the randomly-sampled states.

To visualize the predictions from C-learning (Fig.~\ref{fig:predictions-rollout}), we had to modify these environments to include the global $X$ coordinate of the agent in the observation. We learned policies for solving these modified tasks by running the SAC~\citep{haarnoja2018soft} implementation in~\citep{guadarrama2018tf} using the default hyparameters. We created a dataset of 1e5 transitions for each environment. We choose the maximum horizon for each task based on when the agent ran out of the frame (200 steps for \texttt{HalfCheetah-v2}, 1000 steps for \texttt{Hopper-v2}, 1000 steps for \texttt{Walker2d-v2}, and 300 steps for \texttt{Ant-v2}). We then trained TD C-learning on each of these datasets. The hyperparameters were the same as before, except that we trained for 3e4 batches. We evaluated the classifier at 1000 randomly-sampled state-action pairs taken from a separate validation set, computed the importance weights as before, and then took the weighted average of the rendered images of each of these 1000 randomly-sampled states. We normalized the resulting images to be in [0, 255].

\subsection{Goal-Conditioned RL Experiments}

\begin{figure}[t]
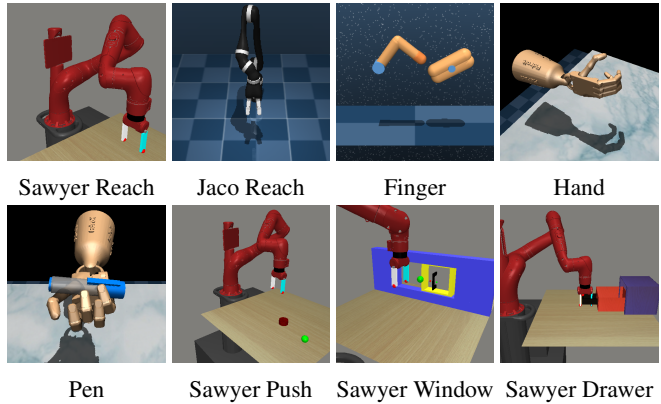

    \centering
    \begin{subfigure}[t]{0.15\textwidth}
    \includegraphics[width=\linewidth]{figures/sawyer_reach.png}
    \caption*{Sawyer Reach}
    \end{subfigure}
    \begin{subfigure}[t]{0.15\textwidth}
    \includegraphics[width=\linewidth]{figures/jaco_reach.png}
    \caption*{Jaco Reach}
    \end{subfigure}
    \begin{subfigure}[t]{0.15\textwidth}
    \includegraphics[width=\linewidth]{figures/finger.png}
    \caption*{Finger}
    \end{subfigure}
    \begin{subfigure}[t]{0.15\textwidth}
    \includegraphics[width=\linewidth]{figures/hand.png}
    \caption*{Hand}
    \end{subfigure} \\
    \begin{subfigure}[t]{0.15\textwidth}
    \includegraphics[width=\linewidth]{figures/pen.png}
    \caption*{Pen}
    \end{subfigure}
   \begin{subfigure}[t]{0.15\textwidth}
    \includegraphics[width=\linewidth]{figures/sawyer_push.png}
    \caption*{Sawyer Push}
    \end{subfigure}
    \begin{subfigure}[t]{0.15\textwidth}
    \includegraphics[width=\linewidth]{figures/sawyer_window.png}
    \caption*{Sawyer Window}
    \end{subfigure}
    \begin{subfigure}[t]{0.15\textwidth}
    \includegraphics[width=\linewidth]{figures/drawer.png}
    \caption*{Sawyer Drawer}
    \end{subfigure}
    \caption{Continuous Control Environments}
    \label{fig:environments}
\end{figure}

In this section we describe the continuous control tasks used in our experiments (shown in Fig.~\ref{fig:environments}), as well as the hyperparameters used in our implementation of goal-conditioned C-learning. One important detail is that we used a subset of the state coordinates as the goal in many tasks. For example, in the Jaco Reach task, we used just the joint angles, not the joint velocities, as the goal. When C-learning is only conditioned on a subset of the state coordinates, it estimates the \emph{marginal} future state distribution over just those coordinates. Unless otherwise mentioned, environments used the default episode length. Code will be released.
\begin{itemize}[left=1em]
    \item \textbf{Jaco Reach} This task is based on the manipulation environment in ~\citet{tassa2020dm_control}. We sampled goals by resetting the environment twice, using the state after the first reset as the goal. For this task we used just the position, not the velocity, of the joint angles as the goal.
    \item \textbf{Finger} This task is based on the \texttt{Finger} task in~\citet{tassa2018deepmind}. We sampled goals by resetting the environment twice, using the state after the first reset as the goal. For this task we used the position of the spinner as the goal.
    \item \textbf{Hand} This task is a modified version of the \texttt{door-human-v0} task in~\citet{fu2020d4rl}, but modified to remove the door, so just the hand remains. We sampled goals by taking 50 random actions, recording the current state as the goal, and then resetting the environment. This task used the entire state as the goal. Episodes had length 50.
    \item \textbf{Pen} This task is based on the \texttt{pen-human-v0} task in~\citet{fu2020d4rl}. We sampled goals by randomly choosing a state from a dataset of human demonstrations provided in~\citet{fu2020d4rl}. For this task we used the position (but not orientation) of the pen as the goal.
    \item \textbf{Sawyer Reach} This task is based on the \texttt{SawyerReachXYZEnv-v0} environment from~\citet{yu2020meta}. We used the default goal sampling method, and used the end effector position as the goal. Episodes had length 50.
    \item \textbf{Sawyer Push} This task is based on the \texttt{SawyerReachPushPickPlaceEnv-v0} environment from~\citet{yu2020meta}. We sampled goals uniformly $s_{\text{xyz}} \sim \text{Unif}([-0.1, 0.1] \times [0.5, 0.9] \times [0.015, 0.015])$, using the same goal for the arm and puck. Episodes had length 150.
    \item \textbf{Sawyer Window} This task is based on the \texttt{SawyerWindowCloseEnv-v0} environment from~\citet{yu2020meta}. We randomly sampled the initial state and goal state uniformly from the possible positions of the window inside the frame. The goal for the arm was set to be the same as the goal for the window. Episodes had length 150.
    \item \textbf{Sawyer Drawer} This task is based on the \texttt{SawyerDrawerOpenEnv-v0} environment from~\citet{yu2020meta}.
    We randomly sampled the initial state and goal state uniformly from the feasible positions of the drawer. The goal for the arm was set to be the same as the goal for the window. Episodes had length 150.
\end{itemize}

Our implementation of C-learning is based on the TD3 implementation in~\citet{guadarrama2018tf}. We learned a stochastic policy, but (as expected) found that all methods converged to a deterministic policy. We list the hyperparameter below, noting that almost all were taken without modification from the SAC implementation in~\citet{guadarrama2018tf} (the SAC implementation has much more reasonable hyperparameters):
\begin{itemize}[left=1em]
    \item Actor network: 2 fully-connected layers of size 256 with ReLU activations.
    \item Critic network: 2 fully-connected layers of size 256 with ReLU activations.
    \item Initial data collection steps: 1e4
    \item Replay buffer size: 1e6
    \item Target network updates: Polyak averaging at every iteration with $\tau = 0.005$
    \item Batch size: 256. We tried tuning this but found no effect.
    \item Optimizer: Adam with a learning rate of 3e-4 and default values for $\beta$
    \item Data collection: We collect one transition every one gradient step.
\end{itemize}
When training the policy (Eq.~\ref{eq:policy-update}), we used the same goals that we sampled for the classifier, with 50\% being the immediate next state and 50\% being random states.
The most sensitive hyperparameter was the discount, $\gamma$. We therefore tuned $\gamma$ for both C-learning and the most competitive baseline, TD3 with next state relabeling~\citep{lin2019reinforcement}. The tuned values for $\gamma$ are shown in Table~\ref{tab:gamma}. We used $\gamma = 0.99$ for the other baselines.

\begin{table}[t]
    \centering
    \begin{tabular}{c|c|c}
        & C-learning & TD3 + Next State Relabeling \\ \hline
        Jaco Reach & 0.9 & 0.99 \\
        Finger & 0.99 & 0.99\\
        Hand & 0.3 & 0.8\\
        Pen & 0.7 & 0.99 \\
        Sawyer Reach & 0.99 & 0.99 \\
        Sawyer Push & 0.99 & 0.99 \\
        Sawyer Window & 0.99 & 0.99 \\
        Sawyer Drawer & 0.99 & 0.99 \\
    \end{tabular}
    \caption{Values of the discount $\gamma$ for C-learning and TD3 with Next State Relabeling}
    \label{tab:gamma}
\end{table}

In our implementation of C-learning, we found that values for the importance weight $w$ became quite larger, likely because the classifier was making overconfident predictions. We therefore clipped the values of $w$ to be in $[0, 2]$ for most tasks, though later ablation experiments found that the precise value of the upper bound had little effect. We used a range of $[0, 50]$ for the finger and Sawyer tasks. Further analysis of the importance weight revealed that it effectively corresponds to the planning horizon; clipping the importance weight corresponds to ignoring the possibility of reaching goals beyond some horizon. We therefore suggest that $1 / (1 - \gamma)$ is a reasonable heuristic for choosing the maximum value of $w$.

\section{Analytic Examples of Q-learning Failures}
\label{appendix:analytic-q}
In this section we describe a few simple MDPs where Q-learning with hindsight relabeling fails to learn the true future state distribution. While we describe both examples as discrete state MDPs, we assume that observations are continuous and noisy, as described in Appendix~\ref{appendix:experiments}

\subsection{Example 1}
The first example is a Markov process with $n$ states. Each state deterministically transitions to itself. We will examine the one state, $s_1$ (all other states are symmetric). If the agent starts at $s_1$, it  will remain at $s_1$ at every future time step, so $p(\fs \mid \cs=s_1) = \mathbbm{1}(\fs = s_1)$.

We use $\lambda$, defined above, to denote the relabeling fraction. The only transition including state $s_1$ is $(s_t = s_1, s_{t+1} = s_1)$. We aim to determine $Q = Q(\cs = s_1, \fs = s_1, \fs = s_1)$. There are two possible ways we can observe the transition $(s_t = s_1, \fs = s_1, \fs = s_1)$. First, with probability $1 - \lambda$ we sample the next state as $\fs$, and the TD target is $1$. Second, with probability $\lambda$ we sample a random state with TD target $\gamma Q$, and with probability $1 / n$ this random state is $s_1$. Thus, conditioned on $\fs = s_1$, the Bellman equation is
\begin{equation*}
    Q = \begin{cases} 1 & \text{w.p. } 1 - \lambda \\ \gamma Q & \text{w.p. } \frac{\lambda}{n} \end{cases}.
\end{equation*}
Solving for $Q$, we get $Q^* = \frac{1 - \gamma}{1 - \frac{\lambda \gamma}{n}}$. This Q function is clearly different from the true future state distribution, $p(\fs \mid \cs=s_1) = \mathbbm{1}(\fs = s_1)$. First, since $\frac{\lambda}{n} < 1$, the Q function is greater than one ($Q^* > 1$), but a density function over a discrete set of states cannot be greater than one. Second, even if we normalized this Q function to be less than one, we observe that the Q function depends on the discount ($\gamma$), the relabeling ratio ($\lambda$), and the number of states ($n$). However, the true future state distribution has no such dependence. Thus, we conclude that even scaling the optimal Q function by a constant (even one that depends on $\gamma$!) would not yield the true future state distribution.

\subsection{Example 2}
Our second example is a stochastic Markov process with two states, $s_1$ and $s_2$. The transition probabilities are
\begin{equation*}
p(\ns \mid \cs = s_1) = \begin{cases}\frac{1}{2} & \text{if }\ns = s_1 \\ \frac{1}{2} & \text{if }\ns = s_2 \end{cases}, \qquad p(\ns \mid \cs = s_2) = \mathbbm{1}(\ns = s_2)
\end{equation*}
We assume that we have observed each transition once.
To simplify notation, we will use $Q_{ij} = Q(\cs =s_i, \fs = s_j)$.
There are three ways to end up with $Q_{11}$ on the LHS of a Bellman equation: (1) the next state is $s_1$ and we sample the next state as the goal, (2) the next state is $s_1$ and we sample a random state as the goal, and (3) the next state is $s_2$ and we sample a random state as the goal. 
\begin{equation*}
    Q_{11} = \begin{cases} 1 & \text{w.p. } 1 - \lambda \\ \gamma Q_{11} & \text{w.p. } \frac{\lambda}{2} \\ \gamma Q_{21} & \text{w.p. } \frac{\lambda}{2} \end{cases}.
\end{equation*}
We can likewise observe $Q_{12}$ on the LHS in the same three ways: (1) the next state is $s_2$ and we sample the next state as the goal, (2) the next state is $s_1$ and we sample a random state as the goal, and (3) the next state is $s_2$ and we sample a random state as the goal. 
\begin{equation}
    Q_{12} = \begin{cases} 1 & \text{w.p. } 1 - \lambda \\ \gamma Q_{12} & \text{w.p. } \frac{\lambda}{2} \\ \gamma Q_{22} & \text{w.p. } \frac{\lambda}{2} \end{cases}. \label{eq:q12}
\end{equation}
We can only observe $Q_{21}$ on the LHS in one way: the next state is $s_2$ and we sample we sample a random state as the goal: 
\begin{equation}
    Q_{21} = \begin{cases} \gamma Q_{21} & \text{w.p. } 1 \end{cases}. \label{eq:q21}
\end{equation}
We can observe $Q_{22}$ on the LHS in two ways: (1) the next state is $s_2$ and we sample the next state as the goal, (2) the next state is $s_2$ and we sample a random state as the goal: 
\begin{equation}
    Q_{22} = \begin{cases} 1 & \text{w.p. } \frac{1 - \lambda}{1 - \lambda + \frac{\lambda}{2}} \\ \gamma Q_{22} & \text{w.p. } \frac{\frac{\lambda}{2}}{1 - \lambda + \frac{\lambda}{2}}\end{cases}. \label{eq:q22}
\end{equation}
To solve these equations, we immediately note that the only solution to Eq.~\ref{eq:q21} is $Q_{21} = 0$. Intuitively this makes sense, as there is zero probability of transition $s_2 \rightarrow s_1$. We can also solve Eq.~\ref{eq:q22} directly:
\begin{equation*}
    (1 - \frac{\gamma \lambda}{2 - \lambda})Q_{22} = \frac{2 - 2\lambda}{2 - \lambda} \implies \frac{2 - \lambda + \gamma \lambda}{\cancel{2 - \lambda}}Q_{22} = \frac{2 - 2\lambda}{\cancel{2 - \lambda}} \implies Q_{22} = \frac{2 - 2 \lambda}{2 - \lambda + \gamma \lambda}.
\end{equation*}
Next, we solve Eq.~\ref{eq:q12}. We start by rearranging terms and substituting our solution to $Q_{22}$:
\begin{equation*}
    (1 - \frac{\gamma \lambda}{2})Q_{12} = 1 - \lambda + \frac{\gamma \lambda}{2} Q_{22} = 1 - \lambda + \frac{\gamma \lambda}{2} \frac{2 - 2 \lambda}{2 - \lambda + \gamma \lambda} = (1 - \lambda) \frac{2 - \lambda + \gamma \lambda + \gamma \lambda}{2 - \lambda + \gamma \lambda}
\end{equation*}
Rearranging terms, we obtain the following:
\begin{equation*}
    Q_{12} = \frac{2(1 - \lambda)(2 - \lambda + 2 \gamma \lambda)}{(2 - \gamma \lambda)(2 - \lambda + \gamma \lambda)}
\end{equation*}
Finally, we solve for $Q_{11}$, recalling that our solution $Q_{21} = 0$:
\begin{equation*}
    (1 - \frac{\gamma \lambda}{2}) Q_{11} = 1 - \lambda + \frac{\gamma \lambda}{2} \cancelto{0}{Q_{21}} \implies Q_{11} = \frac{2 - 2 \lambda}{2 - \gamma \lambda}
\end{equation*}
To summarize, Q-learning with hindsight relabeling obtains the following Q values:
\begin{equation*}
    Q_{11} = \frac{2(1 - \lambda)}{2 - \gamma \lambda}, Q_{12} = \frac{2(1 - \lambda)(2 - \lambda + 2 \gamma \lambda)}{(2 - \gamma \lambda)(2 - \lambda + \gamma \lambda)}, Q_{21} = 0, Q_{22} = \frac{2 - 2 \lambda}{2 - \lambda + \gamma \lambda}.
\end{equation*}
For comparison, we compute $p(\fs = s_1 \mid s_1)$, the probability that the agent remains in state $s_1$ in the future. The probability that the agent is in $s_1$ at time step $\Delta$ is $1/2^\Delta$, so the total, discounted probability is
\begin{align*}
    p(\fs = s_1 \mid s_1) &= (1 - \gamma)(1 + \frac{1}{2} \gamma + \frac{1}{4} \gamma^2 + \cdots) \\
    & = (1 - \gamma) \sum_{\Delta = 0}^\infty \left(\frac{\gamma}{2} \right)^\Delta \\
    &= \frac{1 - \gamma}{1 - \gamma / 2} = \frac{2 - 2\gamma}{2 - \gamma}.
\end{align*}
Thus, in general, the Q value $Q_{11}$ does not equal the future state distribution. One might imagine that Q-learning acquires Q values that are only accurate up to scale, so we should consider the normalized prediction:
\begin{equation*}
    \frac{Q_{11}}{Q_{11} + Q_{12}} = \frac{1}{1 + \frac{2 - \lambda + 2 \gamma \lambda}{2 - \lambda + \gamma \lambda}} = \frac{2 - \lambda + \gamma \lambda}{4 - 2 \lambda + 3 \gamma \lambda}
\end{equation*}
However, this normalized Q-learning prediction is also different from the true future state distribution.

\clearpage
\section{Predictions from C-Learning}
\label{appendix:predictions}
Figures~\ref{fig:predictions-appendix} and~\ref{fig:more-predictions-appendix} visualizes additional predictions from the C-learning model in Sec.~\ref{sec:experiments}. In each image, the top half shows the current state and the bottom half shows the predicted expected future state. Animations of these results can be found on the project website.

\begin{figure}[H]
    \centering
    \begin{subfigure}{\textwidth}
    \includegraphics[width=\linewidth]{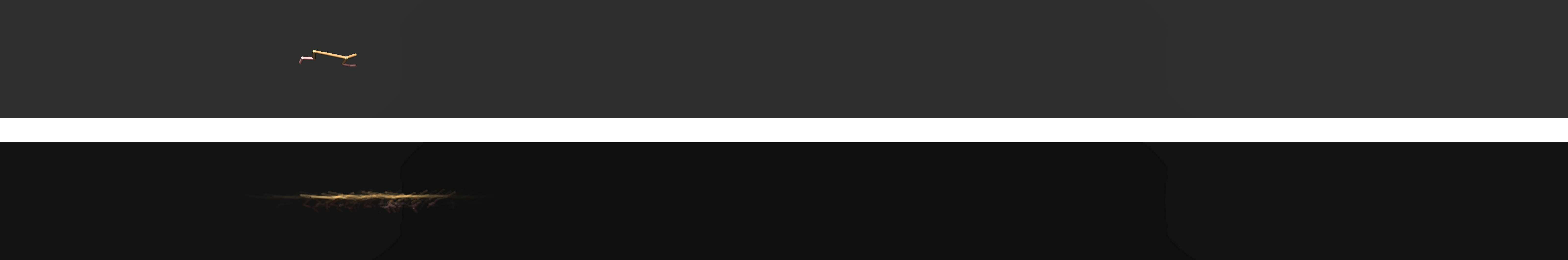}
    \caption{HalfCheetah-v2, $\gamma = 0.9$}
    \end{subfigure}
    \begin{subfigure}{\textwidth}
    \includegraphics[width=\linewidth]{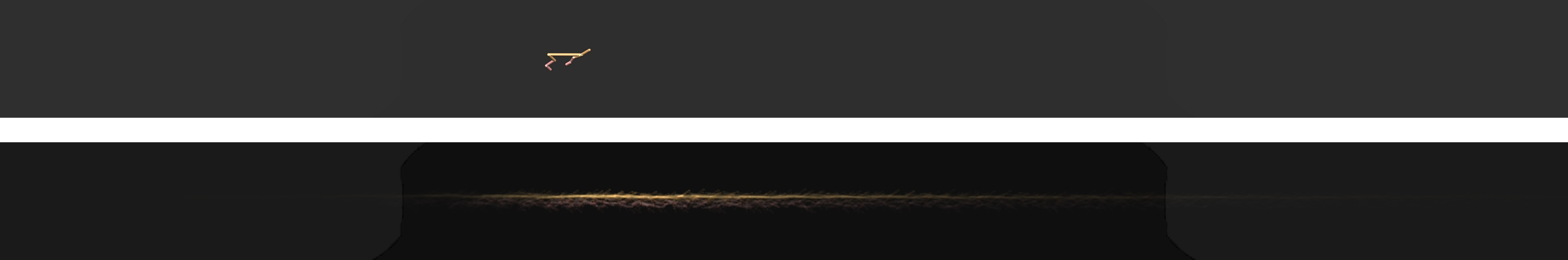}
    \caption{HalfCheetah-v2, $\gamma = 0.9$}
    \end{subfigure}
    \begin{subfigure}{\textwidth}
    \includegraphics[width=\linewidth]{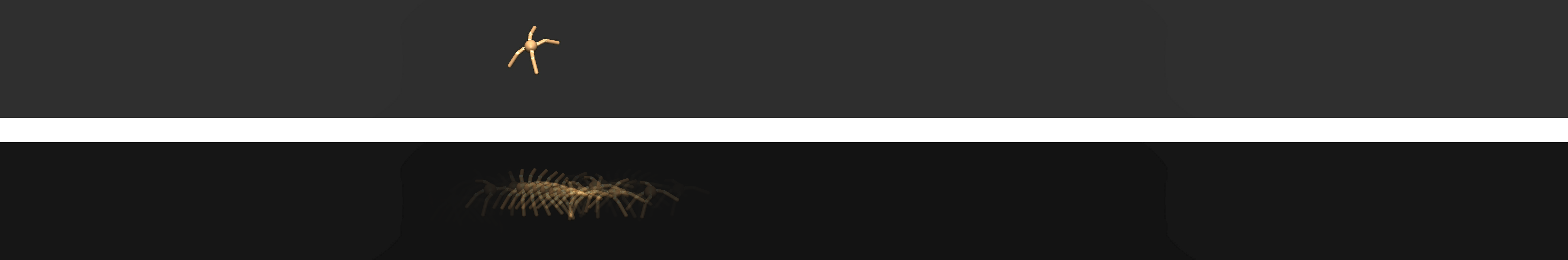}
    \caption{Ant-v2, $\gamma = 0.5$}
    \end{subfigure}
    \begin{subfigure}{\textwidth}
    \includegraphics[width=\linewidth]{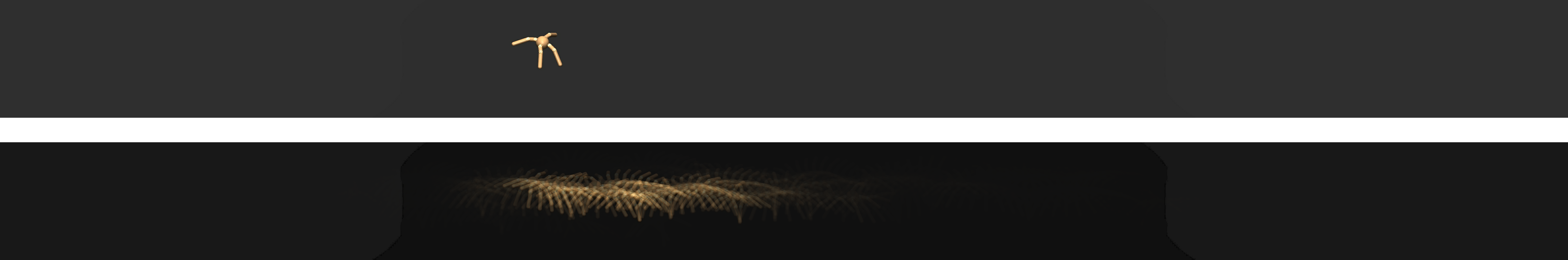}
    \caption{Ant-v2, $\gamma = 0.9$}
    \end{subfigure}
    \begin{subfigure}{\textwidth}
    \includegraphics[width=\linewidth]{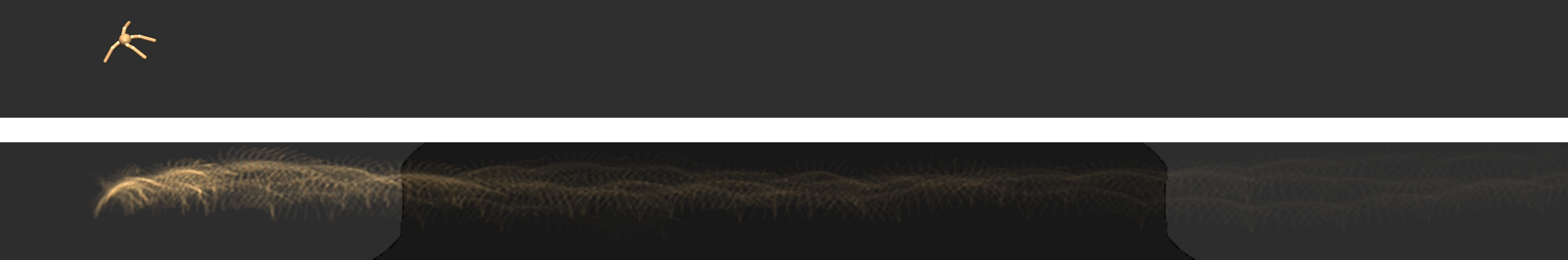}
    \caption{Ant-v2, $\gamma = 0.99$}
    \end{subfigure}
    \caption{Predictions from C-learning \label{fig:predictions-appendix}}
\end{figure}

\begin{figure}[H]
    \begin{subfigure}{\textwidth}
    \includegraphics[width=\linewidth]{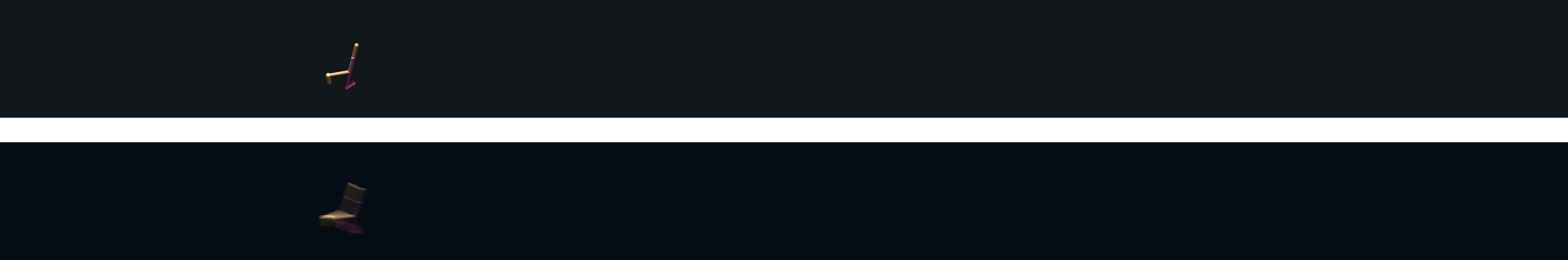}
    \caption{Walker2d-v2, $\gamma = 0.5$}
    \end{subfigure}
    \begin{subfigure}{\textwidth}
    \includegraphics[width=\linewidth]{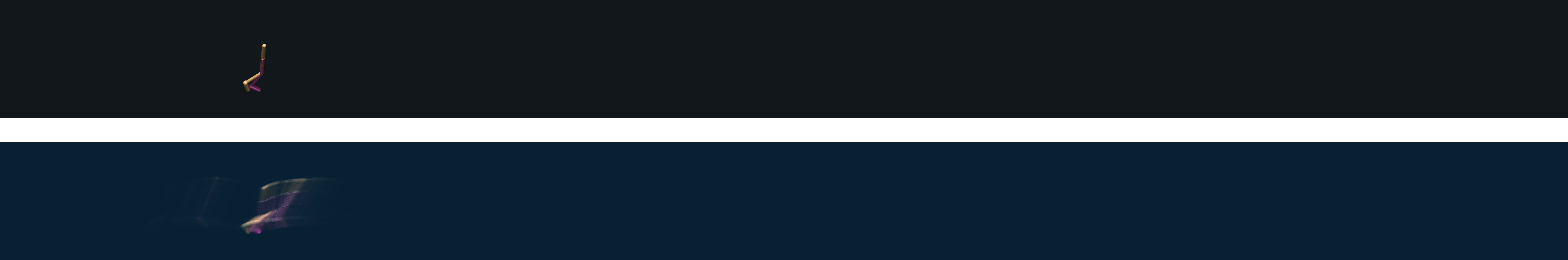}
    \caption{Walker2d-v2, $\gamma = 0.9$}
    \end{subfigure}
    \begin{subfigure}{\textwidth}
    \includegraphics[width=\linewidth]{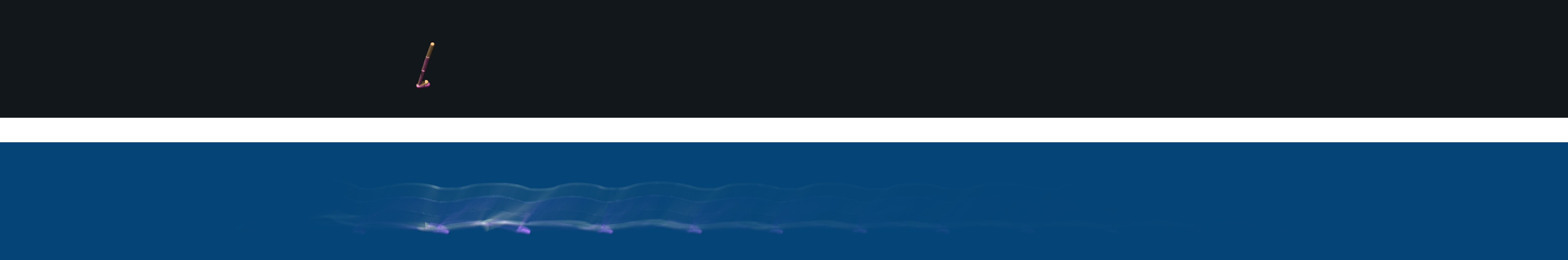}
    \caption{Walker2d-v2, $\gamma = 0.99$}
    \end{subfigure}
    \begin{subfigure}{\textwidth}
    \includegraphics[width=\linewidth]{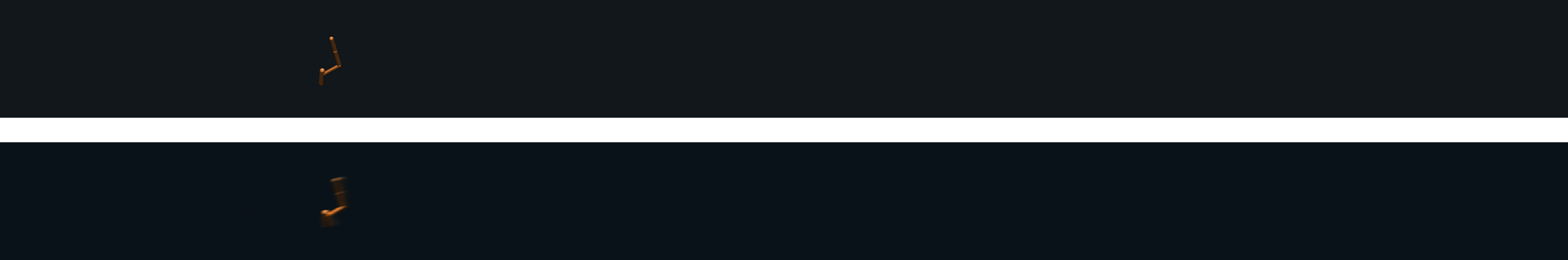}
    \caption{Hopper-v2, $\gamma = 0.5$}
    \end{subfigure}
    \begin{subfigure}{\textwidth}
    \includegraphics[width=\linewidth]{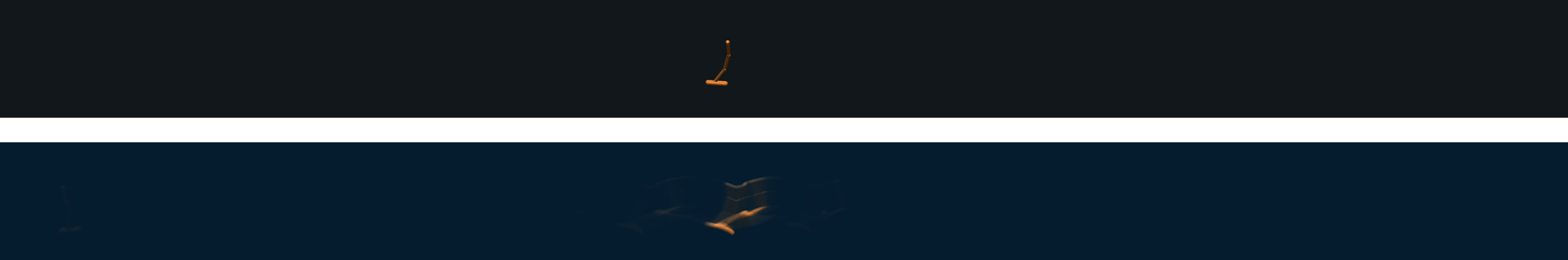}
    \caption{Hopper-v2, $\gamma = 0.9$}
    \end{subfigure}
    \caption{More Predictions from C-learning \label{fig:more-predictions-appendix}}
\end{figure}


\begin{thebibliography}{58}
\providecommand{\natexlab}[1]{#1}
\providecommand{\url}[1]{\texttt{#1}}
\expandafter\ifx\csname urlstyle\endcsname\relax
  \providecommand{\doi}[1]{doi: #1}\else
  \providecommand{\doi}{doi: \begingroup \urlstyle{rm}\Url}\fi

\bibitem[Andrychowicz et~al.(2017)Andrychowicz, Wolski, Ray, Schneider, Fong,
  Welinder, McGrew, Tobin, Abbeel, and Zaremba]{andrychowicz2017hindsight}
Marcin Andrychowicz, Filip Wolski, Alex Ray, Jonas Schneider, Rachel Fong,
  Peter Welinder, Bob McGrew, Josh Tobin, OpenAI~Pieter Abbeel, and Wojciech
  Zaremba.
\newblock Hindsight experience replay.
\newblock In \emph{Advances in neural information processing systems}, pp.\
  5048--5058, 2017.

\bibitem[Barreto et~al.(2017)Barreto, Dabney, Munos, Hunt, Schaul, van Hasselt,
  and Silver]{barreto2017successor}
Andr{\'e} Barreto, Will Dabney, R{\'e}mi Munos, Jonathan~J Hunt, Tom Schaul,
  Hado~P van Hasselt, and David Silver.
\newblock Successor features for transfer in reinforcement learning.
\newblock In \emph{Advances in neural information processing systems}, pp.\
  4055--4065, 2017.

\bibitem[Barreto et~al.(2019)Barreto, Borsa, Hou, Comanici, Ayg{\"u}n, Hamel,
  Toyama, Mourad, Silver, Precup, et~al.]{barreto2019option}
Andr{\'e} Barreto, Diana Borsa, Shaobo Hou, Gheorghe Comanici, Eser Ayg{\"u}n,
  Philippe Hamel, Daniel Toyama, Shibl Mourad, David Silver, Doina Precup,
  et~al.
\newblock The option keyboard: Combining skills in reinforcement learning.
\newblock In \emph{Advances in Neural Information Processing Systems}, pp.\
  13052--13062, 2019.

\bibitem[Bickel et~al.(2007)Bickel, Br{\"u}ckner, and
  Scheffer]{bickel2007discriminative}
Steffen Bickel, Michael Br{\"u}ckner, and Tobias Scheffer.
\newblock Discriminative learning for differing training and test
  distributions.
\newblock In \emph{Proceedings of the 24th international conference on Machine
  learning}, pp.\  81--88, 2007.

\bibitem[Chaplot et~al.(2020)Chaplot, Salakhutdinov, Gupta, and
  Gupta]{chaplot2020neural}
Devendra~Singh Chaplot, Ruslan Salakhutdinov, Abhinav Gupta, and Saurabh Gupta.
\newblock Neural topological slam for visual navigation.
\newblock In \emph{Proceedings of the IEEE/CVF Conference on Computer Vision
  and Pattern Recognition}, pp.\  12875--12884, 2020.

\bibitem[Dayan(1993)]{dayan1993improving}
Peter Dayan.
\newblock Improving generalization for temporal difference learning: The
  successor representation.
\newblock \emph{Neural Computation}, 5\penalty0 (4):\penalty0 613--624, 1993.

\bibitem[Ding et~al.(2019)Ding, Florensa, Abbeel, and Phielipp]{ding2019goal}
Yiming Ding, Carlos Florensa, Pieter Abbeel, and Mariano Phielipp.
\newblock Goal-conditioned imitation learning.
\newblock In \emph{Advances in Neural Information Processing Systems}, pp.\
  15324--15335, 2019.

\bibitem[Dumoulin et~al.(2016)Dumoulin, Belghazi, Poole, Mastropietro, Lamb,
  Arjovsky, and Courville]{dumoulin2016adversarially}
Vincent Dumoulin, Ishmael Belghazi, Ben Poole, Olivier Mastropietro, Alex Lamb,
  Martin Arjovsky, and Aaron Courville.
\newblock Adversarially learned inference.
\newblock \emph{arXiv preprint arXiv:1606.00704}, 2016.

\bibitem[Ebert et~al.(2018)Ebert, Finn, Dasari, Xie, Lee, and
  Levine]{ebert2018visual}
Frederik Ebert, Chelsea Finn, Sudeep Dasari, Annie Xie, Alex Lee, and Sergey
  Levine.
\newblock Visual foresight: Model-based deep reinforcement learning for
  vision-based robotic control.
\newblock \emph{arXiv preprint arXiv:1812.00568}, 2018.

\bibitem[Eysenbach et~al.(2019)Eysenbach, Salakhutdinov, and
  Levine]{eysenbach2019search}
Benjamin Eysenbach, Russ~R Salakhutdinov, and Sergey Levine.
\newblock Search on the replay buffer: Bridging planning and reinforcement
  learning.
\newblock In \emph{Advances in Neural Information Processing Systems}, pp.\
  15246--15257, 2019.

\bibitem[Eysenbach et~al.(2020)Eysenbach, Geng, Levine, and
  Salakhutdinov]{eysenbach2020rewriting}
Benjamin Eysenbach, Xinyang Geng, Sergey Levine, and Ruslan Salakhutdinov.
\newblock Rewriting history with inverse rl: Hindsight inference for policy
  improvement.
\newblock \emph{arXiv preprint arXiv:2002.11089}, 2020.

\bibitem[Fang et~al.(2019)Fang, Zhou, Du, Han, and Zhang]{fang2019curriculum}
Meng Fang, Tianyi Zhou, Yali Du, Lei Han, and Zhengyou Zhang.
\newblock Curriculum-guided hindsight experience replay.
\newblock In \emph{Advances in Neural Information Processing Systems}, pp.\
  12623--12634, 2019.

\bibitem[Fu et~al.(2019)Fu, Kumar, Soh, and Levine]{fu2019diagnosing}
Justin Fu, Aviral Kumar, Matthew Soh, and Sergey Levine.
\newblock Diagnosing bottlenecks in deep q-learning algorithms.
\newblock \emph{arXiv preprint arXiv:1902.10250}, 2019.

\bibitem[Fu et~al.(2020)Fu, Kumar, Nachum, Tucker, and Levine]{fu2020d4rl}
Justin Fu, Aviral Kumar, Ofir Nachum, George Tucker, and Sergey Levine.
\newblock D4rl: Datasets for deep data-driven reinforcement learning.
\newblock \emph{arXiv preprint arXiv:2004.07219}, 2020.

\bibitem[Fujimoto et~al.(2018)Fujimoto, Van~Hoof, and
  Meger]{fujimoto2018addressing}
Scott Fujimoto, Herke Van~Hoof, and David Meger.
\newblock Addressing function approximation error in actor-critic methods.
\newblock \emph{arXiv preprint arXiv:1802.09477}, 2018.

\bibitem[Ghosh et~al.(2019)Ghosh, Gupta, Fu, Reddy, Devin, Eysenbach, and
  Levine]{ghosh2019learning}
Dibya Ghosh, Abhishek Gupta, Justin Fu, Ashwin Reddy, Coline Devin, Benjamin
  Eysenbach, and Sergey Levine.
\newblock Learning to reach goals without reinforcement learning.
\newblock \emph{arXiv preprint arXiv:1912.06088}, 2019.

\bibitem[Gregor et~al.(2018)Gregor, Papamakarios, Besse, Buesing, and
  Weber]{gregor2018temporal}
Karol Gregor, George Papamakarios, Frederic Besse, Lars Buesing, and Theophane
  Weber.
\newblock Temporal difference variational auto-encoder.
\newblock \emph{arXiv preprint arXiv:1806.03107}, 2018.

\bibitem[Guadarrama et~al.(2018)Guadarrama, Korattikara, Ramirez, Castro,
  Holly, Fishman, Wang, Gonina, Wu, Harris, et~al.]{guadarrama2018tf}
Sergio Guadarrama, Anoop Korattikara, Oscar Ramirez, Pablo Castro, Ethan Holly,
  Sam Fishman, Ke~Wang, Ekaterina Gonina, Neal Wu, Chris Harris, et~al.
\newblock Tf-agents: A library for reinforcement learning in tensorflow, 2018.

\bibitem[Gupta et~al.(2019)Gupta, Kumar, Lynch, Levine, and
  Hausman]{gupta2019relay}
Abhishek Gupta, Vikash Kumar, Corey Lynch, Sergey Levine, and Karol Hausman.
\newblock Relay policy learning: Solving long-horizon tasks via imitation and
  reinforcement learning.
\newblock \emph{arXiv preprint arXiv:1910.11956}, 2019.

\bibitem[Gutmann \& Hyv{\"a}rinen(2010)Gutmann and
  Hyv{\"a}rinen]{gutmann2010noise}
Michael Gutmann and Aapo Hyv{\"a}rinen.
\newblock Noise-contrastive estimation: A new estimation principle for
  unnormalized statistical models.
\newblock In \emph{Proceedings of the Thirteenth International Conference on
  Artificial Intelligence and Statistics}, pp.\  297--304, 2010.

\bibitem[Haarnoja et~al.(2018)Haarnoja, Zhou, Abbeel, and
  Levine]{haarnoja2018soft}
Tuomas Haarnoja, Aurick Zhou, Pieter Abbeel, and Sergey Levine.
\newblock Soft actor-critic: Off-policy maximum entropy deep reinforcement
  learning with a stochastic actor.
\newblock \emph{arXiv preprint arXiv:1801.01290}, 2018.

\bibitem[Husz{\'a}r(2017)]{huszar2017variational}
Ferenc Husz{\'a}r.
\newblock Variational inference using implicit distributions.
\newblock \emph{arXiv preprint arXiv:1702.08235}, 2017.

\bibitem[Jaakkola et~al.(1994)Jaakkola, Jordan, and
  Singh]{jaakkola1994convergence}
Tommi Jaakkola, Michael~I Jordan, and Satinder~P Singh.
\newblock On the convergence of stochastic iterative dynamic programming
  algorithms.
\newblock \emph{Neural computation}, 6\penalty0 (6):\penalty0 1185--1201, 1994.

\bibitem[Kaelbling(1993)]{kaelbling1993learning}
Leslie~Pack Kaelbling.
\newblock Learning to achieve goals.
\newblock In \emph{IJCAI}, pp.\  1094--1099. Citeseer, 1993.

\bibitem[Kappen(2005)]{kappen2005path}
Hilbert~J Kappen.
\newblock Path integrals and symmetry breaking for optimal control theory.
\newblock \emph{Journal of statistical mechanics: theory and experiment},
  2005\penalty0 (11):\penalty0 P11011, 2005.

\bibitem[Levine(2018)]{levine2018reinforcement}
Sergey Levine.
\newblock Reinforcement learning and control as probabilistic inference:
  Tutorial and review.
\newblock \emph{arXiv preprint arXiv:1805.00909}, 2018.

\bibitem[Lin et~al.(2019)Lin, Baweja, and Held]{lin2019reinforcement}
Xingyu Lin, Harjatin~Singh Baweja, and David Held.
\newblock Reinforcement learning without ground-truth state.
\newblock \emph{arXiv preprint arXiv:1905.07866}, 2019.

\bibitem[Liu et~al.(2018)Liu, Li, Tang, and Zhou]{liu2018breaking}
Qiang Liu, Lihong Li, Ziyang Tang, and Dengyong Zhou.
\newblock Breaking the curse of horizon: Infinite-horizon off-policy
  estimation.
\newblock In \emph{Advances in Neural Information Processing Systems}, pp.\
  5356--5366, 2018.

\bibitem[Lynch et~al.(2020)Lynch, Khansari, Xiao, Kumar, Tompson, Levine, and
  Sermanet]{lynch2020learning}
Corey Lynch, Mohi Khansari, Ted Xiao, Vikash Kumar, Jonathan Tompson, Sergey
  Levine, and Pierre Sermanet.
\newblock Learning latent plans from play.
\newblock In \emph{Conference on Robot Learning}, pp.\  1113--1132, 2020.

\bibitem[Nachum et~al.(2018)Nachum, Gu, Lee, and Levine]{nachum2018near}
Ofir Nachum, Shixiang Gu, Honglak Lee, and Sergey Levine.
\newblock Near-optimal representation learning for hierarchical reinforcement
  learning.
\newblock \emph{arXiv preprint arXiv:1810.01257}, 2018.

\bibitem[Nachum et~al.(2019)Nachum, Chow, Dai, and Li]{nachum2019dualdice}
Ofir Nachum, Yinlam Chow, Bo~Dai, and Lihong Li.
\newblock Dualdice: Behavior-agnostic estimation of discounted stationary
  distribution corrections.
\newblock In \emph{Advances in Neural Information Processing Systems}, pp.\
  2318--2328, 2019.

\bibitem[Nair et~al.(2020)Nair, Savarese, and Finn]{nair2020goal}
Suraj Nair, Silvio Savarese, and Chelsea Finn.
\newblock Goal-aware prediction: Learning to model what matters.
\newblock \emph{arXiv preprint arXiv:2007.07170}, 2020.

\bibitem[Nasiriany et~al.(2019)Nasiriany, Pong, Lin, and
  Levine]{nasiriany2019planning}
Soroush Nasiriany, Vitchyr Pong, Steven Lin, and Sergey Levine.
\newblock Planning with goal-conditioned policies.
\newblock In \emph{Advances in Neural Information Processing Systems}, pp.\
  14843--14854, 2019.

\bibitem[Oh et~al.(2018)Oh, Guo, Singh, and Lee]{oh2018self}
Junhyuk Oh, Yijie Guo, Satinder Singh, and Honglak Lee.
\newblock Self-imitation learning.
\newblock \emph{arXiv preprint arXiv:1806.05635}, 2018.

\bibitem[Oord et~al.(2018)Oord, Li, and Vinyals]{oord2018representation}
Aaron van~den Oord, Yazhe Li, and Oriol Vinyals.
\newblock Representation learning with contrastive predictive coding.
\newblock \emph{arXiv preprint arXiv:1807.03748}, 2018.

\bibitem[Ortega \& Braun(2013)Ortega and Braun]{ortega2013thermodynamics}
Pedro~A Ortega and Daniel~A Braun.
\newblock Thermodynamics as a theory of decision-making with
  information-processing costs.
\newblock \emph{Proceedings of the Royal Society A: Mathematical, Physical and
  Engineering Sciences}, 469\penalty0 (2153):\penalty0 20120683, 2013.

\bibitem[Pitis et~al.(2020)Pitis, Chan, Zhao, Stadie, and Ba]{pitis2020maximum}
Silviu Pitis, Harris Chan, Stephen Zhao, Bradly Stadie, and Jimmy Ba.
\newblock Maximum entropy gain exploration for long horizon multi-goal
  reinforcement learning.
\newblock \emph{arXiv preprint arXiv:2007.02832}, 2020.

\bibitem[Pong et~al.(2018)Pong, Gu, Dalal, and Levine]{pong2018temporal}
Vitchyr Pong, Shixiang Gu, Murtaza Dalal, and Sergey Levine.
\newblock Temporal difference models: Model-free deep rl for model-based
  control.
\newblock \emph{arXiv preprint arXiv:1802.09081}, 2018.

\bibitem[Pong et~al.(2019)Pong, Dalal, Lin, Nair, Bahl, and
  Levine]{pong2019skew}
Vitchyr~H Pong, Murtaza Dalal, Steven Lin, Ashvin Nair, Shikhar Bahl, and
  Sergey Levine.
\newblock Skew-fit: State-covering self-supervised reinforcement learning.
\newblock \emph{arXiv preprint arXiv:1903.03698}, 2019.

\bibitem[Popov et~al.(2017)Popov, Heess, Lillicrap, Hafner, Barth-Maron,
  Vecerik, Lampe, Tassa, Erez, and Riedmiller]{popov2017data}
Ivaylo Popov, Nicolas Heess, Timothy Lillicrap, Roland Hafner, Gabriel
  Barth-Maron, Matej Vecerik, Thomas Lampe, Yuval Tassa, Tom Erez, and Martin
  Riedmiller.
\newblock Data-efficient deep reinforcement learning for dexterous
  manipulation.
\newblock \emph{arXiv preprint arXiv:1704.03073}, 2017.

\bibitem[Rawlik et~al.(2013)Rawlik, Toussaint, and
  Vijayakumar]{rawlik2013stochastic}
Konrad Rawlik, Marc Toussaint, and Sethu Vijayakumar.
\newblock On stochastic optimal control and reinforcement learning by
  approximate inference.
\newblock In \emph{Twenty-third international joint conference on artificial
  intelligence}, 2013.

\bibitem[Savinov et~al.(2018)Savinov, Dosovitskiy, and Koltun]{savinov2018semi}
Nikolay Savinov, Alexey Dosovitskiy, and Vladlen Koltun.
\newblock Semi-parametric topological memory for navigation.
\newblock \emph{arXiv preprint arXiv:1803.00653}, 2018.

\bibitem[Schaul et~al.(2015)Schaul, Horgan, Gregor, and
  Silver]{schaul2015universal}
Tom Schaul, Daniel Horgan, Karol Gregor, and David Silver.
\newblock Universal value function approximators.
\newblock In \emph{International conference on machine learning}, pp.\
  1312--1320, 2015.

\bibitem[Schroecker \& Isbell(2020)Schroecker and
  Isbell]{schroecker2020universal}
Yannick Schroecker and Charles Isbell.
\newblock Universal value density estimation for imitation learning and
  goal-conditioned reinforcement learning.
\newblock \emph{arXiv preprint arXiv:2002.06473}, 2020.

\bibitem[Shelhamer et~al.(2016)Shelhamer, Mahmoudieh, Argus, and
  Darrell]{shelhamer2016loss}
Evan Shelhamer, Parsa Mahmoudieh, Max Argus, and Trevor Darrell.
\newblock Loss is its own reward: Self-supervision for reinforcement learning.
\newblock \emph{arXiv preprint arXiv:1612.07307}, 2016.

\bibitem[S{\o}nderby et~al.(2016)S{\o}nderby, Caballero, Theis, Shi, and
  Husz{\'a}r]{sonderby2016amortised}
Casper~Kaae S{\o}nderby, Jose Caballero, Lucas Theis, Wenzhe Shi, and Ferenc
  Husz{\'a}r.
\newblock Amortised map inference for image super-resolution.
\newblock \emph{arXiv preprint arXiv:1610.04490}, 2016.

\bibitem[Sun et~al.(2019)Sun, Li, Liu, Zhou, and Lin]{sun2019policy}
Hao Sun, Zhizhong Li, Xiaotong Liu, Bolei Zhou, and Dahua Lin.
\newblock Policy continuation with hindsight inverse dynamics.
\newblock In \emph{Advances in Neural Information Processing Systems}, pp.\
  10265--10275, 2019.

\bibitem[Sutton(1988)]{sutton1988learning}
Richard~S Sutton.
\newblock Learning to predict by the methods of temporal differences.
\newblock \emph{Machine learning}, 3\penalty0 (1):\penalty0 9--44, 1988.

\bibitem[Sutton \& Tanner(2005)Sutton and Tanner]{sutton2005temporal}
Richard~S Sutton and Brian Tanner.
\newblock Temporal-difference networks.
\newblock In \emph{Advances in neural information processing systems}, pp.\
  1377--1384, 2005.

\bibitem[Szepesvari et~al.(2014)Szepesvari, Sutton, Modayil, Bhatnagar,
  et~al.]{szepesvari2014universal}
Csaba Szepesvari, Richard~S Sutton, Joseph Modayil, Shalabh Bhatnagar, et~al.
\newblock Universal option models.
\newblock In \emph{Advances in Neural Information Processing Systems}, pp.\
  990--998, 2014.

\bibitem[Tassa et~al.(2018)Tassa, Doron, Muldal, Erez, Li, Casas, Budden,
  Abdolmaleki, Merel, Lefrancq, et~al.]{tassa2018deepmind}
Yuval Tassa, Yotam Doron, Alistair Muldal, Tom Erez, Yazhe Li, Diego de~Las
  Casas, David Budden, Abbas Abdolmaleki, Josh Merel, Andrew Lefrancq, et~al.
\newblock Deepmind control suite.
\newblock \emph{arXiv preprint arXiv:1801.00690}, 2018.

\bibitem[Tassa et~al.(2020)Tassa, Tunyasuvunakool, Muldal, Doron, Liu, Bohez,
  Merel, Erez, Lillicrap, and Heess]{tassa2020dm_control}
Yuval Tassa, Saran Tunyasuvunakool, Alistair Muldal, Yotam Doron, Siqi Liu,
  Steven Bohez, Josh Merel, Tom Erez, Timothy Lillicrap, and Nicolas Heess.
\newblock dm\_control: Software and tasks for continuous control.
\newblock \emph{arXiv preprint arXiv:2006.12983}, 2020.

\bibitem[Theodorou et~al.(2010)Theodorou, Buchli, and
  Schaal]{theodorou2010generalized}
Evangelos Theodorou, Jonas Buchli, and Stefan Schaal.
\newblock A generalized path integral control approach to reinforcement
  learning.
\newblock \emph{The Journal of Machine Learning Research}, 11:\penalty0
  3137--3181, 2010.

\bibitem[Todorov(2008)]{todorov2008general}
Emanuel Todorov.
\newblock General duality between optimal control and estimation.
\newblock In \emph{2008 47th IEEE Conference on Decision and Control}, pp.\
  4286--4292. IEEE, 2008.

\bibitem[Uehara et~al.(2016)Uehara, Sato, Suzuki, Nakayama, and
  Matsuo]{uehara2016generative}
Masatoshi Uehara, Issei Sato, Masahiro Suzuki, Kotaro Nakayama, and Yutaka
  Matsuo.
\newblock Generative adversarial nets from a density ratio estimation
  perspective.
\newblock \emph{arXiv preprint arXiv:1610.02920}, 2016.

\bibitem[Yu et~al.(2020)Yu, Quillen, He, Julian, Hausman, Finn, and
  Levine]{yu2020meta}
Tianhe Yu, Deirdre Quillen, Zhanpeng He, Ryan Julian, Karol Hausman, Chelsea
  Finn, and Sergey Levine.
\newblock Meta-world: A benchmark and evaluation for multi-task and meta
  reinforcement learning.
\newblock In \emph{Conference on Robot Learning}, pp.\  1094--1100, 2020.

\bibitem[Zhao et~al.(2019)Zhao, Sun, and Tresp]{zhao2019maximum}
Rui Zhao, Xudong Sun, and Volker Tresp.
\newblock Maximum entropy-regularized multi-goal reinforcement learning.
\newblock \emph{arXiv preprint arXiv:1905.08786}, 2019.

\bibitem[Ziebart(2010)]{ziebart2010modeling}
Brian~D Ziebart.
\newblock Modeling purposeful adaptive behavior with the principle of maximum
  causal entropy.
\newblock 2010.

\end{thebibliography}
\end{document}